\documentclass{article}

\usepackage{arxiv}

\usepackage[utf8]{inputenc} 
\usepackage[T1]{fontenc}    
\usepackage{hyperref}       
\usepackage{url}            
\usepackage{booktabs}       
\usepackage{amsfonts}       
\usepackage{nicefrac}       
\usepackage{microtype}      
\usepackage{graphicx}
\usepackage[natbib=true,style=alphabetic-verb,maxalphanames=9,maxbibnames=9]{biblatex}
\usepackage{doi}
\usepackage{amsmath}
\usepackage{setspace}
\usepackage{amsthm}
\usepackage{enumitem}
\usepackage{caption}
\usepackage{subcaption}

\newtheorem{theorem}{Theorem}
\newtheorem{corollary}{Corollary}
\renewcommand\labelenumi{(\roman{enumi})}
\renewcommand\theenumi\labelenumi
\DeclareMathOperator{\sign}{sgn}
\DeclareMathOperator{\supp}{supp}
\DeclareMathOperator{\spn}{span}

\bibliography{references}

\title{Wavelet Neural Networks versus Wavelet-based Neural Networks}





\doublespacing

\DeclareLabelalphaTemplate{
  \labelelement{
    \field[final]{shorthand}
    \field{label}
    \field[strwidth=1,strside=left,ifnames=1]{labelname}
    \field[strwidth=1,names=3,strside=left,ifnames={4-}]{labelname}
    \field[strwidth=1,strside=left]{labelname} 
  }
  \labelelement{
    \literal{--}
  }
  \labelelement{
    \field{year}    
  }
}
\DeclareFieldFormat[article]{title}{#1} 
\DeclareFieldFormat[inproceedings]{title}{#1}

\begin{document}

\author{
  Lubomir T. Dechevsky, Kristoffer M. Tangrand \\
  lde009@uit.no, ktangrand@gmail.com \\
  Faculty of Engineering Science and Technology \\
  UiT -- The Arctic University of Norway \\
  P. O. Box 385, N-8505 Narvik, Norway
}
\fontsize{12pt}{12pt}\selectfont

\maketitle

\begin{abstract}
This is the first paper in a sequence of studies including also \citep{llhm2022} and \citep{llhm2022_1} in which we introduce a new type of \emph{neural networks} (NNs) -- \emph{wavelet-based neural networks} (WBNNs) -- and study their properties and potential for applications.
We begin this study with a comparison to the currently existing type of \emph{wavelet neural networks} (WNNs) and show that WBNNs vastly outperform WNNs.
One reason for the vast superiority of WBNNs is their advanced hierarchical tree structure based on \emph{biorthonormal multiresolution analysis} (MRA).
Another reason for this is the implementation of our new idea to incorporate the wavelet tree depth into the neural width of the NN.
The separation of the roles of wavelet depth and neural depth provides a conceptually and algorithmically simple but very highly efficient methodology for sharp increase in functionality of swarm and deep WBNNs and rapid acceleration of the machine learning process.
In Theorem \ref{th:1} (Section \ref{s2}) we obtain a new result for the established WNNs: we propose a type of activation which is shown to lead to optimal performance of WNNs and show that even optimal performance of WNNs is vastly outperformed by WBNNs.
In Section \ref{s3}, in Theorems \ref{th:2} and \ref{th:3} we obtain new results about the learnability via WNNs and WBNNs and in Corollary \ref{cor:1} we show that WBNNs can be used to learn efficiently not only any regular distribution in $L_{1, loc}$ but also singular distributions like the Dirac delta and its derivatives.
In the same section we provide the general characteristics (i--iii) of the rich diversity of activation operators that can be used in machine learning via WBNNs of univariate and multivariate manifolds in two, three and higher-dimensional spaces.
Here we establish the principal differences between non-threshold and threshold activation in learning fractal and piecewise smooth manifolds, respectively.
In Section \ref{s4} we briefly address the importance of interconnection and interaction between \emph{swarm AI} and \emph{deep evolutionary AI} and the relevance to computational implementations using CPU and GPU parallelism.
In Section \ref{s5} we introduce a new activation method based on the concept of \emph{decreasing rearrangement} in functional analysis and function space theory.
Theorem \ref{th:4} is a uniform approximation theorem (UAT) providing qualitative proof of the consistency of the learning process when using the \emph{decreasing rearrangement activation}.
Theorem \ref{th:5} provides an important quantitative upgrade of the UAT in Theorem \ref{th:4} by showing that decreasing rearrangement activation of WBNNs results in machine learning process which is optimal in two key aspects: \emph{fastest learning} and \emph{maximal compression}.
In Section \ref{s6} we consider four representative model examples which are then subjected to comprehensive graphical comparison the results of which have been systematized into a collection of conclusions and comments.
Section \ref{s7} is comprised of the proofs. 
In the concluding Section \ref{s8} we discuss the connection of the present results with the studies in \citep{llhm2022}, \citep{llhm2022_1}, as well as some additional computational and research topics.

\end{abstract}
\textbf{MSC2020}: Primary 68T07; Secondary 42C40, 46E35, 65T60, 68Q32

\textbf{Keywords and Phrases}: artificial intelligence, swarm, deep evolutionary, neural networks, wavelet-based, machine learning, fastest learning, maximal compression, neural activation, threshold, decreasing rearrangement, Sobolev embedding

\section{Introduction}\label{intro}
The purpose of this paper is to propose a new approach to machine learning of geometric manifolds in $\mathbb{R}^n$, where $n=1,2,3,4,...$ using single-layer or deep neural networks (NNs) based on Riesz unconditional bases of biorthonormal wavelets.

The first attempt to marry the theory of NNs with wavelet theory dates back to the early 1990s \citep{Zhang1992-wt}. This initial study gave rise to particularly constructed NNs which were named by the authors of \citep{Zhang1992-wt} as \emph{wavelet NNs} (WNNs). In the course of the next twenty years, the theory and applications of WNNs were studied by numerous authors. The results of these studies have been summarized in \citep{Alexandridis2013-gb} which constitutes a comprehensive account of the current status of the study of WNNs. Before going into the mathematical details of the construction and functioning of WNNs, let us note that this type of networks was introduced relatively early, when wavelet theory was still quite new to the developers of applications in the field of Artificial Intelligence (AI). Due to this, WNNs make use only of a very small subset of the useful properties of wavelet bases. Thus, while the theory of WNNs relies on the basic property of wavelet basis functions that they are dilations and translations of one and the same function, this theory ignores the more advanced properties of wavelet bases related to \emph{Multi-Resolution Analysis} (MRA). As a consequence of this, methods using WNNs are no more than a variant of meshless kernel estimation methods. The typical representative of these meshless methods are the ones using radial basis functions \citep{80341}. The only essential difference with the variant of WNNs is that radial basis functions are replaced by tensor-product functions with sufficient number of vanishing moments. Not surprisingly, the mathematical apparatus used with WNNs is identical with the one for radial basis functions: iterative gradient or subgradient optimization methods. Unfortunately, these iterative methods can guarantee providing the global extrema only when the respective criterial functionals (objective functions) are convex. (Of course, in the particular case when the convex criterial functional is quadratic, possibly with linear constraints, in addition to iterative methods there exists also a broad variety of methods of computational linear algebra.) In practical applications, however, the realistic criterial functionals are most often non-convex, with multiple local extrema and saddlepoint singularities. In this general situation, the optimization methods used with WNNs and radial basis functions produce only local extrema which are close to the global extrema only if a very good initial starting point of the iterative algorithm is proposed. The usual defence of this type of results is to claim that all the local extremal values are close in value to the global extremal value. Here is a typical exposition of this type of argument \citep{LeCun2015-iu}: "..., In particular, it was commonly thought that simple gradient descent would get trapped in poor local minima -- weight configurations for which no small change would reduce the average error. In practice, poor local minima are rarely a problem with large networks. Regardless of the initial conditions, the system nearly always reaches solutions of very similar quality. Recent theoretical and empirical results strongly suggest that local minima are not a serious issue in general. Instead, the landscape is packed with a combinatorially large number of saddlepoints where the gradient is zero, and the surface curves up most dimensions and curves down in the remainder. The analysis seems to show that saddlepoints with only a few downward curving directions are present in very large numbers, but almost all of them have very similar values of the objective function. Hence, it does not much matter which of these saddlepoints the algorithm gets stuck at."

Some critical analysis of the above text in \citep{LeCun2015-iu} is due, as follows. While there is some rationale in the above claims for large and very large sample and network sizes, these claims cannot be accepted even as basic "rules of thumb"; expressions like "rarely", "nearly always", "very similar", "strongly suggest", "not a serious issue in general", "seems to show", "almost all", "does not much matter" are not good replacements for logical quantors. The argument about saddlepoints with only very few negative components in the signature (that is - to use fuzzy terminology in the spirit of \citep{LeCun2015-iu} - 'saddlepoints which are almost local minima'), is also unconvincing as a qualitative statement without any criteria or means for quantitative measurement. Even if a saddlepoint in a problem with a very large size has only one downward-curving dimension, the respective value of the criterial functional (objective function) can be much larger than the global minimum, if the downward curve is sufficiently steep. The one rigorous conclusion that can be drawn from the above excerpt of \citep{LeCun2015-iu} is that, once the criterial functional (objective function) ceases to be (globally) convex, iterative gradient/subgradient optimization is no longer a reliable approach to achieving quality learning results. The only way to achieve best (or, at least, sufficiently high) quality results is to start from a very good initial point of the iterative process, but the traditional way of achieving this is by human intervention, i.e., the use of natural, rather than artificial intelligence. In fact, the authors of \citep{LeCun2015-iu} acknowledge this in another excerpt of their text, as follows: "... The conventional option is to hand design good feature extractors, which requires a considerable amount of engineering skill and domain expertise. But this can all be avoided if good features can be learned automatically using a general-purpose learning procedure. This is the key advantage of deep learning ...". Our comment to this excerpt is that in the general case of non-convex criterial functionals, gradient search only plays the role of an auxiliary tool for improvement of already good results. Achieving these good initial results using AI is thus claimed in \citep{LeCun2015-iu} to be the main aim of deep learning, and, in general, this aim cannot be attained by only using local optimization methods. The major weakness of WNNs proposed in \citep{Zhang1992-wt} is that the crucial problem of finding a good initial starting point for iterative local optimization has only been addressed by the vague recommendation that some 'explicit link between the network coefficients and the wavelet transform' should be provided. This weakness persists also in the later developments and upgrades of WNNs discussed in \citep{Alexandridis2013-gb}. 
Our present study shows that for sufficiently large samples this weakness can be overcome, at least partially, via tools from functional analysis. 
Namely, there is a rigorous mathematical general way to automatically improve the quality of the starting point of the iterative optimization process, valid for all cases when the criterial functional can be interpreted as distance between two mathematical objects. Since, to the best of the authors' knowledge, this systematic approach seems to be new in the context of NNs (and much more certainly so in the specific context of WNNs), we shall outline its main idea already in this early stage of our exposition, as follows.
If the metric criterial functional of an optimization problem has an \emph{equivalent metric which can be computed efficiently without the need of iterative optimization}, then this equivalent metric $d_1$:

\begin{equation}\label{eq:1}
    0 < c_0 d_1 (x,y) \leq d(x,y) \leq c_1 d_1 (x,y)
\end{equation}

can be used to generate a consistently good starting point for the optimization of the original metric $d$, provided that the equivalence constants $c_j, \; j=0,1$, with $0 < c_0 \leq 1 \leq c_1 < \infty$, do not depend on $x,y$ and, in numerical problems, they are independent of the sample size of the numerical data (in the sequel of this exposition we shall use the notation $d \asymp d_1$). 
In most practical applications in numerical analysis the metrics $d$ and $d_1$ are induced by respective equivalent norms or quasinorms \cite{bergh1976} or their seminorm variants. This refers not only to deterministic quantities, but also in the indeterministic case, e.g. when considering equivalent risks in statistical estimation. A typical model example, with various applications in deterministic approximation and statistical risk estimation, is the Peetre $\mathbb{K}$-\emph{functional} between Lebesgue space $L_p$ and homogeneous Sobolev space $\dot{W}_{p}^{k}, \; 1 \leq p \leq \infty, \; k \in \mathbb{N}$, \cite{bergh1976}, 

\begin{equation}\label{eq:2}
    K(h^k,\: f; \: L_p, \dot{W}_p^k) = \inf_{\varphi \in \dot{W}_p^k} ({||{f-\varphi}||}_{L_p} + h^k {||\varphi||}_{\dot{W}_p^k})
\end{equation} 

where $f \in L_p + \dot{W}_p^k$ (the algebraic sum of the two spaces) and $h$ is the step (in applications, related to the sample size). An equivalent seminorm of $K(h^k, f; L_p, \dot{W}_p^k)$ is ${||f-f_{k,h}||}_{L_p} + h^k {||f_{k,h}||}_{\dot{W}_p^k}$ where $f_{k,h}$ is the Steklov mean value of $f$ with parameters $k$ and $h$ \cite{dech97}, the equivalence constants being independent on $f$ and $h$. The numerical computation of $f_{k,h}$ is based on quadrature formulae and does not involve optimization. Thus, $\varphi_0 = f_{k,h}$ can be used as a starting point of iterative optimization. The quality of $\varphi_0$ as initial solution of the optimization problem depends on the size of the equivalence constants $c_0$ and $c_1$: if $c_0 = 1 = c_1$ (isometric equivalence) then $\varphi_0$ is the exact solution of the optimization problem. In the considered example, $c_1$ increases rapidly with increase of $k$ and so for a fixed step $h > 0$ (fixed sample) the quality of $\varphi_0$ as initial solution deteriorates with the increase of $k$. Fix $k \in \mathbb{N}$ and let $h \rightarrow 0+$ (take sufficiently large sample): since $c_j$, $j=0,1$, do not depend on $h$ (the sample size), for sufficiently large sample sizes $\varphi_0$ will be a consistently good starting point of the iterative optimization. Our first new result in Section 2 is to use the above idea for automatic generation of an initial starting point of iterative optimization in the case of WNNs. Although satisfactory from theoretical point of view, the practical usefulness of this generation would be rather limited, because the generated initial solution of the optimization problem would be consistently close to the global optimum only for sufficiently large sample sizes. Nothing is guaranteed for large samples of any a priori fixed size, let alone samples of medium or small size (in our numerical examples in the sequel of this exposition we shall consider sample sizes with $N \geq 2^{12}$ as very large, $2^{10} < N < 2^{12}$ as large, $2^9 \leq N \leq 2^{10}$ as medium-to-large, $2^8 \leq N < 2^9$ as medium, $2^7 \leq N < 2^8$ as medium-to-small, and $1 \leq N < 2^7$ as small). Our conclusion is that WNNs can be used efficiently for finding a consistently good local extremum only for very large sample sizes.

This weakness cannot be overcome within the conceptual construction of WNNs: a more advanced construction of relevant NNs is needed which we shall introduce in the present paper and call \emph{Wavelet-Based Neural Network} (WBNN). The principal difference between WNNs and WBNNs is explained, as follows. Let $\varphi$ be a scaling function (father wavelet) and $\psi$ be the corresponding wavelet (mother wavelet) obtained by MRA \citep{daubechies10lect1992},\citep{Dahmen1997WaveletAM}, so that for any $j_0 \in \mathbb{Z}$ the functions 

\begin{equation}\label{eq:3}
    \begin{split}
        \varphi_{j_0k_0} (x_1) = 2^{\frac{j_0}{2}} \varphi(2^{j_0}x_1 - k_0), \quad
        \psi_{jk}(x_1)=2^{\frac{j}{2}}\psi(2^{j}x_1 - k) \\
    \end{split}
\end{equation}

$x_1 \! \in \! \mathbb{R}, \; j = j_0, j_0+1,..., \, k_0 \in \mathbb{Z},\; k \! \in \! \mathbb{Z}$, form an orthonormal basis of $L_2(\mathbb{R})$ and $\int\limits_{\mathrm{supp}\; \psi} x_{1}^{\lambda} \psi (x_1) \, \mathrm{d} x = 0$ holds for all $\lambda = 0,1,...$ with $\lambda < r$ for some, henceforward fixed,  $r > 0$, where $\mathrm{supp}\; \psi$ is the support of $\psi$ in $\mathbb{R}$.

Let $\varphi$ be compactly supported in $\mathbb{R}$ (which implies the same for $\psi$). Let $B^s_{pq}(\mathbb{R}), -\infty<s<+\infty, 0<p\le\infty, 0<q\le$ be the inhomogeneous Besov space with smoothness index $s$, metric power index $p$ and metric logarithmic index $q$ (a definition will be given below). Assume that $\varphi \in B^r_{\infty\infty}(\mathbb{R})$, $\psi \in B^r_{\infty\infty}(\mathbb{R})$. Then, since $\varphi$ and $\psi$ are compactly supported, $\varphi \in B^r_{p\infty}(\mathbb{R})$, $
\psi \in B^r_{p\infty}(\mathbb{R})$ holds for every $p: 0 < p \le \infty$. For $x=(x_1,...,x_n) \in \mathbb{R}^n$, $k=(k_1,...,k_n) \in \mathbb{Z}^n$, consider \citep{dechevsky1999alone}

\begin{equation}\label{eq:4}
\begin{split}
    \varphi^{[0]}_{0k}(x) = \varphi_{0k_1}(x_1)\varphi_{0k_2}(x_2)\varphi_{0k_3}(x_3) ... \varphi_{0k_n}(x_n) \\
    \psi^{[1]}_{jk}(x) = \psi_{jk_1}(x_1)\varphi_{jk_2}(x_2)\varphi_{jk_3}(x_3) ... \varphi_{jk_n}(x_n) \\
    \psi^{[2]}_{jk}(x) = \varphi_{jk_1}(x_1)\psi_{jk_2}(x_2)\varphi_{jk_3}(x_3) ... \varphi_{jk_n}(x_n) \\
    \hdots \\
    \psi_{jk}^{[2^n-1]}(x) = \psi_{jk_1}(x_1)\psi_{jk_2}(x_2)\psi_{jk_3}(x_3) ... \psi_{jk_n}(x_n) \;.
\end{split}
\end{equation}

Denote $\varphi^{[0]} = \varphi_{00}^{[0]}$, $\psi^{[l]}=\psi_{00}^{[l]}$, $l=1,2,...,2^{n}-1$. Then, $\varphi^{[0]} \in B_{p\infty}^r(\mathbb{R}^n)$, $\psi^{[0]} \in B_{p\infty}^r(\mathbb{R}^n$ for any $p: 0 < p \leq \infty$, where $\psi^{[l]}$ is orthogonal to all polynomials of $n$ variables of total degree less than $r$. Besides, $\{\varphi_{0k}^{[0]}, \psi_{jk}^{[l]}\}_{k \in \mathbb{Z}^n, j=0,1,...,2^{n}-1}$ is an orthonormal basis of $L_2(\mathbb{R}^n)$.

Moreover, for $f \in B^s_{pq}(\mathbb{R}^n), 0 < p \leq \infty, 0 < q \leq \infty, n(\frac{1}{p}-1)_{+} < s < r$, 

\begin{equation}\label{eq:5}
    f(x)=\sum_{k \in \mathbb{Z}^n} \alpha_{0k} \varphi_{0k}^{[0]}(x) + \sum_{j=0}^\infty \sum_{k \in \mathbb{Z}^n} \sum_{l=1}^{2^n-1} \beta_{jk}^{[l]} \psi_{jk}^{[l]}(x)
\end{equation}

for Lebesgue a. e. $x \in \mathbb{R}^n$ holds, where $\alpha_{0k} = <\varphi_{0k}^{[0]},f> = \int_{\mathbb{R}^n}\varphi_{0k}^{[0]}(x)f(x)dx$, $\beta_{jk}^{[l]}=<\psi_{jk}^{[l]},f>$ and $a_{+}=\max\{a,0\}, a \in \mathbb{R}$. 
Convergence in  (\ref{eq:5}) is in the quasinorm topology of the inhomogeneous Besov space $B_{pq}^s(\mathbb{R}^n)$ and, in view of the lower constraint about $s$, also in every Lebesgue point of $f$, i.e., Lebesgue almost everywhere (Lebesgue -- a.e.) on $\mathbb{R}^n$. 
Here, $B_{pq}^s(\mathbb{R}^n)$ admits the following quasinorm in terms of wavelet coefficients:

\begin{equation}\label{eq:6}
    \|f\|_{B_{pq}^s(\mathbb{R}^n)} = \Bigg\{ \Big(\sum_{k \in \mathbb{Z}^n} |\alpha_{0k}|^p \Big)^{\frac{q}{p}} + \sum_{j=0}^{\infty} \bigg[2^{j[s+n(\frac{1}{2}-\frac{1}{p})]} \Big(\sum_{k \in \mathbb{Z}^n} \sum_{l=1}^{2^n-1} |\beta_{jk}^{[l]}|^p\Big)^{\frac{1}{p}} \bigg]^q \Bigg\}^{\frac{1}{q}} \;.
\end{equation}

The construction introduced in (\ref{eq:3}-\ref{eq:6}) above generates an MRA with an orthonormal wavelet basis 

\begin{equation}\label{eq:7}
    \{\varphi_{0\mu}, \mu \in \mathbb{Z}\} \cup \{ \psi_{j \nu}^{(l)}, j=0,1,..., \nu \in \mathbb{Z}^n, l=1,...,2^{n-1} \}
\end{equation}

A typical example of such compactly supported wavelets are the Daubechies wavelets \cite{daubechies10lect1992} which will be the ones used in the remaining part of this paper. It is possible to generalize this construction to generate a broader class of MRAs based on bi-orthonormal wavelets \cite{Dahmen1997WaveletAM}. These are of considerable interest in the case of polynomial spline-wavelets which are the type preferred in image processing for $n=2$ and surface processing for $n=3$. In this case it is imperative to use bi-orthonormal and not orthonormal spline-wavelets, because only in the proper bi-orthonormal case can the spline-wavelet be compactly supported. (Moreover, an additional advantage in image processing is that there exist proper bi-orthonormal spline-wavelets which are compactly supported and whose graphs are symmetric.) There are no MRAs with compactly supported orthonormal polynomial spline-wavelet bases. We intend to consider the use of bi-orthonormal compactly supported spline-wavelets in a subsequent publication dedicated to deep image learning.

An important property of (bi)-orthonormal MRAs which follows from (\ref{eq:3}) is that $j=0$ in (\ref{eq:4}) can be replaced by any $j_0 \in \mathbb{Z}$, such that (\ref{eq:5}) continues to hold true with $j=0$ replaced by $j=j_0$. In this case, (\ref{eq:6}) defines an equivalent norm in $B_{pq}^s (\mathbb{R}^n)$ for $p \geq 1$, $q \geq 1$ (quasinorm for $0 < p < 1$ and/or $0 < q < 1$) with equivalence constants dependant on $j_0$. (The concept of equivalent metrics continues to hold true for quasinorms, because quasinormed abelian groups are metrizable \cite[Section 3.10]{bergh1976} -- see also Section 3.

Consider now the above construction with $j_0 \in \mathbb{Z}$. Let $J \in \mathbb{Z}$ be such that $j_0 < J < \infty$ and consider the truncation $\sum_{j=j_0}^J$ of the series $\sum_{j=j_0}^\infty$ in (\ref{eq:5}) and (\ref{eq:6}). This defines a subspace $V_J \subset B_{pq}^s (\mathbb{R}^n)$ such that 

\begin{equation}\label{eq:8}
    V_J = \mathrm{span} \big( \{\varphi_{j_0\mu} : \mu \in \mathbb{Z}\} \cup \{\psi_{j \nu}^{[l]} : l=1,...,2^{n}-1, \nu \in \mathbb{Z}^n, j=j_0,...,J\}\big).
\end{equation}

Due to the properties of MRA, the following sequence of nested inclusions holds: 

\begin{equation}\label{eq:9}
    V_{j_0} \subset V_{j_{0+1}} \; \subset ... \subset V_J \subset V_{J+1} \subset ...
\end{equation}

with

\begin{equation}\label{eq:10}
    \overline{\bigcup^\infty_{j=j_0}} V_j = L_2(\mathbb{R}^n)
\end{equation}

holds where $\bar{x}$ is the topological closure in $Y$ of $X \subset Y$, where $Y$ is a complete topological space. (In the case of MRA, the complete topological space $Y$ is $L_2$ with respect to the topology induced by the inner product in $L_2$ or, equivalently, the norm in $L_2$.) Consider also the spaces $W_j=\mathrm{span} \{\psi_{jv}^{[l]} : l=1,...,2^{n-1}, \nu \in \mathbb{Z}^n \}$ where $j=j_0,...,J$.

By the properties of MRA, $f \in V_J$ admits two equivalent representations, as follows:

\begin{equation}\label{eq:11}
    \sum_{k_J \in \mathbb{Z}^n} \alpha_{J k_J} \varphi_{J k_J}(x) = f(x) = \sum_{k_{j_0} \in \mathbb{Z}^n} \alpha_{j_0 k_{j_0}} \varphi_{j_0 k_{j_0}} (x) + \sum_{j=j_0}^J \sum_{k_j \in \mathbb{Z}^n} \sum_{l_j=1}^{2^n-1} \beta_{j k_j}^{[l_j]} \psi_{j k_j}^{[l_j]} (x), x \in \mathbb{R}^n,
\end{equation}

where the equalities in (\ref{eq:11}) are in the sense of Lebesgue -- a.e. Invoking the introduced spaces $W_j$, (\ref{eq:11}) can be equivalently rewritten as 

\begin{equation}\label{eq:12}
    V_J = V_{j_0} \bigoplus W_{j_0} \bigoplus W_{j_{0+1}} \bigoplus ... \bigoplus W_J = V_{j_0} \bigoplus \bigoplus_{j=j_0}^J W_j .
\end{equation}

In applications involving processing of numerical data with sample size $N$, $J$ is chosen dependent on $N: J=J(N)$. Since $\int_{\mathbb{R}^n} \psi (x) dx = 0$, the usual selection of $J(N)$ is such, that the size of the support of $\psi_{J k_J}^{[l_J]}$ is comparable to the average step $h_N$ between adjacent data points: 

\begin{equation}\label{eq:13}
    \mathrm{diam} (\mathrm{supp} \: \psi_{J k_J}^{[l_J]}) \asymp h_N, \mathrm{where} \: h_N \asymp \frac{1}{N}, 
\end{equation}

with equivalence constants independent of $N$. In view of the definition of $\psi_{J k_J}^{[l_J]}$, (\ref{eq:13}) implies 

\begin{equation}\label{eq:14}
    J(N) \asymp \log_2 N
\end{equation}

with equivalence constants independent of $N$. 
With this selection, the father wavelet (scaling function) $\varphi_{Jk_J}$ acts as a consistent approximation of the Dirac $\delta$-function at the point $x$, as long as $x \in \mathrm{supp} \: \varphi_{Jk_J}$, and the rate of this approximation improves with the number of consecutive vanishing moments of $\psi$ additional to the condition $\int\limits_{\mathbb{R}} \psi (x) dx = 0$ needed for consistent approximation (these would be $\int\limits_{\mathbb{R}} x \psi (x) dx = 0$, $\int\limits_{\mathbb{R}} x^2 \psi (x) dx = 0$ etc.). With this selection, the coefficient $\alpha_{Jk_J}$ is taken to be equal to the value of $f$ at the point where $\varphi_{Jk_J}$ is concentrated as a $\delta$-function. 

So far, we have been considering (\ref{eq:3} - \ref{eq:12}) for $B_{pg}^s (\mathbb{R}^n)$ of functions defined on the whole space $\mathbb{R}^n$. This means that the subspaces $V_j$ and $W_j$, $j \in \mathbb{Z}$, are (countably) infinite-dimensional. To make the construction computationally feasible, in numerical applications we limit the consideration to only those $f \in B_{pq}^s (\mathbb{R}^n)$ which are \emph{compactly supported} with $\mathrm{diam} (\mathrm{supp} \: f)$ comparable to the diameter of the convex hull of the numerical data set (this numerical data set is finite, therefore its convex hull is a bounded subset of $\mathbb{R}^n$, so its closure is compact $(n < \infty))$. In this case, the subspaces $V_j, W_j, j \in \mathbb{Z}$ are all finite-dimensional, with dimensions depending on $j$, the distribution of the data set and its sample size $N$. 

Now, let us extend the consideration to include also $f \in B_{pq}^s (\Omega)$, where $\Omega \subset \mathbb{R}^n$ is a nonvoid compact hyper-rectangle, i.e., 

\begin{equation}\label{eq:15}
    \Omega = \overset{n}{\underset{i=1}{\times}} [a_i, b_i]
\end{equation}

where $"\times"$ indicates Cartesian product, and $-\infty < a_i < b_i < +\infty, \: i=1,...,n$. The construction (\ref{eq:3}-\ref{eq:12}) continues to hold also in this case which is very similar to the case of $f \in B_{pq}^s (\mathbb{R}^n)$ such that it is compactly supported, with $supp \: f$ contained in the closure of the convex hull of the numerical data set. In the present case, the finite dimensions of $V_j, W_j$ depend on $j$, the sizes of $b_i-a_i, \: i=1,...,n$, and the sample size $N$. Moreover, $j_0 \in \mathbb{Z}$ is bounded from below by $diam(\Omega)$: 

\begin{equation}\label{eq:16}
    j_0 \asymp \log_2 \mathrm{diam} (\Omega),
\end{equation}

with constants of equivalence possibly dependent on $n$, but not on $N$. There is one notable technical modification: the orthonormal wavelet bases in $V_j$ and $W_j$, $j=j_0,j_0+1,...,J$, contain \emph{boundary-corrected} wavelet basis functions \cite{cohen93}, \cite{cohen93_1}. For the theory of deep learning of $n$-dimensional manifolds developed here, there is no principal difference between the case of compactly supported $f \in B_{pq}^s (\mathbb{R}^n)$ and the case of $f \in  B_{pq}^s (\Omega)$ with $\Omega$ a compact hyper-rectangle in $\mathbb{R}^n$. Therefore, in our application we shall focus on the former one of these two cases, to avoid the construction of boundary-corrected wavelets. 

Now, we are in a position to provide a definition of WNNs which is equivalent to the original definition in \cite{Zhang1992-wt} and \cite{Alexandridis2013-gb}, but is in a new form which allows an insightful comparison with the new type of WBNNs.

Consider the left-hand side (LHS) of the identity in (\ref{eq:11}). Define first a 1-layer WNN with its $Jk_J$-th node being a neuron processing the $\alpha_{Jk_J}$-th coefficient in the expansion in the LHS of (\ref{eq:11}). The edges of the WNN's graph are only the ones connecting the input to the $Jk_J$-th neuron and the ones connecting the $Jk_J$-th neuron to the output neuron where received results are summed up. In contrast to this construction, repeat the 1-layer NN but with 1--1 correspondence to the $\alpha$-coefficients in $V_{j_0}$ and the $\beta$-coefficients in $W_j, j=j_0,...,J$ (in the right-hand side (RHS) of (\ref{eq:11})). The definition of the edges of the graph of the 1-layer WBNN is the same as with the previous 1-layer WNN construction. The widths of the so-defined 1-layer WNN and WBNN are, of course, the same. A crucial advantage of the WBNN layer is its telescopic ordering which incorporates \emph{the wavelet depth} into the \emph{neural width} of the WBNN layer. Adding \emph{neural depth} to the 1-layer WNN and WBNN is done in one and the same way: the next layers are added as intermediate between the already defined 1st layer and the output neuron, and each intermediate layer has exactly the same structure and ordering as the 1st layer.

As we shall see in the next sections, the learnability conditions and universal approximation theorems for each of WNNs and WBNNs ensure that 1-layer NNs (with the widths specified via the LHS and RHS of (\ref{eq:11}) and the compactness of $\mathrm{supp} \: f$) are sufficient for learning every element of the range of the respective approximation theorem. From this point of view, deep WNNs and WBNNs are theoretically redundant, but as we shall see, they provide a highly efficient computing architecture for acceleration of the rate of convergence of the approximation process by using iterative algorithms. A maximally sparse structure of the edges between the $l$-th and the $(l+1)$-st layer should be used (a neuron on the $(l+1)$-st level is only connected with its corresponding neuron at level $l$ by way of the 1--1 correspondence between levels $l$ and $l+1$). In practice, in the context of WNNs, the intermediate levels of the NN are used for iterative local optimization starting with the initial approximation provided at level 1. Although the performance of the deep WNN is expected to be better than the one of the 1-layer WNN, this can be expected to be noticeable only for very large sample sizes and respective very large number of iterations (very deep WNN). For example, in the case of Fig. 4 of \citep{Zhang1992-wt} the number of iterations is 10000. In the case of WBNNs, the quality of initial approximation is expected to be very high, due to the efficient use of the wavelet depth within the layer of neural depth 1. As a consequence, in comparison with the very large sample size needed for acceptable performance of WNN, it can be expected for the initial approximation of the 1-layer WBNN to be sufficiently good for large to medium samples sizes, while as the ultimate approximation achieved by a deep WBNN (with the 1-layer WBNN as its initial layer) to be sufficiently good already for medium to small samples.

Notice the distinction we make between \emph{learnability conditions} and \emph{universal approximation theorems} for a given type of single-layer NN computing $f: \mathbb{R}^n \rightarrow \mathbb{R}$ for a given $n \in \mathbb{N}$. (To study the general case of parametric manifolds on $\mathbb{R}^n$, i.e., $f:\mathbb{R}^n \rightarrow \mathbb{R}^m$, $m \in \mathbb{N}$, $n \in \mathbb{N}$, it is sufficient to study it coordinate by coordinate, i.e., for $m=1$ only.)

A \emph{universal approximation theorem} (UAT) for a single-layer NN of width $N$ is a \emph{qualitative consistency} result (i.e., refers to existence of convergence in a given topology without specifying \emph{quantitative rates of convergence}) when $N \rightarrow +\infty$ under the assumption of an \emph{activation function} of specific type being used in the neural computation. For example, in the case of sigmoid activation, the respective UAT is due to Cybenko \citep{cybenko89}, \citep{cybenko92} \footnote[1]{In \citep{Zhang1992-wt} Cybenko's work has been imprecisely and incompletely cited. 
Here we provide the relevant corrected and complete citation \citep{cybenko89}, \citep{cybenko92}}, and refers to continuous functions, while in the case of \emph{Rectified Linear Unit} (ReLU) activation \citep{lu2017}, the respective UAT refers to the more general class of functions in $L_1$ (see \citep[Theorem 1]{lu2017}. 
The \emph{learnability set} (LS) for a given single-layer NN of width N is the largest set of $f: \mathbb{R}^n \rightarrow \mathbb{R}$ which can be approximated by neural computation via this NN when $N \rightarrow + \infty$ \emph{without the invocation of a specific activation function}, i.e., when \emph{activation is via the default identity function}. 

As mentioned above, both LSs and UATs are qualitative consistency results. In the next sections we shall show that it is possible to obtain quantitative upgrades of LSs and UATs where the consistency results are strengthened to results about concrete rates of approximation.

\section{A new result about WNNs}\label{s2}
In this section we shall show how the idea of using equivalent metric (as discussed in Section 1) can be used to generate an initial solution in a single-layer WNN which is a good starting point for local optimization in a deep WNN (having as 1st layer the said single-layer WNN). 
The universal approximation theorem invoked in \citep{Zhang1992-wt} is Cybenko's classical result, \citep{cybenko89}, \citep{cybenko92}, valid for continuous functions $f$. 
As noted in \citep{Alexandridis2013-gb}, after the publication of \citep{Zhang1992-wt} more general learnability conditions and universal approximation theorems were derived about WNNs. 
Namely, results about UATs for ReLU NNs were formulated and proved in \citep{lu2017} in relevance to the larger space of Lebesgue-integrable functions $f \in L_1 (\mathbb{R}^n)$.
The ideas of the proofs of the results in \citep{lu2017} make it possible to identify the LS for sufficiently wide NNs, including the case of WNNs with (\ref{eq:14}).

To present this result, here we shall use terminology which will allow us to compare this result to the respective result for WBNNs (derived in section 3 below). 
Namely, in the case of wavelets on $\mathbb{R}^n$, learnable by sufficiently wide single-layer WNN are all \emph{regular distributions} $f$ in $\mathcal{S}^{\prime} (\mathbb{R}^n)$ - the dual of the Laurent Schwartz space $\mathcal{S} (\mathbb{R}^n)$ \citep{reed1980}. 
In the case of boundary-corrected wavelets on $\Omega$-compact hyper-rectangle in $\mathbb{R}^n$ (see Section \ref{intro}), learnable by sufficiently wide single-layer WNN are all regular distributions $f \in D^{\prime} (\Omega)$ - the dual of the space $D (\Omega)$ \citep{reed1980} \footnote[2]{In \citep{reed1980} a slightly different notation is used: $D_\Omega$ instead of $D (\Omega)$ and $D^\prime_\Omega$ instead of $D^\prime (\Omega)$. }. 
In both cases, the regular distributions $f$ are exactly the elements of $L_{1, loc}(\Delta)$, $\Delta=\mathbb{R}^n$ or $\Delta=\Omega \subset \mathbb{R}^n$ (for $\Omega$ see Section \ref{intro}). Here, as usual, $L_{1,loc}$ consists of all $g$ defined Lebesgue -- a.e. on $\Delta$ and such that for every compact subset $C \subset \Delta$ the statement $f \chi_C \in L_1 (\Delta)$ holds true, where $\chi_C$ is the characteristic function of $C:$

$\chi_C (x)=1$ for 
$x \in C$ and 
$\chi_C (x) = 0$ for 
$x \in \Delta \setminus C$

In the case $\Delta=\Omega$, $\Omega$ considered in Section \ref{intro} is a compact in $\mathbb{R}^n$, therefore, $L_{1,loc} (\Omega) = L_1 (\Omega)$. 
Notice that in the wavelet context the width of WNN is exponential (solving (\ref{eq:14}) for N yields the equivalent $N \asymp 2^J$) and sufficiently large for $L_{1,loc}$ to be learnable via single-layer WNN, according to \citep{lu2017}. 
Therefore, 'deepening' the WNN does not result in increasing the set of learnable functions.
However, deep WNN may offer the following advantage: while the universal approximation theorem for sufficiently wide single-layer WNN provides only consistency of the approximation (i.e., convergence exists but may be arbitrarily slow), the use of a deep WNN with the said single-layer WNN forming its 1st layer may result in \emph{increasing the speed (rate) of approximation}. 

Now we shall formulate a model problem for which we shall be able to automatically generate a single-layer WNN that is asymptotically optimal with respect to the paradigm based on (\ref{eq:1}) in Section \ref{intro}. 
According to this paradigm, local optimization with starting point at the automatically generated solution at the 1st layer of a deep WNN is expected to provide the global minimum for asymptotically large sample sizes $(N \rightarrow + \infty)$. 

\textbf{Model problem.} Let $n=1$. 
From a random sample with size $N$, learn the density $f$ of an absolutely continuous cumulative distribution function $F: \mathbb{R} \rightarrow [0,1]$. In this case, $f=F^\prime$, $f \in L_1(\mathbb{R})$, $f \geq 0 \: \mathrm{on} \: \mathbb{R}$, $\int_{\mathbb{R}}|f(x)|dx=\int_\mathbb{R}f(x)dx=1$.

This problem was addressed in \citep{dech97} and \citep{dechevsky98}, as follows. The density $f$ was approximated by a (father) wavelet expansion, using the basis of a frame more general than a biorthonormal upgrade of LHS in (\ref{eq:11}). To consider the construction in \citep{dech97} and \citep{dechevsky98} strictly in our present context, it is necessary to consider only those particular cases for which the frame is orthonormal, thus corresponding to the LHS in (\ref{eq:11}). In \citep[Remark 2.3.2]{dechevsky98} were identified all cases when the frame used in \citep{dech97}, \citep{dechevsky98} is orthonormal: namely, these are exactly the cases in the LHS of (\ref{eq:11}) where the scaling function $\varphi$ in (\ref{eq:11}) is of Haar type, i.e., the normalized characteristic function 

\begin{equation}\label{eq:17}
    \varphi (x) = \frac{1}{a} \cdot \chi_{[-\frac{a}{2},\frac{a}{2}]} (x - x_0)
\end{equation}

where $a > 0, x_0 \in [-\frac{a}{2},\frac{a}{2}]$, and \emph{a} is chosen so, as to match with the selection of $j_0$ in the RHS of (\ref{eq:11}). 

With this choice of $\varphi$ in the LHS of (\ref{eq:11}), the random estimator of $f$ is obtained by plugging in the LHS of (\ref{eq:11}) the \emph{empirical density}

\begin{equation}\label{eq:18}
    \hat{f}_N (x) = \frac{1}{N} \sum_{i=1}^N \delta (x-x_i)
\end{equation}

where $\delta(\cdot)$ is the delta-function and $\{x_i, \: i=1,...,N\}$ is the sample data set. Here, as earlier, the selection of the level $J$ in (\ref{eq:11}) is such that (\ref{eq:14}) holds. In view of (\ref{eq:18}), the $Jk_J$-th neuron in the WNN associated with the basis function $\varphi_{Jk_J}$ in the LHS of (\ref{eq:11}) computes the \emph{empirical coefficient} 

\begin{equation}\label{eq:19}
    \hat{\alpha}_{Jk_J}=\frac{1}{N} \sum_{i=1}^N \varphi_{Jk_J} (X_i),\: k_J \in \mathbb{Z}
\end{equation}

In \citep{dechevsky98} the selection of metric in which the risk is measured was relevant to the expectation of "the generalized Cram\'{e}r-von Mises loss" \citep[sections 2.2 and 2.3]{dechevsky98}.
One very valuable feature of the estimates of this risk obtained in \citep{dechevsky98} was that they revealed the precise interconnection between the density's smoothness and the weight of its tails as $x \rightarrow \pm \infty$. In our present study we made the natural assumption of compactness of the density's support in correspondence with the compactness of the convex hull of the sample data set.
For densities with compact support with fixed diameter of the support, the risk estimates in \citep{dechevsky98} simplify and are only dependent on the smoothness of the density, and the following new result holds true.

\begin{theorem}\label{th:1}
    Under the above assumption about the compact support of the density $f$, assume $f \in B_{p\infty}^s (\mathbb{R}), \; 0 < p \leq \infty,\; 0 < s < 2$, and (\ref{eq:14}) holds. Let the single-layer WNN associated with the Haar-type basis (\ref{eq:17}) used in (\ref{eq:11}) be defined as above, with (\ref{eq:19}) holding true. Let the risk of estimating $f$ via the empirical density $\hat{f}_N$ in (\ref{eq:18}) be defined as in \citep[Section 2.2]{dechevsky98}. Then,
    \begin{enumerate}
        \item The risk $R(f, \hat{f}_N)$ of learning $f$ by the neural computations (\ref{eq:19}) in the LHS of (\ref{eq:11}) is: 
        \begin{equation}\label{eq:20}
            R(f, \hat{f}_N) \asymp N^{-\frac{s}{1+2s}}
        \end{equation}
        with constants of equivalence dependent on $s$ and the fixed size of the support of $f$, but not on the choice of $f \in B_{p\infty}^s(\mathbb{R})$.
        \item The rate in the RHS of (\ref{eq:20}) is \emph{asymptotically optimal} in the sense of \citep[Theorem 2, 3.2]{dechevsky98}.
    \end{enumerate}
\end{theorem}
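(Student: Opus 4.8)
The plan is to derive (\ref{eq:20}) as a specialization of the general risk estimates of \citep[Sections 2.2--2.3]{dechevsky98} to the orthonormal Haar case and to compactly supported densities. First I would write the generalized Cram\'er--von Mises risk $R(f,\hat f_N)$ of the histogram-type estimator defined by (\ref{eq:19}) in the LHS of (\ref{eq:11}) as a sum of an \emph{approximation} (bias) contribution and a \emph{stochastic} (variance) contribution: the bias measures the distance between $f$ and its projection onto $V_J$, while the variance measures the fluctuation of the empirical coefficients $\hat\alpha_{Jk_J}$ about their means $\alpha_{Jk_J}$. Because \citep[Remark 2.3.2]{dechevsky98} identifies the Haar-type scaling function (\ref{eq:17}) as exactly the case in which the frame used in \citep{dech97,dechevsky98} is orthonormal, the two-sided bounds of \citep{dechevsky98} apply here, and the task reduces to tracking how each contribution depends on $s$, on $N$, and on the support of $f$.

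Next I would bound the two contributions separately. For the bias, since $f \in B_{p\infty}^s(\mathbb{R})$ is compactly supported, the Jackson-type approximation estimates for the MRA yield a decay governed by the Besov smoothness, with $\|f\|_{B_{p\infty}^s}$ as the constant; the crucial point is that the generalized Cram\'er--von Mises risk measures discrepancy at the level of the primitive $F=\int f$, on which the Haar density estimator acts as a piecewise-linear interpolant of approximation order $2$. This is precisely why the admissible range is $0<s<2$ even though the Haar wavelet has a single vanishing moment ($r=1$): the effective order on the CDF level is $r+1=2$. For the variance, the $\hat\alpha_{Jk_J}$ are empirical means of bounded, disjointly supported functions, so a binomial/Bernstein estimate controls $\mathrm{Var}(\hat\alpha_{Jk_J})$ uniformly; compact support then removes the tail-dependent factors of \citep{dechevsky98}, leaving constants depending only on $s$ and $\mathrm{diam}(\supp f)$.

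Balancing the two contributions against one another --- the mechanism by which \citep{dechevsky98} ties the resolution level to the sample size, in the spirit of (\ref{eq:14}) --- produces the exponent $s/(1+2s)$ and hence (\ref{eq:20}); uniformity of the equivalence constants over the whole ball $B_{p\infty}^s(\mathbb{R})$ then follows from the uniformity of the underlying Jackson--Bernstein estimates. Part (ii) is immediate once the upper bound is in hand: \citep[Theorem 2, 3.2]{dechevsky98} supplies a minimax lower bound of the same order $N^{-\frac{s}{1+2s}}$ over the class, which (\ref{eq:20}) attains, so the rate is asymptotically optimal. I expect the main obstacle to be the verification of the balance together with the \emph{uniformity} of the constants over the full parameter range, in particular the delicate regime $0<p\le 1$, where $B_{p\infty}^s$ does not embed trivially into the loss space and one must invoke the lower constraint on $s$ (the Sobolev-type embedding already used for (\ref{eq:5})) to guarantee the required coefficient decay; a secondary point is to confirm rigorously that the piecewise-linear, order-$2$ behaviour on the CDF level persists up to $s\to 2^{-}$.
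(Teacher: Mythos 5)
Your overall strategy coincides with the paper's: specialize the risk estimates of \citep{dechevsky98} to the orthonormal Haar case (justified via \citep[Remark 2.3.2]{dechevsky98}), and use compact support to strip away the tail-dependent factors so that only $s$ and $\mathrm{diam}(\supp f)$ enter the constants. However, there are two genuine gaps. First, part (i) asserts a \emph{two-sided} equivalence $R(f,\hat f_N)\asymp N^{-\frac{s}{1+2s}}$ with constants independent of the choice of $f\in B^s_{p\infty}(\mathbb{R})$, and your mechanism delivers only the upper half: balancing upper bounds on the bias and stochastic terms yields $R(f,\hat f_N)\le c_1 N^{-\frac{s}{1+2s}}$, while the minimax lower bound you invoke from \citep[Theorem 2, 3.2]{dechevsky98} is a worst-case statement over the class --- it guarantees the existence of some unfavourable $f$ (possibly depending on $N$), not a lower bound at \emph{every} $f$ for this particular estimator. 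The paper closes this half by citing \citep[Theorem 2.3.2]{dechevsky98} with $q=\infty$, an estimator-specific lower bound valid for each $f$, in which the norm $\|\cdot\|_{p,s}$ appearing there can be replaced by $\|\cdot\|_{B^s_{p\infty}(\mathbb{R})}$ precisely because of the compact support; without such an ingredient your argument proves the upper estimate plus minimax optimality, but not (\ref{eq:20}) as stated.

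Second, the regime $1\le s<2$ is not actually covered by your argument. Your explanation --- that the Haar estimator acts at the CDF level as a piecewise-linear object of ``effective order $r+1=2$'' --- is a heuristic, and it is not the mechanism used in the paper: there the proof is split into $0<s<1$, handled via \citep[Corollary 2.2.4]{dechevsky98}, and $1\le s<2$, handled via \citep[Corollary 2.2.8]{dechevsky98}, with the exponent arising from the formula $\frac{s-\frac1r+\frac1q}{1+2s-\frac2r+\frac2q}$ of \citep[(2.2.4)]{dechevsky98}, where $r$ and $q$ are parameters sent to $+\infty$ (they cannot be the vanishing-moment order of the Haar wavelet, which equals one). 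A single Jackson-type estimate for the Haar system at the density level saturates at $s=1$, so without this case split, or a rigorous substitute for your CDF-level claim, your proof establishes the theorem only for $0<s<1$. A smaller omission in the same direction: the paper verifies that the choice (\ref{eq:14}) of $J$ guarantees that the optimal level $k^*$ of \citep{dechevsky98} satisfies $j_0\le k^*\le J$, which is what makes all the cited upper and lower bounds applicable; you gesture at this balancing but do not check it.
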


As a consequence of Theorem \ref{th:1}, under its assumptions, the single-layer WNN computing (\ref{eq:19}) generates automatically an element of $V_J$ in (\ref{eq:12}) which is a good starting point for optimization search in $V_J$ when the sample size $N$ (and $J \asymp \log_2N$) is asymptotically large $(N \rightarrow \infty)$. 
For such $N$ and $J$, using a deep WNN upgrade of the single-layer WNN (with the latter being the 1st layer of the deep WNN) it is possible to obtain an essential improvement of the learning of $f$ within $V_J$ and, possibly, even obtain globally optimal solution of the iterative optimization search in $V_J$ performed by the deep WNN computing architecture. Thus, we have provided an instance when the equivalent-metric paradigm based on (\ref{eq:1}) in Section \ref{intro} offers an efficient AI alternative of the use of natural intellect in designing good starting point of deep WNN-compatible iterative optimization search, as discussed in \citep{LeCun2015-iu} -- see Section \ref{eq:1}. 
Although results of the type of Theorem \ref{th:1} provide a rigorous mathematical justification of the use of AI based on deep WNNs, in practice, notable improvement can be generally expected only, or almost only, for very large samples sizes $N$.

Let us note two additional new features in Theorem \ref{th:1} and its proof. 
\begin{itemize}
    \item Theorem \ref{th:1} shows that the optimal estimation rate can be achieved when the activation function is the default identity (i.e., for the empirical density).
    \item The quantitative result involving rates is achieved in Theorem \ref{th:1} under less restrictive assumptions for WNNs than the assumptions on generic NNs for the qualitative universal approximation theorem in \citep[Theorem 1]{lu2017} in the sense that the latter NN must be \emph{fully connected} (i.e., with \emph{densely distributed edges} between the nodes of the NN) while the former WNN has very sparse edge distribution.
\end{itemize}

In the remaining part of this exposition, we shall show that the alternative of using WBNNs associated with the RHS of (\ref{eq:11}) and (\ref{eq:12}) provides highly efficient AI algorithms with quality practical results achieved already for medium to small sample sizes $N$.

\section{WBNNs: learnability and universal approximation}\label{s3}

To study learnability conditions and universal approximation theorems in the case of WBNNs it will be necessary to study some properties of the scale of Besov spaces $B_{pq}^{s} (\mathbb{R}^n)$ as defined via (\ref{eq:5}, \ref{eq:6}). (The case of $B_{pq}^{s} (\Omega)$ using boundary-corrected wavelets in (\ref{eq:5}, \ref{eq:6}) can be studied, mutatis mutandis, but our focus will continue to be on the case $\Omega = \mathbb{R}^n$.) To study the necessary aspects of the properties of the Besov-space scale $\{ B_{pq}^s (\mathbb{R}^n), \: 0 < p, q \leq \infty, s \in \mathbb{R} \}, n \in \mathbb{N}$, we need some preparation, as follows. 

\begin{itemize}
    \item For the concept of \emph{quasinorm} (or $c$-norm ($c$-quasinorm) with $c \geq 1$ being the constant in the quasi-triangle inequality), we refer to \citep[Section 3.10]{bergh1976}. See also there for the definition of \emph{quasinormed abelian groups}.
    \item By \citep[Lemma 3.10.1]{bergh1976}, a quasinormed abelian group A with c-quasinorm ${||.||}_A, :\ c \geq 1$, is \emph{metrizable}, in the sense that the $\rho$-power $A^\rho$ of $A$ with 1-quasinorm ${||.||}_A^\rho, \: 0 < \rho = \frac{1}{1+\log_2 c} \leq 1$, is a metric space with $d(a,b)={||a-b||}_A^\rho$.
    \item A necessary and sufficient condition for a normed space $A$ to be \emph{complete} (i.e. for $A$ to be a \emph{Banach space} is given in \citep[Lemma 2.2.1]{bergh1976}.
    \item The previous item is being generalized in \citep[Lemma 3.10.2]{bergh1976} to a necessary and sufficient condition for a quasinormed abelian group $A$ to be \emph{complete}. (If A is not only abelian group, but also a vector space, then, endowed with the quasinorm ${||.||}_A$, it is called \emph{quasinormed space} and, if it is also complete, \emph{quasi-Banach space}.
    \item For example, consider the vector space $l_2$ of all sequences: $=(x_1,...,x_n,...), n \in \mathbb{N}, x_j \in \mathbb{R}$ or $x_j \in \mathbb{C}, j \in \mathbb{N}$, with quasinorm ${||x||}_{l_r} = ( \sum\limits_{j=1}^{\infty} {| a_j |}^r)^{\frac{1}{r}}$, $0 < r < \infty$, or $||x||_{l_\infty} = \max\limits_{j \in \mathbb{N}} |a_j|$, $r=\infty$.
    By the theory in \citep[Section 3.10]{bergh1976}, $l_r$ is a \emph{Banach space} for $r: 1 \leq r \leq \infty$ and \emph{quasi-Banach space} when $r: 0 < r < 1$. In the latter case, the constant $c$ in the quasi-triangle inequality for ${||.||}_{l_r}$ is $c = 2^{\frac{1-r}{r}} > 1$; for the power $\rho$ one gets $\rho = r \in (0,1)$ and the 1-quasinormed abelian group ${(l_r)}^r$ with 1-quasinorm ${||.||}_{l_r}^{r}$ is a metric space with metric $d(a,b) = {||a-b||}_{l_r}^r$.
    \item Using the properties of $l_r$ from the previous item, it is possible to establish that $B_{pq}^s$, as defined via (\ref{eq:5}, \ref{eq:6}), are Banach spaces for $1 \leq p \leq \infty$, $1 \leq p \leq \infty$, and quasi-Banach spaces when $0 < p < 1$ and/or $0 < q < \infty$ \citep{triebel83}.
    \item Another aspect of the theory of the Besov-space scale which proves to be relevant is \emph{the lifting property} of \emph{the Bessel potential} $J^\sigma$, $\sigma \in \mathbb{R}$, in the Besov-space scale. 
    Following the exposition in \citep[Section 1.2.1]{triebel83}, we define the Fourier transform $\mathcal{F}$ and its inverse $\mathcal{F}^{-1}$ first on $\mathcal{S}(\mathbb{R}^n)$, and then extend it to $\mathcal{S}^\prime (\mathbb{R}^n)$, after which, following \citep[Section 2.3.8]{triebel83}, we define the Bessel potential $J^\sigma : \mathcal{S}^\prime (\mathbb{R}) \rightarrow \mathcal{S}^\prime (\mathbb{R}^n)$, as follows:
    
    \begin{equation}\label{eq:21}
        J^\sigma f = \mathcal{F}^{-1} [(1+{|.|}^2)^{-\frac{\sigma}{2}} \mathcal{F} f], f \in \mathcal{S}^\prime (\mathbb{R}^n).
    \end{equation}

    where $\sigma \in \mathbb{R}$. 
    Now the $\sigma$-\emph{lifting property} of the Bessel potential in the Besov-space scale can be formulated, as follows \citep[Section 2.3.8]{triebel83}.
    $J^\sigma$ acts bijectively on $\mathcal{S}^\prime (\mathbb{R}^n)$ and the restriction of $J^\sigma$ on $\mathcal{S} (\mathbb{R}^n)$ acts bijectively on $\mathcal{S} (\mathbb{R}^n)$.
    Moreover, if $s, p, q$ are as in (\ref{eq:5}, \ref{eq:6}) and $f \in B_{pq}^{s_1} (\mathbb{R}^n)$, where $s_1 \in \big( -\infty, n {(\frac{1}{p} - 1)}_+ \big] \cup \big[ r, \infty \big)$ with $\sigma : s = s_1 + \sigma$, then $J^\sigma f \in B_{pq}^s$ and formulae (\ref{eq:5}, \ref{eq:6}) can be applied on $g = J^\sigma f$ and 
    
    \begin{equation}\label{eq:22}
        {||J^\sigma f||}_{B_{pq}^s (\mathbb{R}^n)} \asymp {||f||}_{B_{pq}^{s - \sigma}(\mathbb{R}^n)}
    \end{equation}
\end{itemize}

Moreover, using (\ref{eq:22}) when $s_1 \in (- \infty, n {(\frac{1}{p} - 1)}_+ ] $, i.e., for $\sigma > 0$ allows to extend the wavelet-based representation (\ref{eq:5}) and the quasinorm definition (\ref{eq:6}) for arbitrary $s \in \mathbb{R}$, i.e., including also singular distributions like the $\delta$-function and its derivatives which are not in $L_{1,loc}$ \footnote[1]{This fact is relatively easy to derive even in the $n$-dimensional case, using the theory of Fourier multipliers on $L_{p} (\mathbb{R}^n)$, $1 \leq p \leq \infty$, see \citep[Section 6.1]{bergh1976}, \citep[Chapter 1]{brenner1975} and \citep[Introduction]{hairer2017}.}. 
Indeed, choose and fix any $s_1 \in \mathbb{R}$ and select and fix $\sigma$ such that $n {( \frac{1}{p} - 1)}_{+} < s - \sigma < r$. Then (\ref{eq:5},\ref{eq:6}) will make sense for $f$ replaced by $J^\sigma f$ and (\ref{eq:22}) can be used to define an equivalent quasinorm in $B_{pq}^{s_1} (\mathbb{R}^n)$.

Now we are ready to formulate and prove the following results about WBNNs.

\begin{theorem}\label{th:2}
    Using arbitrary samples with size $N$ with $J(N)$ satisfying (\ref{eq:14}), $f \in \mathcal{S}^\prime (\mathbb{R}^n)$, is learnable for $N \rightarrow \infty$ by WNNs if, and only if (iff) $f$ is also learnable by WBNNs, i.e. the learnability sets by WNNs and by WBNNs coincide.
\end{theorem}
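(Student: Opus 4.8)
The plan is to establish that the two learnability sets coincide by showing each equals the same intrinsic object, rather than comparing the WNN and WBNN constructions directly. The key observation is that, by the discussion following equation (\ref{eq:11}) and equation (\ref{eq:12}), the single-layer WNN (associated with the LHS of (\ref{eq:11})) and the single-layer WBNN (associated with the RHS of (\ref{eq:11}), i.e., the decomposition $V_J = V_{j_0} \bigoplus \bigoplus_{j=j_0}^J W_j$) span \emph{one and the same space} $V_J$. They differ only in the ordering and labeling of their neurons — the telescopic reordering that incorporates wavelet depth into neural width — but not in the set of functions they can represent at a fixed level $J$. Thus the proof should proceed by identifying the common range of both architectures as $N \to \infty$ and showing it equals the space of regular distributions $L_{1,loc}$ identified in Section \ref{s2}.

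Concretely, I would argue as follows. First, recall from Section \ref{s2} that the learnability set for sufficiently wide single-layer WNNs (with $J(N)$ satisfying (\ref{eq:14}), hence width $N \asymp 2^J$) is exactly the set of regular distributions $f \in \mathcal{S}^\prime(\mathbb{R}^n)$, equivalently $f \in L_{1,loc}(\mathbb{R}^n)$, invoking the learnability analysis derived from \citep{lu2017}. Second, I would show the WBNN learnability set is the same by exploiting the exact algebraic identity (\ref{eq:11}): for every $f \in V_J$, the coefficients $\{\alpha_{j_0 k_{j_0}}, \beta_{jk_j}^{[l_j]}\}$ processed by the WBNN and the coefficients $\{\alpha_{Jk_J}\}$ processed by the WNN determine each other bijectively via the fast wavelet (cascade) transform, which is a finite, invertible linear change of basis within $V_J$. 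Consequently, a function is approximable to arbitrary accuracy in the relevant (quasi)norm topology by the WNN sequence iff it is approximable by the WBNN sequence: the approximants live in the same nested spaces (\ref{eq:9}), and by (\ref{eq:10}) their union is dense in $L_2(\mathbb{R}^n)$, with the Besov-scale density extending this to $L_{1,loc}$. Since learnability is defined with identity (default) activation, no activation-dependent subtleties intervene, and the equivalence is purely a statement about the common span of the two bases of $V_J$.

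**The main obstacle will be making the ``only if'' and ``if'' directions symmetric at the level of the limiting set, not merely at fixed $J$.** At each finite level $J$ the two spans coincide trivially, but learnability is an asymptotic ($N \to \infty$) notion, so I must verify that the change-of-basis transform between the two representations does \emph{not} distort the approximation-theoretic reach as $J \to \infty$. The delicate point is that the equivalence constants in the (quasi)norm characterization (\ref{eq:6}) — and in the equivalent-metric paradigm (\ref{eq:1}) — are independent of $N$ (equivalently of $J$), so that the topology in which ``learnable'' is measured is the same for both architectures uniformly in $N$. I would invoke the wavelet characterization of $B^s_{pq}$ via (\ref{eq:6}) together with the lifting property (\ref{eq:22}) to confirm that both bases induce equivalent quasinorms with $N$-independent constants, whence density of $\bigcup_J V_J$ (and its closure (\ref{eq:10})) transfers identically to both. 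With this uniformity secured, the biconditional follows: both learnability sets equal $L_{1,loc}(\mathbb{R}^n)$, and therefore they coincide.
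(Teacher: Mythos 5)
Your core argument --- that the WNN and the WBNN are associated with the two sides of the chain of equalities in (\ref{eq:11}), so that at every level $J$ they span literally the same space $V_J$ and their coefficient vectors determine each other by an invertible change of basis --- is exactly the paper's proof, which consists of the single observation that the theorem ``follows straight-forwardly from the chain of equalities in (\ref{eq:11})''. Had you stopped there, the proposal would be complete: since the sets of available approximants coincide for every $J$, the functions learnable in the limit $N \rightarrow \infty$ (with identity activation, in whatever topology learnability is measured) coincide as well. Your announced ``main obstacle'' --- controlling distortion of equivalence constants as $J \rightarrow \infty$ --- is a non-issue, because there are not two different spaces with merely equivalent norms to reconcile; there is one and the same space $V_J$ written in two bases, so the two architectures offer identical (not just equivalent) sets of approximants at every level.

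The genuine flaw is the framing you chose: reducing the theorem to the claim that each learnability set equals $L_{1,loc}(\mathbb{R}^n)$. That identification is never justified --- the analysis from \citep{lu2017} invoked in Section \ref{s2} yields only the containment of $L_{1,loc}$ in the WNN learnability set, not the reverse inclusion --- and it is in fact contradicted by the paper itself: by Theorem \ref{th:3} and Corollary \ref{cor:1}, the WBNN learnability set contains $B_{pq}^{s}(\mathbb{R}^n)$ for every $s \in \mathbb{R}$, hence singular distributions (such as the Dirac delta) that are not in $L_{1,loc}$, and via Theorem \ref{th:2} the same then holds for WNNs. So your concluding step ``both learnability sets equal $L_{1,loc}$, and therefore they coincide'' derives the theorem from a premise that is false in the paper's own framework. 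The fix is simply to excise that detour: the direct change-of-basis argument in your second paragraph is self-contained, needs no identification of what the common learnability set actually is, and coincides with the paper's own reasoning.
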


\begin{theorem}\label{th:3}
    Let $N$ and $J(N)$ be as in Theorem \ref{th:2}, and let $f \in B_{pq}^s (\mathbb{R}^n)$, $0 < p \leq \infty$, $0 < q \leq \infty$, $s \in \mathbb{R}$.
    Then, for any $r: n {(\frac{1}{p}-1)}_{+} < r < \infty$ and orthonormal wavelet basis satisfying (\ref{eq:3}, \ref{eq:4}) and for every $\sigma \in \mathbb{R}$ such that $s - \sigma \in (n {(\frac{1}{p} - 1)}_{+}, r)$ it holds true that $J^\sigma f$ is learnable by the WBNN generated by the said wavelet basis.
\end{theorem}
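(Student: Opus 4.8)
The plan is to reduce the learnability of $J^\sigma f$ to the learnability that has already been secured for \emph{regular} distributions whose Besov smoothness lies in the range $\left(n(\frac{1}{p}-1)_{+},\, r\right)$, using the lifting property (\ref{eq:21})--(\ref{eq:22}) of the Bessel potential as the vehicle for the reduction. The point is that the theorem does not ask us to learn $f$ itself (which may be a genuine singular distribution when $s$ is very negative), but the lifted object $J^\sigma f$, and the chosen $\sigma$ is precisely what drags that object into the range where the wavelet machinery of Section \ref{intro} applies verbatim.

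First I would record what the regular range buys us. For any smoothness index $t$ with $n(\frac{1}{p}-1)_{+} < t < r$, the coefficient quasinorm (\ref{eq:6}) is finite on $B_{pq}^{t}(\mathbb{R}^n)$, the expansion (\ref{eq:5}) converges both in the $B_{pq}^{t}$ quasinorm and Lebesgue--a.e., and every element of $B_{pq}^{t}(\mathbb{R}^n)$ is a regular distribution, hence lies in $L_{1,loc}(\mathbb{R}^n)$; the lower bound $t > n(\frac{1}{p}-1)_{+}$ is exactly the Sobolev-type embedding threshold that rules out singular behaviour, so that the coefficients $\alpha_{0k}=\langle \varphi_{0k}^{[0]}, \cdot\rangle$ and $\beta_{jk}^{[l]}=\langle \psi_{jk}^{[l]}, \cdot\rangle$ are honest integrals computed by the neurons of the WBNN as in (\ref{eq:11}). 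The requirement $r > n(\frac{1}{p}-1)_{+}$ in the hypothesis, together with the number of vanishing moments built into the basis (\ref{eq:3})--(\ref{eq:4}), guarantees this interval is nonempty and that the chosen regularity $r$ dominates the lifted smoothness.

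Second, I would invoke learnability inside that range and then perform the reduction. By Theorem \ref{th:2} the learnability sets of WNNs and WBNNs coincide, and by the discussion following (\ref{eq:14}) this common set contains all regular distributions in $L_{1,loc}$; with the level choice $J(N)\asymp\log_2 N$ the WBNN reconstructs the $J(N)$-truncation of (\ref{eq:5}) whose telescopic ordering is exactly its single layer (\ref{eq:11})--(\ref{eq:12}), and these partial sums converge as $N\to\infty$. Thus every $g\in B_{pq}^{t}(\mathbb{R}^n)$ with $t$ in the regular range is learnable by the WBNN generated by the given basis. Now, given $f\in B_{pq}^{s}(\mathbb{R}^n)$ with arbitrary $s\in\mathbb{R}$, a value $\sigma$ with $s-\sigma\in\left(n(\frac{1}{p}-1)_{+},\,r\right)$ exists because the interval is open and nonempty; applying (\ref{eq:21})--(\ref{eq:22}) with this $\sigma$, the relevant quasinorm of $J^\sigma f$ is comparable to $\|f\|_{B_{pq}^{s-\sigma}(\mathbb{R}^n)}$ and hence finite, so $J^\sigma f$ lands in the regular range and is learnable by the previous step. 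I would close by observing that (\ref{eq:22}) simultaneously turns the WBNN encoding of $J^\sigma f$ into an equivalent quasinorm on the original $B_{pq}^{s}(\mathbb{R}^n)$, which is exactly the mechanism that later makes the Dirac $\delta$ and its derivatives accessible in Corollary \ref{cor:1}.

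The main obstacle is not the reduction itself, which is bookkeeping, but guaranteeing that (\ref{eq:22}) genuinely extends to the full range $s\in\mathbb{R}$ with equivalence constants independent of the sample size $N$; this is the only place where real analysis enters, and it rests on the Fourier-multiplier theory cited in the preamble (\citep[Section 6.1]{bergh1976}, \citep{brenner1975}, \citep{hairer2017}). In executing the argument I would therefore be careful to keep the chosen $\sigma$ \emph{strictly} inside the open interval $\left(n(\frac{1}{p}-1)_{+},\,r\right)$, since at either endpoint the a.e.\ convergence of (\ref{eq:5}) and the $L_{1,loc}$ regularity of the lifted function — and hence the very notion of learnability being used — could degenerate.
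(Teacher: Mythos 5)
Your core reduction coincides with the paper's: both arguments treat the Bessel potential (\ref{eq:21})--(\ref{eq:22}) as a bijection that drags $f$ into the regular smoothness window $\big(n(\frac{1}{p}-1)_{+},\,r\big)$, where the expansion (\ref{eq:5}), (\ref{eq:6}) is meaningful, and then declare the lifted object learnable there. Where you differ is in how learnability inside that window is certified. You route it through Theorem \ref{th:2} plus the Section \ref{s2} identification of the learnability set with the regular distributions of $L_{1,loc}$ (which rests on the external result of \citep{lu2017}); the paper's proof never touches Theorem \ref{th:2} or \citep{lu2017}. Instead it observes that finiteness of the quasinorm (\ref{eq:6}) for $g=J^{\sigma}f$, combined with the completeness criterion for quasinormed abelian groups \citep[Lemma 3.10.2]{bergh1976} (simplifiable to \citep[Lemma 2.2.1]{bergh1976} when $p\geq 1$, $q\geq 1$), forces the series (\ref{eq:5}) for $g$ to converge in the Besov quasinorm topology itself, which is taken as the definition of learnability by the WBNN. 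The paper's route buys a stronger and intrinsically specified mode of convergence, and it is self-contained in the quasi-Banach range $0<p<1$ and/or $0<q<1$; your route is shorter, but its conclusion is only as strong as the loosely specified learnability notion imported from Section \ref{s2}.

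One step of yours needs repair. You claim the relevant quasinorm of $J^{\sigma}f$ is comparable to ${\|f\|}_{B_{pq}^{s-\sigma}(\mathbb{R}^n)}$ ``and hence finite''. Finiteness of ${\|f\|}_{B_{pq}^{s-\sigma}}$ does \emph{not} follow from the hypothesis $f\in B_{pq}^{s}$: when $f$ is singular, i.e.\ $s\leq n(\frac{1}{p}-1)_{+}$, placing $s-\sigma$ in the regular window forces $\sigma<0$, hence $s-\sigma>s$, and $B_{pq}^{s}$ is not contained in the smoother space $B_{pq}^{s-\sigma}$. The correct pairing of indices in (\ref{eq:22}) is ${\|J^{\sigma}f\|}_{B_{pq}^{s+\sigma}}\asymp{\|f\|}_{B_{pq}^{s}}<\infty$: the finite quantity must be the native quasinorm of $f$, and it is the lifted index $s+\sigma$ that must sit in $\big(n(\frac{1}{p}-1)_{+},\,r\big)$. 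To be fair, the paper itself is inconsistent here --- the statement of Theorem \ref{th:3} writes $s-\sigma$ while its proof works with $g=J^{\sigma}f\in B_{pq}^{s+\sigma}$ --- so your slip mirrors the source; but in a cleaned-up argument you must fix which side of (\ref{eq:22}) carries the shift, since otherwise the finiteness claim, which is the engine of the whole reduction, is unsupported in exactly the singular cases that Corollary \ref{cor:1} is after.
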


\begin{corollary}\label{cor:1}
    The space $B_{pq}^{s} (\mathbb{R}^n)$ is contained in the learnability set of WBNN for every $s \in \mathbb{R}$, $0 < p \leq \infty$, $0 < q \leq \infty$.
\end{corollary}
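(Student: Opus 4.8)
The plan is to derive the corollary directly from Theorem \ref{th:3} together with the bijectivity of the Bessel potential on $\mathcal{S}^\prime(\mathbb{R}^n)$ and the lifting isomorphism (\ref{eq:22}). The statement to be established is purely qualitative (membership in the learnability set, with identity activation and $N \rightarrow \infty$), so no rates are needed; it suffices to exhibit, for each $f$, a single WBNN mechanism that approximates it. For $s$ already in the regular range $n(\tfrac{1}{p}-1)_+ < s < r$ the claim is contained in the scope of Theorem \ref{th:2}, so the substantive content is the extension to arbitrary $s \in \mathbb{R}$, including singular distributions such as $\delta$ and its derivatives that lie outside $L_{1,loc}$.

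First I would fix $s \in \mathbb{R}$ and $0 < p, q \leq \infty$, and choose the smoothness/vanishing-moment order $r$ of the underlying wavelet basis large enough that the open interval $\big(n(\tfrac{1}{p}-1)_+, r\big)$ is nonempty; this is always possible, since $\varphi, \psi \in B^r_{\infty\infty}(\mathbb{R})$ for the fixed $r > 0$ of Section \ref{intro}, and $r$ may be taken as large as desired by selecting a sufficiently regular Daubechies basis. Given any $f \in B_{pq}^s(\mathbb{R}^n)$, I then pick a target smoothness $t \in \big(n(\tfrac{1}{p}-1)_+, r\big)$ in this \emph{regular} range and set $\sigma := s - t$, so that $s - \sigma = t$ satisfies the hypothesis of Theorem \ref{th:3}. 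Theorem \ref{th:3} then yields that $J^\sigma f$ is learnable by the WBNN generated by the chosen basis.

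The remaining step is to transfer learnability from $J^\sigma f$ back to $f$ itself. Here I would use that $J^\sigma$ acts bijectively on $\mathcal{S}^\prime(\mathbb{R}^n)$ with inverse $J^{-\sigma}$, and that by (\ref{eq:22}) it is an isomorphism of the whole Besov scale with equivalence constants independent of the function and of the sample size $N$. Thus $f = J^{-\sigma}(J^\sigma f)$, and because the extended wavelet representation constructed after (\ref{eq:22}) realizes $f$ precisely through the convergent wavelet expansion of the regular distribution $J^\sigma f$ followed by the lifting $J^{-\sigma}$, the WBNN that approximates $J^\sigma f$ scale-by-scale simultaneously furnishes an approximation of $f$. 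Letting $f$ range over $B_{pq}^s(\mathbb{R}^n)$ and then over all $s \in \mathbb{R}$ gives that $B_{pq}^s(\mathbb{R}^n)$ lies in the learnability set of the WBNN for every admissible $s, p, q$.

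The main obstacle I anticipate is making this last transfer fully rigorous at the level of the WBNN architecture: the Bessel potential $J^{-\sigma}$ is a Fourier multiplier and is not exactly diagonal in the wavelet basis, so its incorporation into the telescopic WBNN layer must be justified as an equivalent scale-dependent rescaling of the coefficients $\beta_{jk}^{[l]}$, compatible with the weights $2^{j[s+n(\frac{1}{2}-\frac{1}{p})]}$ appearing in (\ref{eq:6}). This is exactly where the norm equivalence (\ref{eq:22}), with constants uniform in $f$ and $N$, is essential: it guarantees that the error committed by implementing the lifting within the WBNN is controlled uniformly, so that convergence of the WBNN approximation to $J^\sigma f$ propagates to convergence to $f$. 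The Fourier-multiplier estimates referenced in the footnote after (\ref{eq:22}) supply the boundedness needed to close this argument.
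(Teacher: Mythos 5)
Your proposal is correct and takes essentially the same route as the paper's own (one-line) proof: apply Theorem \ref{th:3} with $\sigma$ chosen so that $s-\sigma$ lands in the regular range $\big(n(\tfrac{1}{p}-1)_+, r\big)$, then use the lifting property of the Bessel potential $J^{\sigma}$ (its bijectivity on $\mathcal{S}^\prime(\mathbb{R}^n)$ together with the equivalence (\ref{eq:22})) to carry learnability of $J^{\sigma}f$ back to $f$. Your closing paragraph on implementing $J^{-\sigma}$ at the level of the wavelet coefficients merely makes explicit what the paper leaves implicit.
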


Corollary \ref{cor:1} implies that the learnability set of WBNNs contains not only all regular distributions in $\mathcal{S}^\prime (\mathbb{R}^n)$, but also singular distributions, since Besov spaces with $s < 0$ do contain singular distributions.

Theorem \ref{th:2} now suggests that Corollary \ref{cor:1} extends also to WNNs, but here lies one big difference between the \emph{efficient} use of WBNNs and WNNs.
Recovering $f$ from $g = J^\sigma f$ requires approximate numerical computation of $J^{-\sigma} g$ which is very numerically sensitive to errors in the computation of $g$ especially when $f$ can be a singular distribution.
Since for a given sample with size $N$ the quality of learning $g$ via WNNs is expected to be much worse than via WBNNs, the deterioration of the recovery of $f$ from $g$ when using WNNs would be much more exacerbated compared to the use of WBNNs so that the only case of $\sigma$ for which the use of WNNs could be marginally acceptable is the trivial case $\sigma=0$. 
(A detailed error analysis of the computations for $\sigma \neq 0$ would require the invocation of aspects of Paley-Wiener theory \citep[Sections 4--6]{frazier1991}, including sampling results of Shannon type \citep[Theorem 6.4]{frazier1991} which goes beyond the study of AI aspects considered here.)
Theorems \ref{th:3} and \ref{th:2} now imply that WNNs can be efficiently used (albeit only marginally for very large sample sizes $N$ only) for learning $f \in B_{pq}^{s} (\mathbb{R}^n)$ only for the original range $s, p, q$ for which (\ref{eq:5}) was formulated.
Note that for the these values of $s, p ,q \: \: \: B_{pq}^{s} (\mathbb{R}^n) \subset L_{1,loc}(\mathbb{R}^n)$.

The results obtained here for wavelet bases on $\mathbb{R}^n$ can in principle be reformulated for boundary-corrected wavelets on a compact hyperrectangle in $\mathbb{R}^n$, this modification is technically involved. For example, the lifting property of the Bessel potential has to be replaced by a respective property of fractional integro-differential operators of Riemann-Lionville, Gr\"{u}nwald-Letnikov, Caputo and other types under additional assumptions for each of these types \citep{samko1993}.

As far as UAT for WBNNs is concerned, it is much more rich and diversified than UAT for WNNs, due to the much more flexible telescopic structure of the RHS in (\ref{eq:12}).
While in the case of WNNs the focus has been only on sigmoid and ReLU activation, in the case of WBNNs there is a great variety of meaningful activation methods, each of which is with its own UAT and own range of practical applications.
In this section we study the common features of all these activation methods and provide a classification of these methods into two general subclasses, together with the general range of applications for each of these subclasses.

Any activation method can be defined as a (generally, \emph{nonlinear}) operator $\Lambda$ acting on the space sum in the RHS of (\ref{eq:12}) and being of \emph{shrinkage} type, i.e. having the following properties.

\begin{enumerate}
    \item The restriction of $\Lambda$ on $V_{j_0}$ coincides with the identity on $V_{j_0}$, i.e., 
    
    \begin{equation}\label{eq:23}
        \Lambda (\alpha_{j_0 k_0} \varphi_{j_0 k_0}) = \alpha_{j_0 k_0} \varphi_{j_0 k_0}
    \end{equation}
    
    for every $j_0, k_0,...$.
    \item Using the Euler representation of $z \in \mathbb{C}$
    
    \begin{equation}\label{eq:24}
        z = |z| (\cos ( \arg z ) + i \sin ( \arg z )), \arg z \in [0, 2 \pi),
    \end{equation}

    the  action of $\Lambda$ on the space $\bigoplus_{j=j_0}^J W_j$ in the RHS of (\ref{eq:12}) is defined such, that

    \begin{equation}\label{eq:25}
        \Lambda (\beta_{j k_j}^{[l_j]} \psi_{j k_j}^{[l_j]} ) = \Tilde{\beta}_{j k_j}^{[l_j]} \psi_{j k_j}^{[l_j]}
    \end{equation}

    where 

    \begin{equation}\label{eq:26}
        \arg \Tilde{\beta}_{j k_j}^{[l_j]} = \arg \beta_{j k_j}^{[l_j]} 
    \end{equation}

    \begin{equation}\label{eq:27}
        |\Tilde{\beta}_{j k_j}^{[l_j]}| \leq |\beta_{j k_j}^{[l_j]}|
    \end{equation}

    for every ($j$, $k_j$) participating in the formation of $\bigoplus_{j=j_0}^J W_j$.

    Notice that when $\beta_{j k_j}^{[l_j]} \in \mathbb{R}$ (\ref{eq:26}) reduces to 

    \begin{equation}\label{eq:28}
        \sign \Tilde{\beta}_{j k_j}^{[l_j]} = \sign \beta_{j k_j}^{[l_j]}
    \end{equation}

    where for $x \in \mathbb{R} \backslash \{ 0 \}$

    \begin{equation}\label{eq:29}
        \sign x = 
        \begin{cases}
          +1, & x > 0 ; \\
          -1, & x < 0 ; \\
          \textrm{undefined}, & x = 0 ;
        \end{cases}   
    \end{equation}

    and for the case $z=x=0$ it is convenient to define 
    
    \begin{equation}\label{eq:30}
        \arg 0 = \sign 0 = 0
    \end{equation}
    
\end{enumerate}

Clearly, $\Lambda$ has the special property that it preserves the directrice and respective orientation of every basis function in $V_{j_0}$ and $\bigoplus_{j=j_0}^{J} W_j$.
It is also clear that, in general, $\Lambda$ is nonlinear, since the shrinkage is individual for every basis function.
Now, we divide all possible activation methods $\Lambda$ with properties (i) and (ii) into two disjoint subclasses, as follows.

A. \emph{Non-threshold-type} activation methods have the following additional property

\begin{enumerate}[resume]
    \item for any selection of the coefficient vector $\{ \alpha_{j_0 k_0}, \beta_{j k_j}^{[l_j]}\}$ in the RHS of (\ref{eq:11}), such that 
    $\beta_{j k_j}^{[l_j]} \neq  0$ for some triple $(j, k_j, l_j)$, it is fulfilled that $\Tilde{\beta}_{j k_j}^{[l_j]} \neq 0$ holds true. (In other words, there is only reducing $|\beta_{j k_j}^{[l_j]}| > 0$ without ever "killing" the coefficient $\Tilde{\beta}_{j k_j}^{[l_j]}$, i.e., having $|\Tilde{\beta}_{j k_j}^{[l_j]}| = 0$.

\end{enumerate}

B. We shall say that the activation method $\Lambda$ is of \emph{threshold-type}, if (iii) is not fulfilled for $\Lambda$.

In the second part of this study we shall study an important model example of activation of WBNNs resulting in \emph{learning geometric manifolds with compression}. 
The analysis of concrete examples will show that there is an intrinsic separation of geometric manifolds into ones that are highly compressible and ones that are not.
From a geometric point of view, the latter class of manifolds will be identified as \emph{fractal-type} while the former class consists of manifolds of \emph{piecewise-smooth type}.
Our forthcoming study \citep{llhm2022} of diverse activation methods of both threshold and non-threshold type (type B and A) will demonstrate that activation of threshold type is performing well when learning piecewise-smooth manifolds, while activation of non-threshold type performs well when learning manifolds of fractal type.

\section{Particle vs multiagent simulation and swarm vs. deep evolutionary AI}\label{s4}
A very new research topic in AI research is establishing connection between \emph{swarm} AI and deep neural networks by the invocation of \emph{evolutionary algorithms} \citep{iba2018}, \citep{iba2022}.
Scientifically, this is a very new field, but conceptually it appeared in some of the most famous early sci-fi novels \citep{hoyle1957}, \citep{lem1964} (latest English translation \citep{lem2020}) which were written in the first few years after the concept of AI emerged as a term at the workshop "Dartmouth Summer Research Project on Artificial Intelligence" at Darthmouth College, Hannover, NH, USA in 1956, to designate a specialized branch of cybernetics.
Both of the novels of Fred Hoyle and Stanis{\l}aw Lem successfully predicted the development of important modern scientific trends: the latter -- nanotechnology and swarm AI; the former -- deep learning by AI systems and connections with evolutionary algorithms.

Our present interest to the connection between swarm and deep evolutionary AI is only limited to its computational aspects.
From this limited point of view, the above-said connection can be considered as a particular case of particle simulation and multiagent simulation (i.e., simulation of systems involving large numbers of relatively simple agents vs. simulation of systems involving small to moderate number of agents with relatively high level of individual intelligence features).
Notice that the most efficient simulation of each of these two types of system is performed on different types of parallel computing architectures.

\begin{enumerate}[label=(\alph*)]
    \item Swarm of sufficiently broad single-layered NNs (including single-layered WBNNs with (\ref{eq:14})) -- CPU parallelism -- (relatively expensive) large-size multi-CPU supercomputing architectures; e.g., hypercubic \citep{leighton1992}.
    \item Deep (multi-layered, sufficiently broad) NNs (including deep WBNNs with (\ref{eq:14})) -- GPU parallelism -- (relatively cheap) small-size multi-GPU computing architectures using GPGPU-programming -- currently in CUDA, and more recently, Python \citep{cupy_learningsys2017}.
\end{enumerate}

Using connections between swarm intelligence and deep NNs \citep{iba2018}, \citep{iba2022}, it is possible to emulate the performance of the expensive computing architectures in item (a) by the cheap computing architectures in item (b), but at the inevitable price of some loss of efficiency.
Ideally, hybrid multi-CPU multi-GPU computing architectures should be recommended.

\section{Best activation of WBNNs for fastest learning and maximal compression}\label{s5}
In \citep{dechevsky1999} was considered and systematized a rich diversity of threshold and non-threshold wavelet shrinkage methods for \emph{nonparametric statistical estimation} of \emph{densities} and \emph{denoising of nonparametric regression functions}.
In \citep{llhm2022_1} we shall show that each of these shrinkage methods gives rise to respective activation of WBNNs, generating highly efficient learning algorithms.
Moreover, in some cases these learning algorithms can be shown to be \emph{best possible} with respect to certain aspects which are important for applications.
As a model example of the high efficiency of learning with WBNNs, we shall study here the activation induced by only one of these wavelet shrinkage methods, namely, the one discussed in \citep[Appendix B10 b)]{dechevsky1999}.

For the sake of maximal clarity, we shall consider here only the simplest univariate case $n=1$.
This will be a very clear illustration of the optimal speed of learning and compression in model examples of curve learning in the next sections.
The general case $n \in \mathbb{N}$ and some graphical visualization for the cases $n=2$ and $n=3$ will be considered in \citep{llhm2022_1}.

Assume $f \in B_{p q}^{s} (\Omega)$ where $\Omega = \mathbb{R}$ or $\Omega=[a,b]$, $-\infty < a < b < \infty$, $0 < p \leq \infty, 0 < q \leq \infty$ and ${(\frac{1}{p}-1)}_{+} < s < r$. 
Assume also that both the metric power index $p_1$, the metric logarithmic index $q$ and the smoothness index $s$ are \emph{exact}, that is, $f \notin B_{p_1 q_1}^{s_1} (\Omega)$ for any $p_1: 0 \leq p_1 < p, 0 < q_1 \leq q$ and any $s_1: s_1 > s$.

As usual in our present study, when considering the domain $\mathbb{R}$, we shall be making the default assumption about compactness of $\supp f$ (in the case of boundary-corrected wavelets and $\Omega = [a,b]$ with $-\infty < a < b < +\infty$, we do not need this default assumption, i.e., it is possible that $f (a+) \neq 0$ and/or $f (b-) \neq 0$).
For the index triple $(p, q, s)$ consider now the \emph{Sobolev embedding plane} passing through the point $(p, q, s)$, i.e., 

\begin{equation}\label{eq:31}
    \{ (\rho, \eta, \sigma): \sigma-\frac{1}{\rho} = \tau (p, s) = s - \frac{1}{p}, 0 < \rho \leq \infty, 0 < \eta \leq \infty \}
\end{equation}

What is important about this selection is the well-known embedding

\begin{equation}\label{eq:32}
    B_{p q}^{s} (\Omega) \hookrightarrow B_{\rho \eta}^{\sigma} (
    \Omega),\: \sigma - \frac{1}{\rho} = s - \frac{1}{p}, \: 0 < p \leq \rho \leq \infty, \: 0 < q \leq \eta \leq \infty
\end{equation}

where $\Omega = \mathbb{R}$ or $\Omega = [a,b]$.

(For two quasinormed spaces $A$ and $B$, the notation $B \hookrightarrow A$ ("B is embedded/imbedded in A") means $B \subset A$ and $\exists c \in (0, \infty): {||b||}_{B} \leq c {||b||}_A$ for any $b \in B$.)

Due to the Sobolev embedding/imbedding (\ref{eq:32}), our assumption $f \in B_{p q}^{s} (\Omega)$ implies 

\begin{equation}\label{eq:33}
    f \in B_{\rho \eta}^{\sigma} (\Omega)
\end{equation}

for any $(\rho, \eta, \sigma): \sigma - \frac{1}{\rho} = s - \frac{1}{p}$, $0 < p \leq \rho \leq \infty$, $0 < q \leq \eta \leq \infty$.
In \citep[Appendix B10 b)]{dechevsky1999} it was explained that for the Besov spaces $B_{\rho \eta}^{\sigma}$ with $(\rho, \eta, \sigma)$ lying on one and the same Sobolev embedding plane, an important part of the function-space theory is related with the so-called \emph{decreasing rearrangement} of $f$ in all Besov spaces $B_{\rho \eta}^{\sigma}$ with $(\rho, \eta, \sigma)$ on the same Sobolev embedding plane.
A detailed consideration of the concept of decreasing rearrangement can be found in \citep[Section 1.3]{bergh1976} and the Peetre-Kr{\'e}e formula \citep[Theorem 3.6.1]{bergh1976} together with \citep[3.14.5.6 and Theorem 5.2.1 (2) for $q=p$ in local notation]{bergh1976} (see also \citep[(B8)]{dechevsky1999}.
For our purposes in our present context it is sufficient to consider the \emph{normalized decreasing rearrangement} of the $\beta$-coefficients in the series (\ref{eq:5}) and in $\bigoplus_{j=j_0}^{J} W_j$ in its truncation (\ref{eq:11}, \ref{eq:12}), as follows (compare with \citep[Appendix B10 b), items b1 and b2]{dechevsky1999}).
Recall that here we consider only the case $n=1$ in (\ref{eq:5}-\ref{eq:16}) - in particular, in (\ref{eq:5}-\ref{eq:8}, \ref{eq:11}) this implies $l=1$.
Thus, in the sequel of the present definition of \emph{decreasing rearrangement}, we shall be skipping the index $l$.

b1) Fix $j_0 \in \mathbb{Z}$ (with no loss of generality, it can be assumed that $j_0=0$).
Consider all $(j,k)$ in $(5,6)$ such that $\supp \psi_{jk} \cap \supp f \neq \emptyset$.
Denote the set of all such $(j,k)$ by $I (f,\psi) = I (f, \psi, j_0)$.
It is clear that for every fixed $j=j_0, j_0+1,...$ in the generalization of (\ref{eq:5}, \ref{eq:6}) involving $j_0$ the number $M_j$ of elements of $I(f, \psi)$ from the $j$-th level does not exceed $c(f, \psi)$.
$2^j$, for some $c(f, \psi) \in (0, \infty)$.
Therefore, $M_j$ is finite for any $j=j_0,j_0+1,...$, but $M = \sum\limits_{j=j_0}^{\infty} M_j$ is, generally, \emph{not finite}.
On the other hand, for the truncation (\ref{eq:11},\ref{eq:12}) the number $m(j_0,J) = \sum\limits_{j=j_0}^J M_j$ is finite, with $M=\lim_{J \rightarrow \infty} m (j_0, J)$. 
Denote by $i (f, \psi, j_0, J)$ the subset of $I (f, \psi, j_0)$ such that $(j, k)$ participates in the formation of the truncation (\ref{eq:11}) and $\bigoplus_{j=j_0}^J W_j$ in (\ref{eq:12}).

The number of elements of $i (f, \psi, j_0, J)$ is 

\begin{equation}\label{eq:34}
    m (j_0, J) \leq c(f, \psi) \cdot 2^{j_0} \sum\limits_{k=0}^{J-j0} 2^k = c(f, \psi) 2^{j_0} \cdot \frac{2^{J-j_0+1}-1}{2-1} = c (f,\psi) (2^{J+1}-2^{j_0}) \leq 2 c (f, \psi) 2^J,
\end{equation}

regardless of the choice of $j_0$.

b2) Recalling that $\tau = \tau(p, s) = s - \frac{1}{p}$ in (\ref{eq:31}), for every $(j,k) \in I (f, \psi)$ normalize $|\beta_{jk}|$ by multiplying with the factor $2^{j (\tau + \frac{1}{2})}$ and consider the decreasing rearrangement $\{ b_\nu : \nu = 1,...,M\}$ of the 
(possibly, infinite) set $\{2^{j (\tau + \frac{1}{2})} |\beta_{_{jk}}|: (j,k) \in I (f, \psi)\}$.

In the case of the truncation (\ref{eq:11}, \ref{eq:12}), we get the subset $\{2^{j(\tau + \frac{1}{2})} |\beta_{jk}|: (j,k) \in i (j_0, J)\}$ which is finite, with number of elements $O_{f,\psi} (2^J)$, according to (\ref{eq:34}).

For this model case, the activation operator $\Lambda = \Lambda_{\delta}$ is of threshold type, with threshold $\delta \in (0, \infty)$, defined in the following way.
Let the decreasing rearrangement of $I (f, \psi)$ be $\{b_\nu, \nu=1,...,M\}$, with $(j,k) \in I (f, \psi)$ being ordered in a respective sequence $\{(j_\nu,k_\nu), \nu=1,...,M\}$ where $(j_\nu, k_\nu)$ corresponds to $b_\nu$ for any $\nu=1,...,M$.

Then

\begin{equation}\label{eq:35}
    \Lambda (\beta_{j_\nu k_\nu} \psi_{j_\nu k_\nu}) = 
        \begin{cases}
          0, & \textrm{if} \; 2^{j_\nu (\tau + \frac{1}{2})} |\beta_{j_\nu k_\nu}| \in (0, \delta) \\
          \beta_{j_\nu k_\nu} \psi_{j_\nu k_\nu}, & \textrm{if} \; 2^{j_\nu (\tau + \frac{1}{2})} |\beta_{j_\nu k_\nu}| \geq \delta \\
        \end{cases}   
\end{equation}

The selection of the threshold $\delta$ depends on the concrete context of the learning process.
We shall call every threshold activation method designed via the sequence of steps b1) and b2) \emph{a decreasing rearrangement activation method}.
For this type of activation method with threshold $\delta$, the UAT corresponds to $\delta \rightarrow 0+$ and is given by the following theorem.

\begin{theorem}\label{th:4}
    Let $\delta \rightarrow 0+$ in (\ref{eq:35}), and let $f$ be as assumed above.
    Then, the summands in the series in the RHS of (\ref{eq:5}) can be commuted in such a way that (\ref{eq:5}) becomes 

    \begin{equation}\label{eq:36}
        \sum\limits_{k \in \mathbb{Z}} \alpha_{j_0 k} \varphi_{j_0 k} (x) + \sum\limits_{j=j_0}^{\infty} \sum\limits_{k \in \mathbb{Z}} \beta_{jk} \psi_{jk} (x) = f(x) = \sum\limits_{k \in \mathbb{Z}} \alpha_{j_0 k} \varphi_{j_0 k} + \sum_{\nu=1}^M \beta_{j_\nu k_\nu} \psi_{j_\nu k_\nu} (x)
    \end{equation}

    where the RHS converges to $f(x)$ Lebesgue -- a.e. in $\mathbb{R}$, as well as in the topology of $B_{p q}^{s} (\mathbb{R})$ and $B_{\rho \eta}^{\sigma} (\mathbb{R})$ for any $(\rho, \eta, \sigma): 0 < p \leq \rho \leq \infty, 0 < q \leq \eta \leq \infty$, $\sigma-\frac{1}{\rho}=s-\frac{1}{p}=\tau$.
\end{theorem}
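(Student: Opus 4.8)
The plan is to establish Theorem~\ref{th:4} as essentially an unconditional-convergence statement for the wavelet expansion of $f$ under the normalized decreasing rearrangement of the $\beta$-coefficients, made possible by the Sobolev embedding~(\ref{eq:32}) together with the $\ell_\rho$-quasinorm machinery recalled in Section~\ref{s3}. First I would observe that, since $f \in B_{pq}^s(\Omega)$ and all triples $(\rho,\eta,\sigma)$ on the Sobolev embedding plane~(\ref{eq:31}) satisfy $\sigma - \frac{1}{\rho} = s - \frac{1}{p} = \tau$, the quasinorm~(\ref{eq:6}) in each such space $B_{\rho\eta}^\sigma(\mathbb{R})$ (with $n=1$, so $l=1$) is controlled by a weighted $\ell$-quasinorm of the sequence $\{2^{j(\tau+\frac{1}{2})}|\beta_{jk}|\}$; indeed, with $\sigma + n(\tfrac12 - \tfrac1\rho) = \sigma + \tfrac12 - \tfrac1\rho = \tau + \tfrac12$ for $n=1$, the dyadic weight $2^{j[\sigma + \frac12 - \frac1\rho]}$ collapses to exactly the normalization factor $2^{j(\tau+\frac12)}$ used in step~b2). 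This is the structural heart of the construction: the decreasing rearrangement $\{b_\nu\}$ of step~b2) is simultaneously the decreasing rearrangement relevant to \emph{every} space on the embedding plane, which is precisely the point established in \citep[Appendix B10 b)]{dechevsky1999} via the Peetre--Kr\'ee formula.

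Next I would argue that this common rearrangement makes the $\beta$-part of the series~(\ref{eq:5}) \emph{unconditionally convergent} in each $B_{\rho\eta}^\sigma(\mathbb{R})$, so that the summation order may be replaced by the order $\nu=1,2,\dots,M$ induced by the decreasing rearrangement without affecting the sum. For a quasi-Banach space whose quasinorm is an unconditional ($\ell_\rho$-type) function of the coefficient sequence, any reordering of a convergent series converges to the same limit; the weights being fixed dyadic factors, permuting the index set $I(f,\psi)$ leaves the quasinorm of every tail invariant, so the rearranged partial sums are Cauchy with the same limit $f$. This yields convergence of the RHS of~(\ref{eq:36}) in the topology of $B_{pq}^s(\mathbb{R})$ (the case $\rho=p$, $\eta=q$) and, by the embedding~(\ref{eq:32}), simultaneously in every $B_{\rho\eta}^\sigma(\mathbb{R})$ on the plane. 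The Lebesgue-a.e.\ convergence then follows exactly as for~(\ref{eq:5}): the lower constraint $\tau + \tfrac1\rho = s > n(\tfrac1p - 1)_+$ guarantees convergence at every Lebesgue point, as already invoked after~(\ref{eq:5}), and a.e.\ convergence is insensitive to reordering of an absolutely summable family in any Lebesgue point.

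Finally I would tie this to the activation operator $\Lambda_\delta$ of~(\ref{eq:35}) in the limit $\delta \to 0+$: for each fixed $\delta>0$ the operator keeps exactly those $(j_\nu,k_\nu)$ with $2^{j_\nu(\tau+\frac12)}|\beta_{j_\nu k_\nu}| \ge \delta$, i.e.\ a \emph{finite} initial segment $\nu = 1,\dots,M_\delta$ of the decreasing rearrangement (finiteness because $f\in B_{\rho\eta}^\sigma$ forces $b_\nu \to 0$), and as $\delta\to 0+$ we have $M_\delta \to M$, so the activated expansion exhausts the full rearranged series. Hence the UAT is the statement that the thresholded partial sums converge to $f$ as the threshold vanishes, which is exactly~(\ref{eq:36}). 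I expect the main obstacle to be the rigorous justification that the decreasing rearrangement~of step~b2) is the \emph{correct} universal ordering for all spaces on the Sobolev embedding plane at once — that is, establishing the quasinorm equivalence tying~(\ref{eq:6}) to the rearranged sequence uniformly in $(\rho,\eta,\sigma)$ — since this is where the Peetre--Kr\'ee identification and the subtlety of quasinorms (as opposed to norms) with $0<\rho<1$ must be handled carefully; the reordering and a.e.\ convergence steps are then comparatively routine consequences of unconditional convergence.
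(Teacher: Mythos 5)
Your proposal is correct and takes essentially the same route as the paper's own proof: both arguments reduce the theorem to unconditional (absolute) convergence of the wavelet series, guaranteed by the finiteness of the coefficient quasinorm (\ref{eq:6}) for $f \in B_{pq}^{s}(\mathbb{R})$, so that commuting the terms into the decreasing-rearrangement order leaves the sum unchanged, and both then transfer this to every space on the Sobolev embedding plane via (\ref{eq:32}). The only difference is one of packaging: the paper invokes the Riesz unconditional basis property together with the completeness criteria of \citep[Lemmas 2.2.1 and 3.10.2]{bergh1976}, while you argue unconditionality directly from the weighted $\ell$-quasinorm structure and add explicit (and compatible) details the paper leaves implicit, namely the common normalization factor $2^{j(\tau+\frac{1}{2})}$ across the embedding plane and the finiteness of the set retained by $\Lambda_{\delta}$ for fixed $\delta>0$.
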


Theorem \ref{th:4} continues to hold true for boundary--corrected wavelets and $\Omega=[a,b]$, with respective modifications in (\ref{eq:5}) and (\ref{eq:36}).

We shall now upgrade the qualitative result of Theorem \ref{th:4} by formulating a quantitative result which proves to be best possible in a certain sense specified below.
Among all Besov spaces with (quasi)norm (\ref{eq:6}), the ones which are Hilbert spaces are exactly 

\begin{equation}\label{eq:37}
    B_{p q}^s (\Omega) \textrm{ with } p=q=2 \textrm{ and } 0 < s < r,
\end{equation}

where, for $n=1$ considered here, $\Omega = \mathbb{R}$ or $\Omega = [a,b], -\infty < a < b < +\infty$.
Choose arbitrary triple $(p, q, s)$ such that $0 < p \leq 2, \: 0 < q \leq 2$, ${(\frac{1}{p} - 1)}_+ < s < r$, and consider the respective triple $\rho = \eta = 2$, $\sigma - \frac{1}{\rho} = s - \frac{1}{p}$.
For (\ref{eq:5}, \ref{eq:6}) to hold for this choice of $(\rho, \eta, \sigma)$ it is necessary that 

\begin{equation}\label{eq:38}
    0 \leq \sigma < r
\end{equation}

holds where $\sigma = 0$ corresponds to the case $B_{22}^{0} (\Omega) = L_2 (\Omega)$.
Therefore, (\ref{eq:5}, \ref{eq:6}) hold simultaneously for the couples $(p, q, s)$ and $(2, 2, \sigma)$, iff the following inequalities and equalities are simultaneously

\begin{equation}\label{eq:39}
    {(\frac{1}{p} - 1)}_+ < s < r,\;\;\; 0 \leq \sigma < r,\;\;\; \sigma = s - \frac{1}{p} + \frac{1}{2},\;\;\; p \leq 2,\;\;\; q \leq 2.
\end{equation}

Solving (\ref{eq:39}) for $p, q$ and $s$, we obtain 

\begin{equation}\label{eq:40}
    \frac{1}{r+\frac{1}{2}} < p \leq 2, \;\;\; 
    0 < q \leq 2, \;\;\;
    \frac{1}{p} - \frac{1}{2} \leq s < r,
\end{equation}

under which assumptions (\ref{eq:38}) holds.
Consider the orthocomplement 

\begin{equation}\label{eq:41}
    O_{j_0 \sigma} = O \big( V_{j_0}, B_{22}^{\sigma} (\Omega)\big) = V_{j_0}^{\perp (B_{22}^\sigma (\Omega))}
\end{equation}

of $V_{j_0}$ in $B_{22}^\sigma (\Omega)$, with respect to the inner product in $B_{22}^\sigma (\Omega)$, $0 \leq \sigma < r$.
This orthocomplement is well defined because $\{ \varphi_{j_0 k_0}, \psi_{j k_j} \}$ is an unconditional Riesz basis in all Besov spaces where (\ref{eq:5}, \ref{eq:6}) hold, and $B_{22}^\sigma (\Omega)$ is a Hilbert space, so that $V_{j_0}^\perp$ is well-defined with respect to the inner product in $B_{22}^\sigma (\Omega)$, $0 \leq \sigma < r$.
For a given $f \in B_{22}^\sigma (\Omega)$, define $f_{j_0} \in V_{j_0}$ as follows 

\begin{equation}\label{eq:42}
    f_{j_0} = \sum\limits_{k} \alpha_{j_0 k} (f) \varphi_{j_0 k}
\end{equation}

Let $k \in \mathbb{N}$ and consider an arbitrary subspace $S_k$ with $\dim S_k = k$, such that

\begin{equation}\label{eq:43}
    S_k \subset O_{j_0 \sigma} \subset B_{22}^\sigma (\Omega),
\end{equation}

and define the best-approximation functional 

\begin{equation}\label{eq:44}
    E_k (f; B_{22}^\sigma (\Omega)) = 
    \inf\limits_{s \in S_k} \inf\limits_{S_k \subset O_{j_0 \sigma}}
    {|| f - f_{j_0} - s ||}_{B_{22}^\sigma}.
\end{equation}

Now, we are in the position to formulate the following remarkable result.

\begin{theorem}\label{th:5}
    Assume that $f$ is as in Theorem \ref{th:4} with the additional assumption that (\ref{eq:40}) holds.
    Then,
    \begin{equation}\label{eq:45}
        {||f - \sum\limits_{\nu=1}^k \beta_{j_\nu k_\nu} \psi_{j_\nu k_\nu}||}_{B_{22}^{s - \frac{1}{p} + \frac{1}{2}} (\mathbb{R})} =
        E_k \big(f; B_{22}^{s - \frac{1}{p} + \frac{1}{2}} (\mathbb{R}) \big),
        k = 1,2,...
    \end{equation}
\end{theorem}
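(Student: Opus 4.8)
The plan is to recognize equation (\ref{eq:45}) as a statement that the partial sums of the decreasing-rearranged wavelet expansion realize the best $k$-term approximation in the Hilbert space $B_{22}^{\sigma}(\mathbb{R})$ with $\sigma = s - \frac{1}{p} + \frac{1}{2}$. The key structural fact, supplied by the setup, is that $\{\varphi_{j_0 k_0}, \psi_{jk}\}$ is an \emph{unconditional Riesz basis} that is in fact \emph{orthonormal} in $L_2$; under the lifting/renormalization by the factors $2^{j(\tau + \frac{1}{2})}$ appearing in step b2), this same family becomes orthonormal (up to normalization) in the inner product of $B_{22}^{\sigma}(\Omega)$. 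First I would verify this orthogonality claim explicitly: for $p=q=2$ the quasinorm (\ref{eq:6}) reduces to a weighted $\ell_2$ norm on the coefficients, with the weight on $\psi_{jk}$ being precisely $2^{2j[\sigma + (\frac{1}{2} - \frac{1}{p})]} = 2^{2j(\tau + \frac{1}{2})}$ once we substitute $\sigma - \frac{1}{\rho} = s - \frac{1}{p} = \tau$ and $\rho = 2$. Thus in $B_{22}^{\sigma}$ the system $\{2^{j(\tau+\frac{1}{2})}\psi_{jk}\}$ is an orthonormal system and $b_\nu = 2^{j_\nu(\tau+\frac{1}{2})}|\beta_{j_\nu k_\nu}|$ are exactly the absolute values of the coefficients of $f - f_{j_0}$ in this orthonormal expansion.

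Next I would invoke the classical characterization of best $k$-term approximation in Hilbert space by an orthonormal system (the greedy/thresholding optimality): if $g = \sum_\nu c_\nu e_\nu$ with $\{e_\nu\}$ orthonormal, then among all $k$-dimensional subspaces $S_k$ spanned by arbitrary vectors, the infimum $\inf_{S_k}\inf_{s\in S_k}\|g - s\|$ over subspaces contained in the span is attained by keeping the $k$ coefficients of largest modulus. This is a standard consequence of Parseval: $\|g - s\|^2 = \sum |c_\nu - \langle s, e_\nu\rangle|^2$ is minimized, for any fixed $k$-term support, by matching those $k$ coefficients exactly and discarding the rest, and the residual $\sum_{\text{discarded}} |c_\nu|^2$ is minimized by discarding the smallest coefficients — which is exactly the decreasing-rearrangement selection in (\ref{eq:35}). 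Applying this with $g = f - f_{j_0}$, $e_\nu = 2^{j_\nu(\tau+\frac{1}{2})}\psi_{j_\nu k_\nu}$, $c_\nu = b_\nu$, the best $k$-term approximant is $\sum_{\nu=1}^{k} \beta_{j_\nu k_\nu}\psi_{j_\nu k_\nu}$, which gives the left-hand side of (\ref{eq:45}) equal to $E_k$ as defined in (\ref{eq:44}), once I confirm that subtracting $f_{j_0}$ is the correct reduction to the orthocomplement $O_{j_0\sigma}$ of (\ref{eq:41}).

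The main obstacle I expect is making the orthocomplement/projection step fully rigorous rather than formal. The definition (\ref{eq:44}) takes a double infimum over vectors $s \in S_k$ and over subspaces $S_k \subset O_{j_0\sigma}$; I must show that restricting the admissible $S_k$ to lie inside $O_{j_0\sigma}$ (the $B_{22}^\sigma$-orthocomplement of $V_{j_0}$) does not weaken the bound, i.e. that $f_{j_0}$ is exactly the $B_{22}^\sigma$-orthogonal projection of $f$ onto $V_{j_0}$, so that the approximation of $f - f_{j_0}$ by elements of $O_{j_0\sigma}$ decouples cleanly from the $V_{j_0}$-part via Pythagoras. This hinges on the orthogonality (in the $B_{22}^\sigma$ inner product, not merely in $L_2$) between $\{\varphi_{j_0 k}\}$ and $\{\psi_{jk}\}_{j \geq j_0}$; I would establish this from the diagonal (weighted-$\ell_2$) form of the norm (\ref{eq:6}) at $p=q=2$, which shows the coarse-scale and detail coefficients sit in orthogonal coordinate blocks. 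The remaining care is purely bookkeeping: ensuring the rearranged index sequence $\{(j_\nu,k_\nu)\}$ is well defined (ties broken arbitrarily, as in b2) and that the finiteness in (\ref{eq:34}) guarantees each level's contribution is summable so that Parseval applies with convergence in $B_{22}^\sigma$ as already granted by Theorem \ref{th:4}.
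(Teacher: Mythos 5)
Your proposal is correct and takes essentially the same approach as the paper's own proof: both rest on the fact that for $p=q=2$ the quasinorm (\ref{eq:6}) turns $B_{22}^{\sigma}$, $\sigma=\tau+\frac{1}{2}$, into a weighted $\ell_2$ space in which the wavelet system remains orthogonal with $\|\psi_{jk}\|_{B_{22}^{\sigma}}=2^{j\sigma}$, so that the decreasing-rearrangement selection (\ref{eq:35}) is precisely greedy (largest normalized coefficient) selection in an orthogonal expansion and hence, by Parseval/Pythagoras, realizes the best $k$-term error, with your explicit treatment of the $V_{j_0}$-projection making rigorous what the paper calls the ``remarkable geometric fact'' at the end of its proof. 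The only slip, harmless to the argument, is the normalization direction: the system orthonormal in $B_{22}^{\sigma}$ is $\{2^{-j(\tau+\frac{1}{2})}\psi_{jk}\}$, the coefficients of $f-f_{j_0}$ in it then being $2^{j(\tau+\frac{1}{2})}\beta_{jk}$ (your $b_\nu$ up to sign), rather than $\{2^{j(\tau+\frac{1}{2})}\psi_{jk}\}$.
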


The result (\ref{eq:45}) holds, \emph{mutatis mutandis}, also for the case of boundary-corrected wavelets and $\Omega = [a, b]$, $-\infty < a < b < +\infty$.

Theorem \ref{th:5} shows that using a sufficiently broad (i.e., satisfying (\ref{eq:14})) single-layered WBNN for learning curves with Besov regularity while using the current activation method results in a learning strategy which is \emph{optimal} in the following two senses.

\begin{enumerate}[label=\arabic*.]
    \item \emph{Fastest learning} -- using a fixed number of active neurons, the learned function is closest possible to the original, with the distance measuring the closeness being in terms of ${||\cdot||}_{B_{22}^\sigma}$, $0 \leq \sigma < r$, that is, taking into account not only position in space $(\sigma = 0)$ but also fractional derivatives up to order $r$.
    \item \emph{Maximal compression} -- for a benchmark determined by a fixed distance between the target function and its learned version measured in terms of ${||\cdot||}_{B_{22}^\sigma}$, the benchmark result is achieved with the fewest possible activated neurons.
\end{enumerate}

In the remaining part of the present study, we shall illustrate graphically aspects 1. and 2. of the optimality of the learning process with WBNNs when the current activation method is being used.

\section{Representative model examples}\label{s6}

\begin{figure}[h]
    \centering
    \begin{subfigure}[b]{0.495\textwidth}
        \centering
        \includegraphics[width=\textwidth]{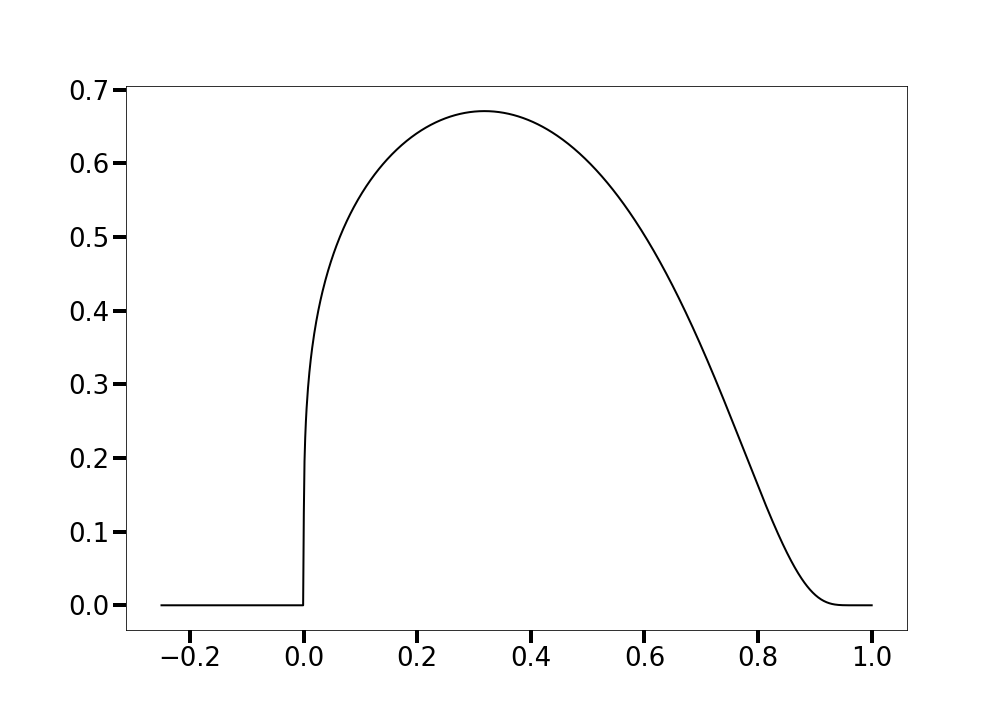}
        \caption{"$\lambda$-tear"}
        \label{fig:lambdatear}
    \end{subfigure}
    \hfill
    \begin{subfigure}[b]{0.495\textwidth}
        \centering
        \includegraphics[width=\textwidth]{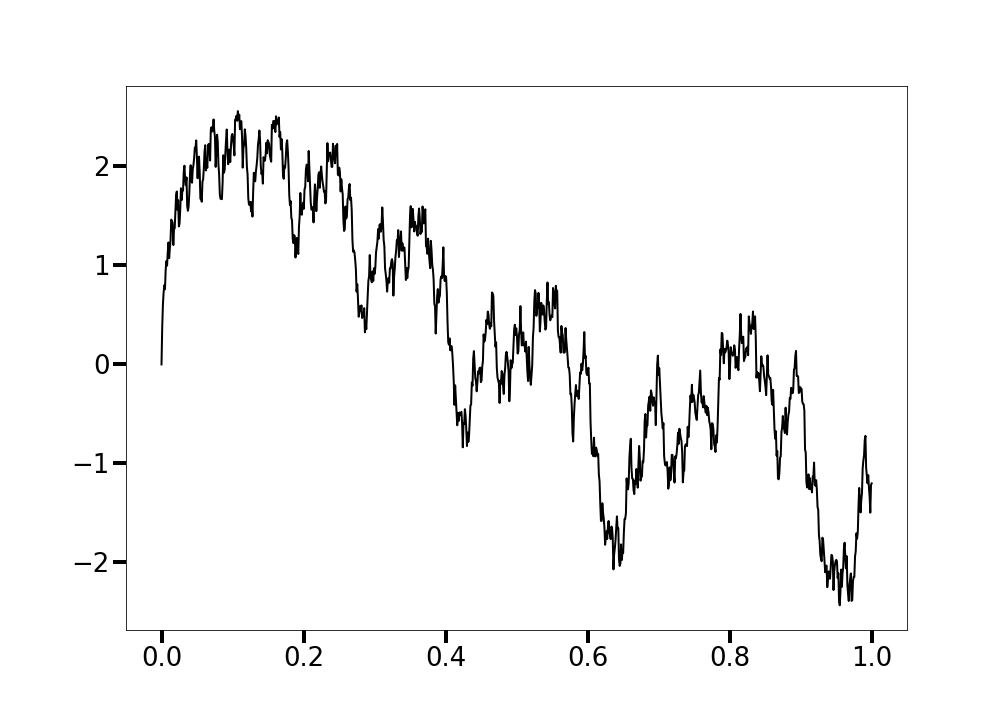}
        \caption{Weierstrass type fractal}
        \label{fig:weierstrass}
    \end{subfigure}
    \caption{Piecewise smooth type vs fractal type manifolds}
    \label{fig:main}
\end{figure}

We shall consider in detail graphical visualization related to two model examples which are representative in three important aspects.

\begin{enumerate}[label=\arabic*.]
    \item The first one (the "$\lambda$-tear") is a typical manifold of \emph{piecewise smooth type}, while the second one (Weierstrass-type curve) is a typical manifold of \emph{fractal type}.
    \item For both of them, their \emph{exact} metric power, metric logarithmic and smoothness index of their Besov regularity is known.
    \item The exact metric power, metric logarithmic and smoothness index of Besov regularity can be selected to be the \emph{same} for both examples, which allows for insightful graphical comparisons.
\end{enumerate}

\emph{Example 1} (see Fig. \ref{fig:lambdatear}). \emph{The "$\lambda$-tear"}

\begin{equation}\label{eq:46}
    f_1(x) = 
    \begin{cases}
      x_+^\lambda \exp (-\frac{x^2}{1-x^2}) & x \in (0,1) \\
      0 & x \in [-1,0] \cup [1,2] \\
    \end{cases}  
\end{equation}

where $\lambda \in (0,1)$.
This function is analytic for $ x \in [-1,0) \cup (0,1) \cup (1,2]$; it is $C^\infty$, but not analytic at $x = 1$; at $x = 0$ it is only $C^0$.
Its exact Besov regularity for $p: 1 \leq p \leq \infty$ is $f \in B_{p \infty}^{\lambda + \frac{1}{p}} (\Omega)$, where $\Omega = \mathbb{R}$ or $\Omega = [-1,2]$ \citep[Proposition 2.4.2]{brenner1975}, see also \citep[Section 7, Example 1]{dechevsky1999}.

\emph{Example 2} (see Fig. \ref{fig:weierstrass}). \emph{Weierstrass-type curve}

\begin{equation}\label{eq:47}
    f_2(x) = \sum\limits_{k=0}^\infty 1.5^{- \tau k} \sin (1.5^k \times 5x), \;\;\; x \in \mathbb{R}
\end{equation}

where $\tau \in (0,2)$.
For the purpose of comparing with Example 1, we shall consider only the restriction of $f_2$ onto $\supp f_1$, i.e., for $x \in [0,1]$.

The graph of $f_2$ is a typical \emph{self-similar monofractal} with constant local H{\"o}lder index $\tau$ and constant local fractal dimension $2 - \tau$ which is also its global fractal dimension on $[0,1]$.
When considering $ \Omega = \mathbb{R}$, for any compactly supported $\chi \in C^\infty (\mathbb{R})$ such that $[0,1] \subset \supp \chi$ and $\chi \equiv 1$ on $[0,1]$, $\chi \cdot f_2 \in B_{p \infty}^{\tau} (\mathbb{R})$ \citep[Proposition 2.4.1 for the imaginary part $G_t$ in local notations, with additional rescaling]{brenner1975}, see also \citep[Section 7, Example 2]{dechevsky1999}.
For the restriction $\overline{f}_2 = f_2 \bigg|_{\supp f_1}$ in the case of boundary-corrected wavelets with $\Omega=[0,1]$, we have directly $\overline{f}_2 \in B_{p \infty}^\tau ([0,1])$.
For every $p : 1 \leq p \leq \infty$ this Besov regularity of $\overline{f}_2$ is \emph{exact}.
Clearly, when $\tau = \lambda + \frac{1}{p}$, $f_1$ and $f_2$ have exactly the same exact Besov regularity.

Besides the detailed comparative study of Examples 1 and 2, we shall study some additional geometric aspects of the learning process on two other model examples: "Double chirp" and "Sinusoidal density".

\begin{figure}
    \centering
    \begin{subfigure}[b]{0.495\textwidth}
        \centering
        \includegraphics[width=\textwidth]{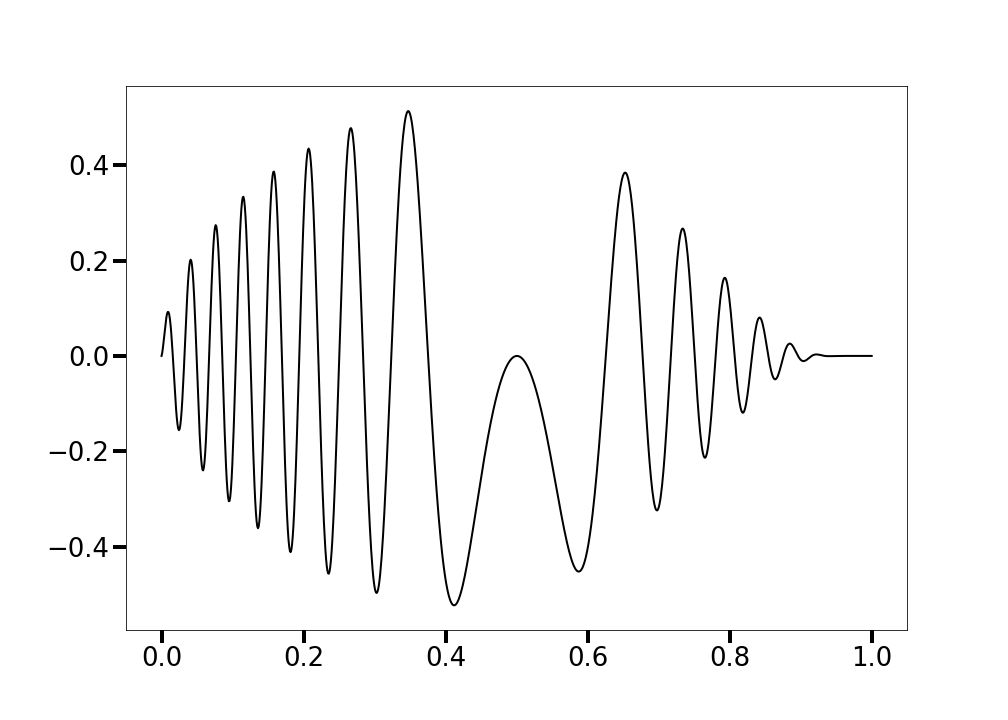}
        \caption{"Double chirp"}
        \label{fig:dchirp}
    \end{subfigure}
    \hfill
    \begin{subfigure}[b]{0.495\textwidth}
        \centering
        \includegraphics[width=\textwidth]{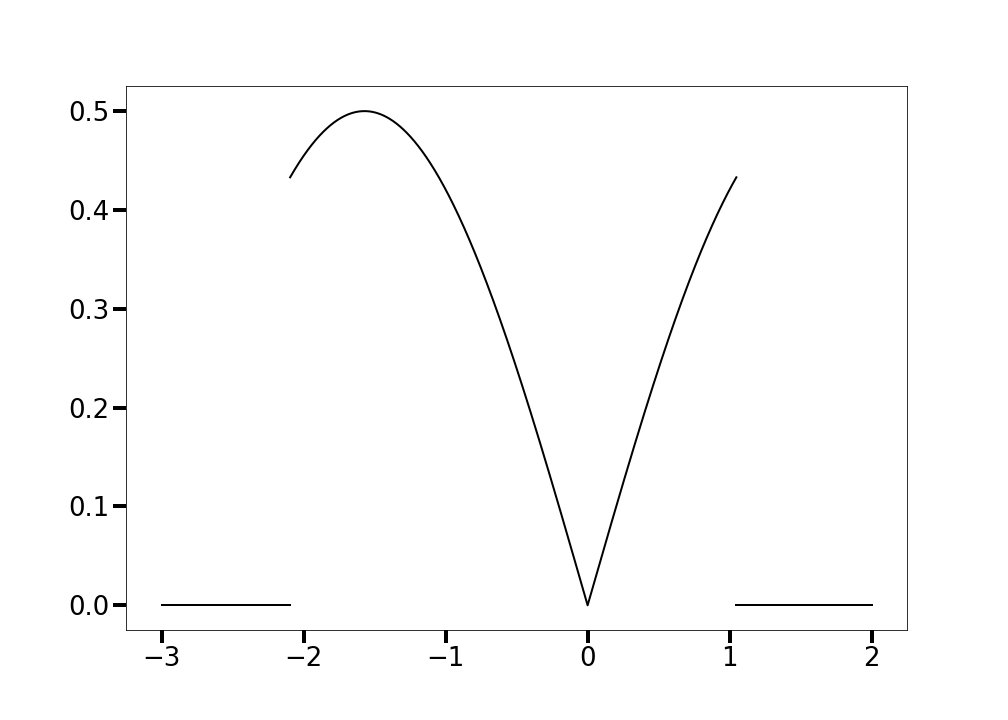}
        \caption{Sinusoidal density}
        \label{fig:sdens}
    \end{subfigure}
    \caption{Curves used to study Besov regularity}
    \label{fig:subsequent}
\end{figure}

\emph{Example 3} (see Fig. \ref{fig:dchirp}). \emph{"Double chirp"}

\begin{equation}\label{eq:48}
    f_3 (x) = \sqrt[4]{x} \exp (- \frac{x^2}{1 -  x^2} ) \sin [64 \pi x (1 - x)],
    x \in [0,1],
\end{equation}

Compare also \citep[Section 7, Example 3]{dechevsky1999}.
The graph of $f_3$ is very spatially inhomogeneous, containing at the endpoints 0 and 1 two chirps of a very different nature.
$f_3$ in (\ref{eq:48}) is a product of "$\lambda$-tear" for $\lambda = \frac{1}{2}$ and $C^\infty$-smooth function, so its Besov regularity is exactly the same as the Besov regularity of a "$\lambda$-tear" (Example 1) for $\lambda = \frac{1}{2}$.
Of special interest is to compare how the optimal learning algorithm deals in the spatially different parts of the graph for large, moderate and small samples, or for small, medium or high compression percentage.

\emph{Example 4} (see Fig. \ref{fig:sdens}). \emph{"Sinusoidal density"}

\begin{equation}\label{eq:49}
    f_4 (x) = 
    \begin{cases}
        \frac{1}{2} | \sin x | & x \in [-\frac{2 \pi}{3}, \frac{\pi}{3}] \\
        0 & x \in ( - \infty, - \frac{2 \pi}{3} ) \cup (\frac{\pi}{3} + \infty) \\
    \end{cases}
\end{equation}

$f_4$ is analytic on $( - \infty, -\frac{2 \pi}{3} ) \cup (- \frac{2 \pi}{3}, 0) \cup ( 0, \frac{\pi}{3} ) \cup ( \frac{\pi}{3}, + \infty )$ ;
$f_4$ is $C^0$ at $x = 0$; $f_4$ is discontinuous at $x = - \frac{2 \pi}{3}$ and $x = \frac{\pi}{3}$.

Because of the presence of the two jumps, $f_4 \in B_{p \infty}^{\frac{1}{p}} (\Omega)$, $1 \leq p \leq \infty$, $\Omega = \mathbb{R}$ or, for boundary corrected wavelets, $\Omega = [-\pi, \pi]$.
This result about the Besov regularity of $f_4$ follows from the result about Besov regularity of the Heaviside step function which is present in implicit form in the embeddings in \citep[Appendix B12b, item (iv)]{dechevsky1999}.
The function exhibits considerable spatial inhomogeneity in the neighbourhoods of the three points of singularity $(x = - \frac{2 \pi}{3}$, $x = 0$ and $x = \frac{\pi}{3}$).
Of particular interest is to compare the performance of the optimal learning algorithm in a neighbourhood of each of the three singularities. The comparison of the performance between the two jump-singularities at $x = - \frac{2 \pi}{3}$ and $x = \frac{\pi}{3}$ should include also comparative study of the local Gibbs phenomenon.

The sample sizes in Figures 1-13 are $N=2^{10}$ or less.


\begin{figure}
    \centering
    \begin{subfigure}[b]{0.7\textwidth}
        \centering
        \includegraphics[width=\textwidth]{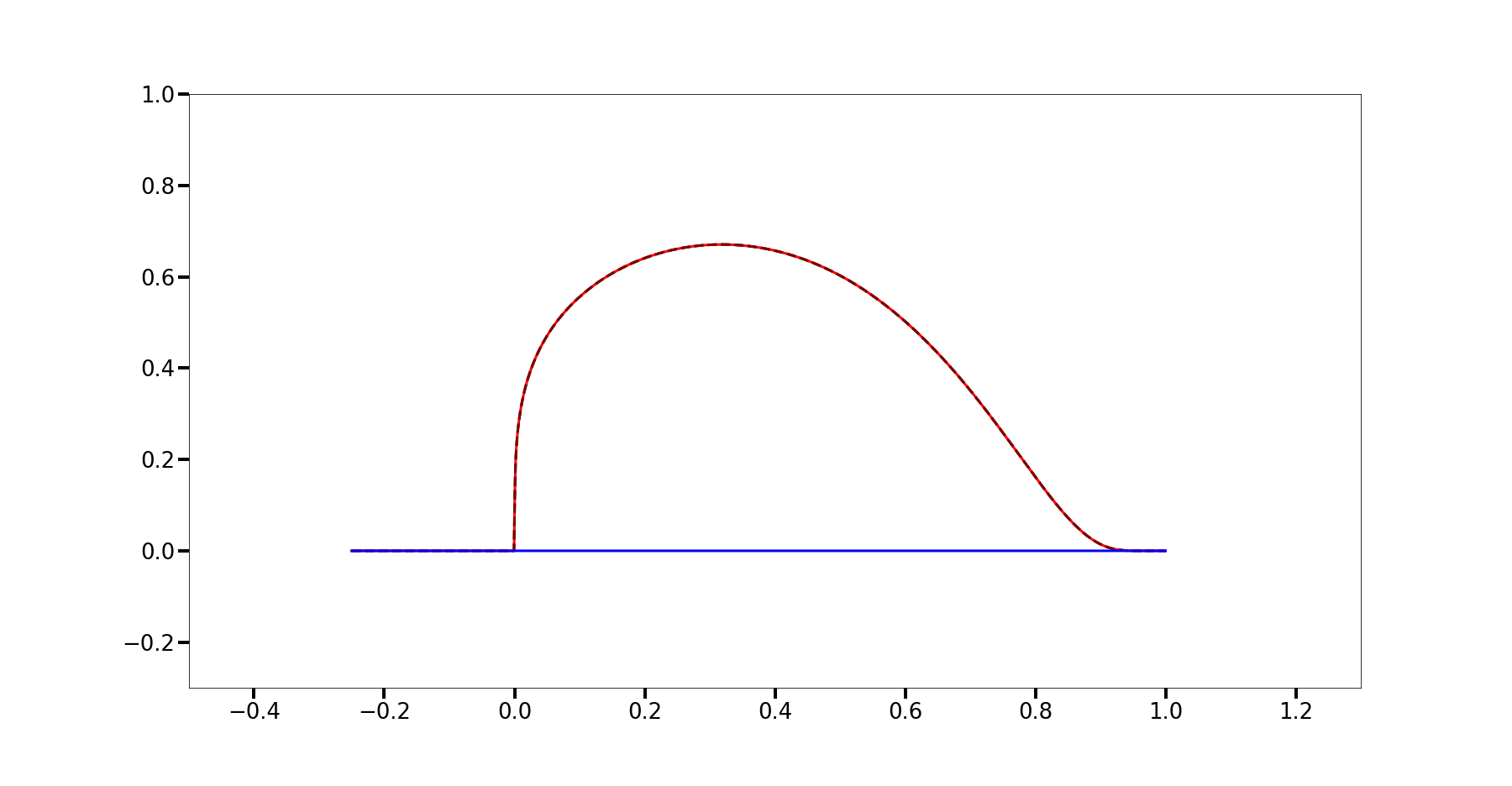}
        \caption{85 \% compression}
        \label{fig:perc_and_error_85_perc_lambdatear}
    \end{subfigure}

    \begin{subfigure}[b]{0.7\textwidth}
        \centering
        \includegraphics[width=\textwidth]{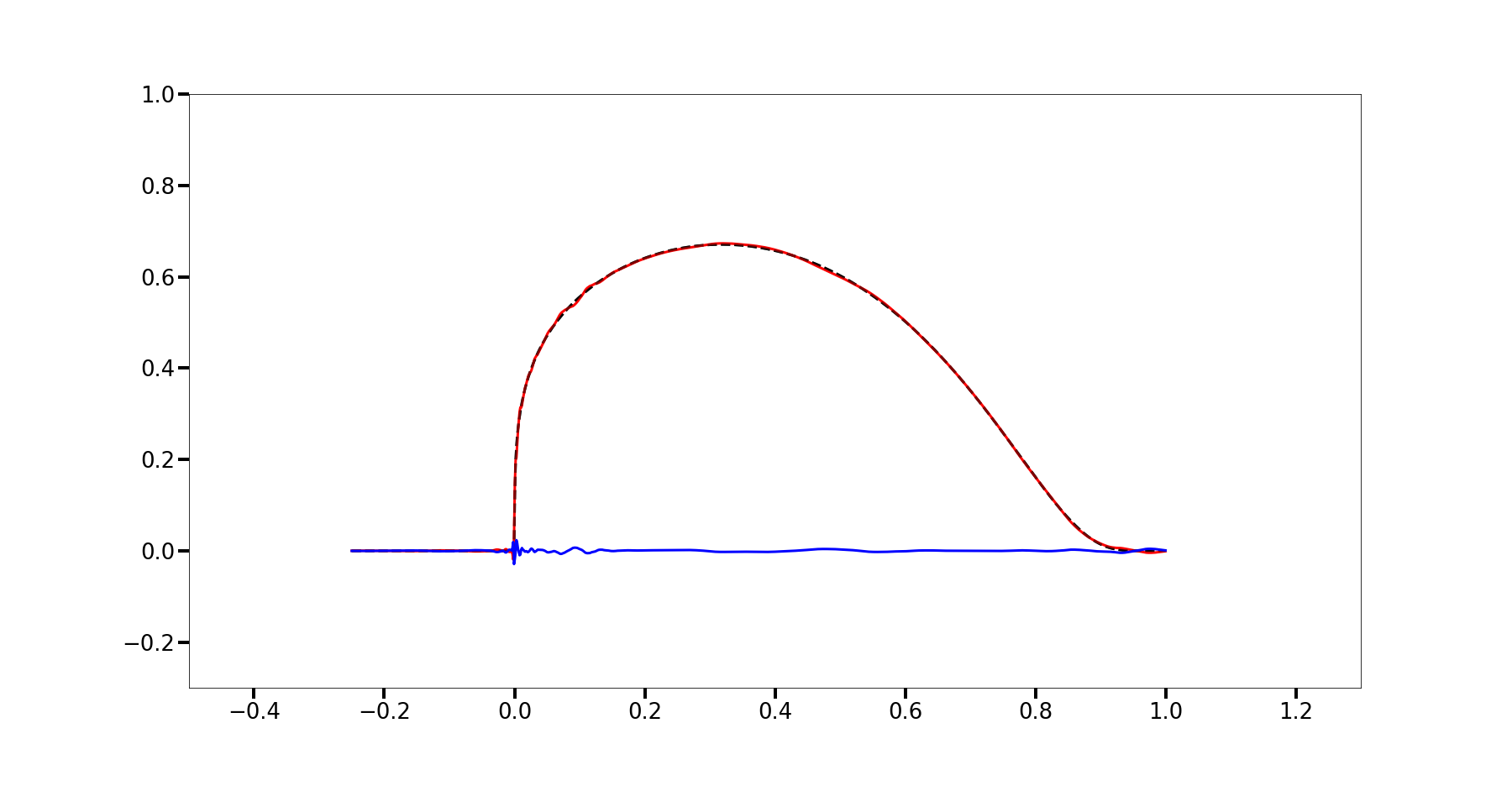}
        \caption{98 \% compression}
        \label{fig:perc_and_error_98_perc_lambdatear}
    \end{subfigure}

    \begin{subfigure}[b]{0.7\textwidth}
        \centering
        \includegraphics[width=\textwidth]{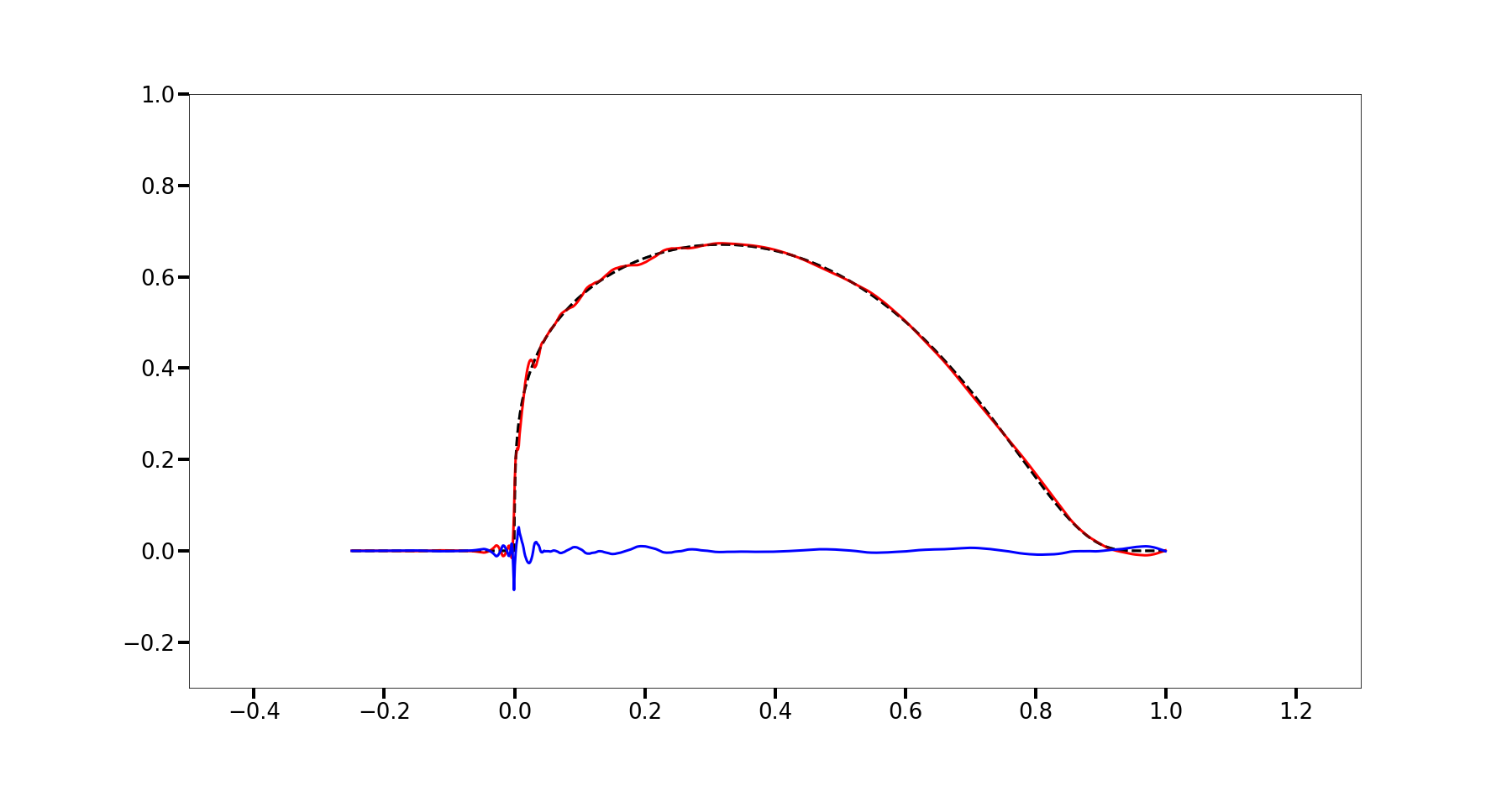}
        \caption{99 \% compression}
        \label{fig:perc_and_error_99_perc_lambdatear}
    \end{subfigure}
    \caption{Target function (dashed black), learned function (red) and the error between the two (blue) for the "$\lambda$-tear" under high compression.}
    \label{fig:lmtear:subsequent}
\end{figure}

\begin{figure}
    \centering
    \begin{subfigure}[b]{0.7\textwidth}
        \centering
        \includegraphics[width=\textwidth]{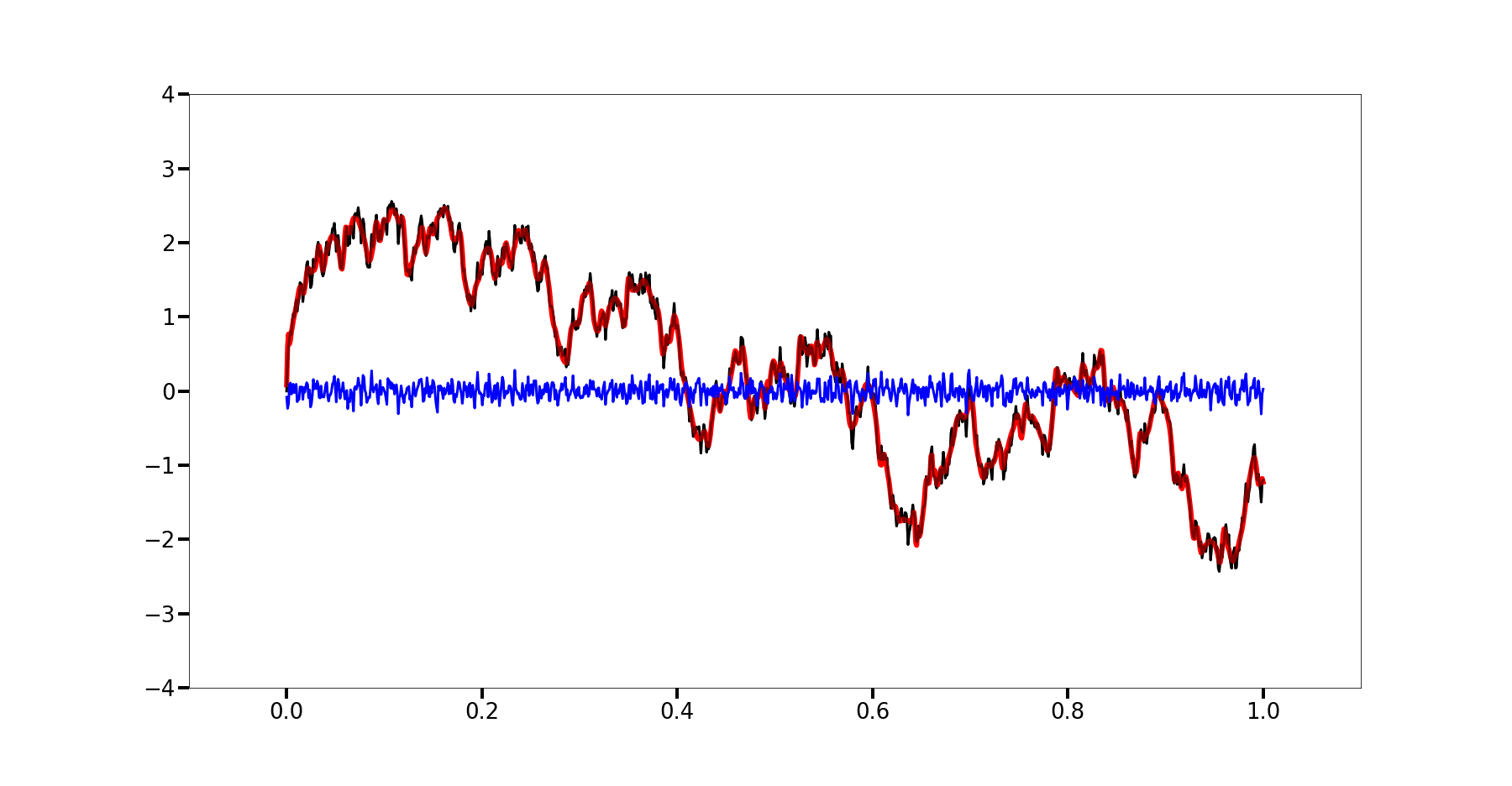}
        \caption{85 \% compression}
        \label{fig:perc_and_error_85_perc_fractal}
    \end{subfigure}

    \begin{subfigure}[b]{0.7\textwidth}
        \centering
        \includegraphics[width=\textwidth]{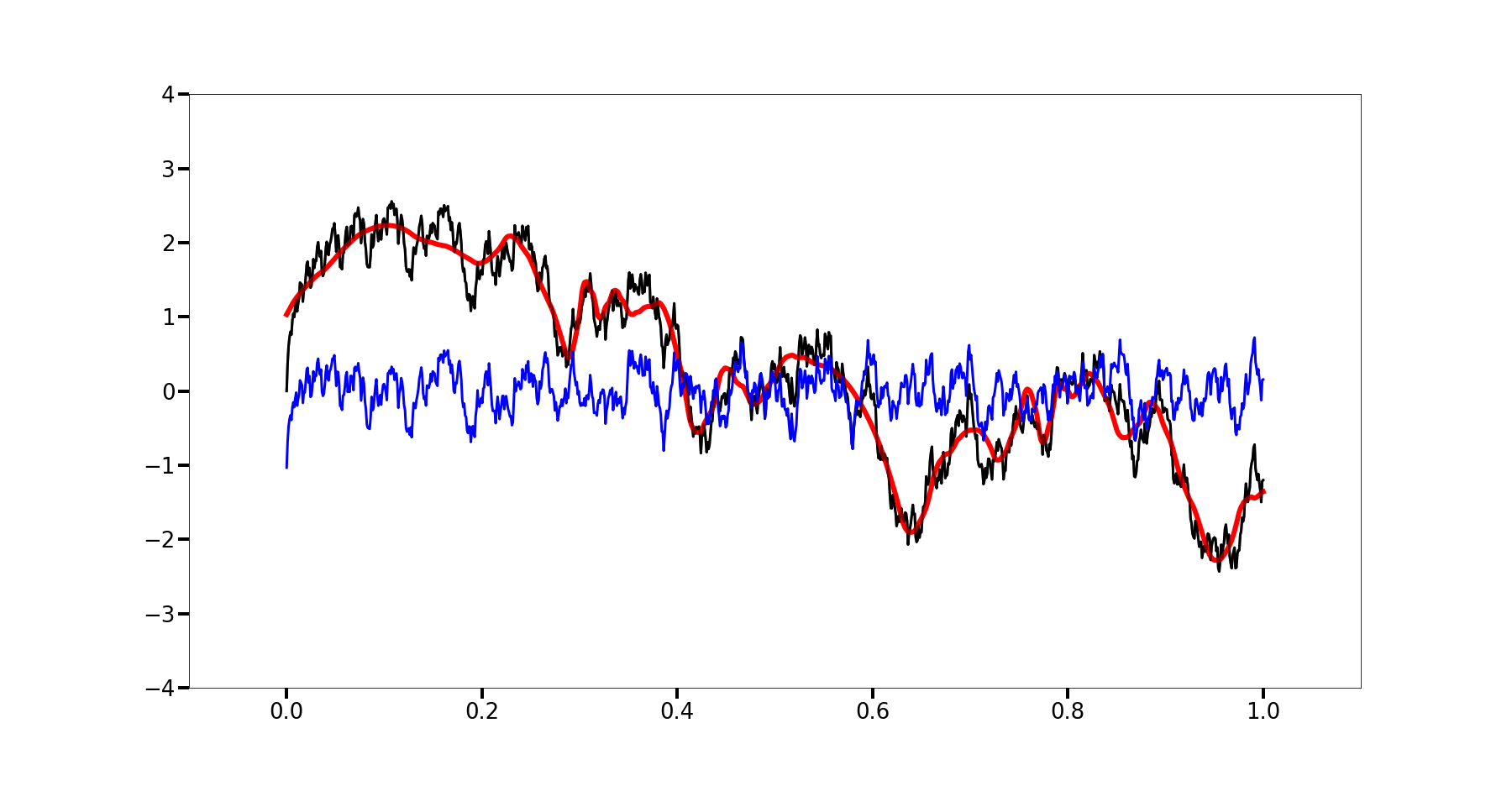}
        \caption{98 \% compression}
        \label{fig:perc_and_error_98_perc_fractal}
    \end{subfigure}

    \begin{subfigure}[b]{0.7\textwidth}
        \centering
        \includegraphics[width=\textwidth]{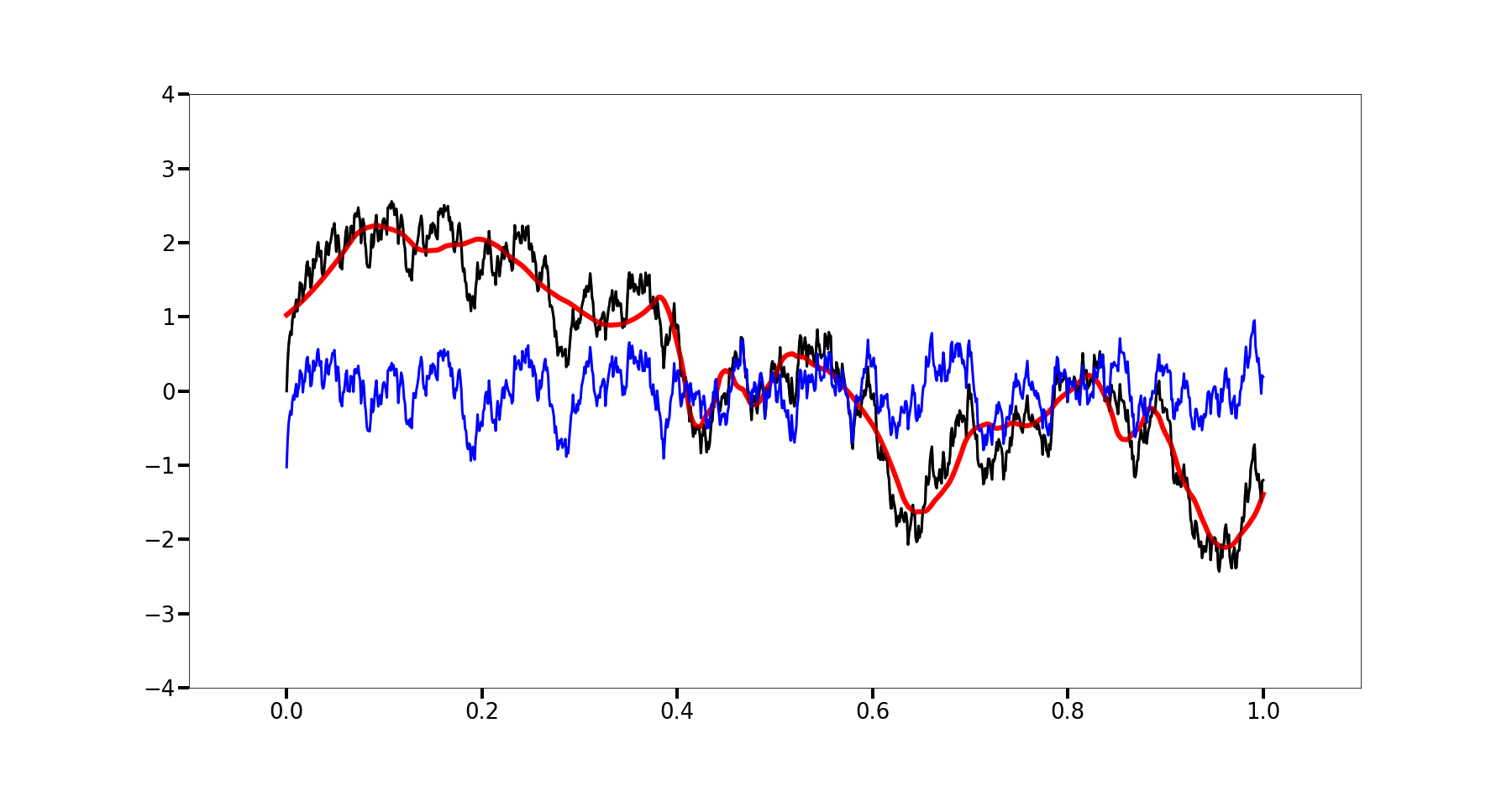}
        \caption{99 \% compression}
        \label{fig:perc_and_error_99_perc_fractal}
    \end{subfigure}
    \caption{Target function (dashed black), learned function (red) and the error between the two (blue) for the "Weierstrass function" under high compression.}
    \label{fig:fractal:subsequent}
\end{figure}

\begin{figure}
    \centering
    \begin{subfigure}[b]{0.7\textwidth}
        \centering
        \includegraphics[width=\textwidth]{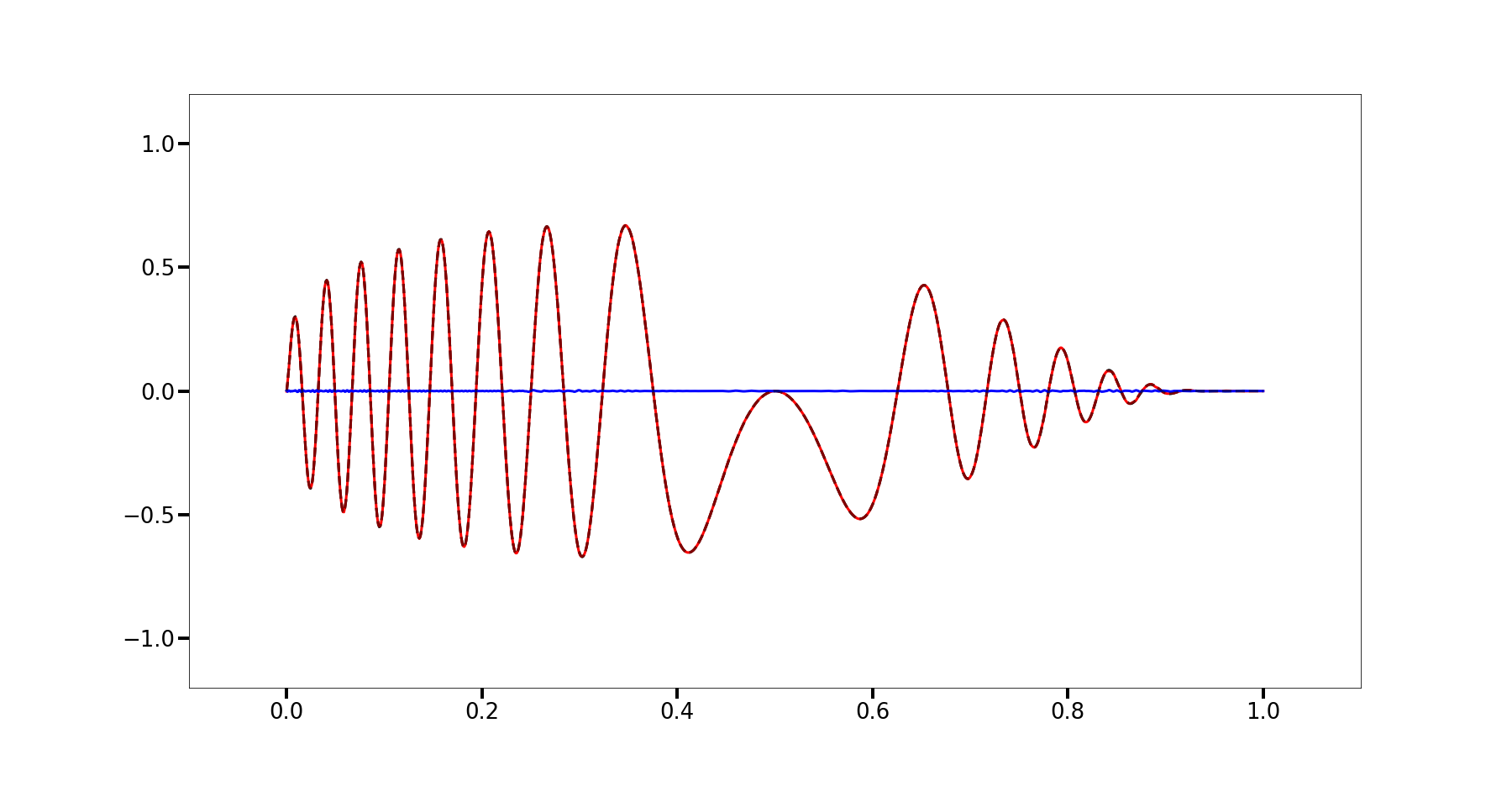}
        \caption{85 \% compression}
        \label{fig:perc_and_error_85_perc_dchirp}
    \end{subfigure}

    \begin{subfigure}[b]{0.7\textwidth}
        \centering
        \includegraphics[width=\textwidth]{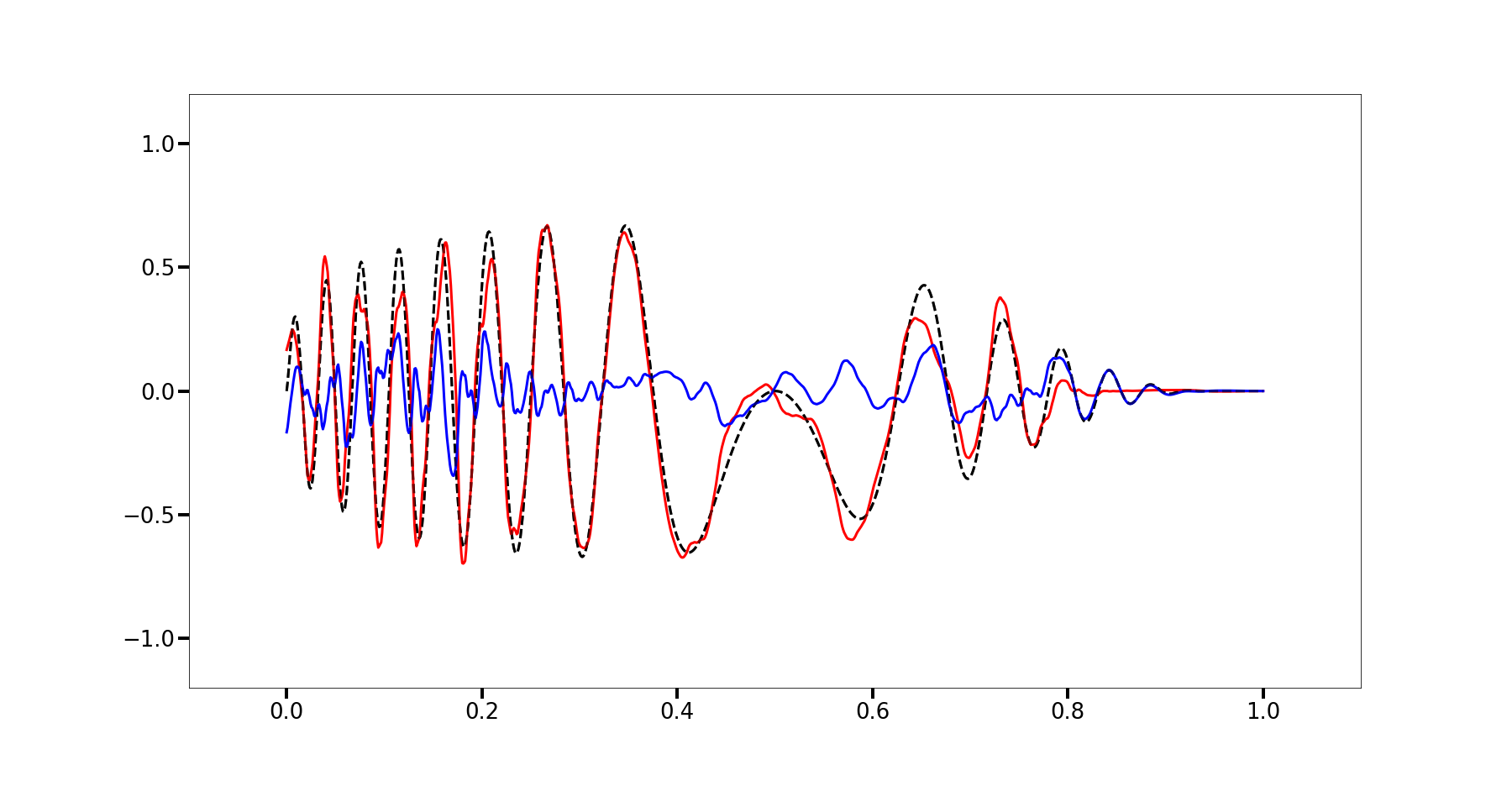}
        \caption{98 \% compression}
        \label{fig:perc_and_error_98_perc_dchirp}
    \end{subfigure}

    \begin{subfigure}[b]{0.7\textwidth}
        \centering
        \includegraphics[width=\textwidth]{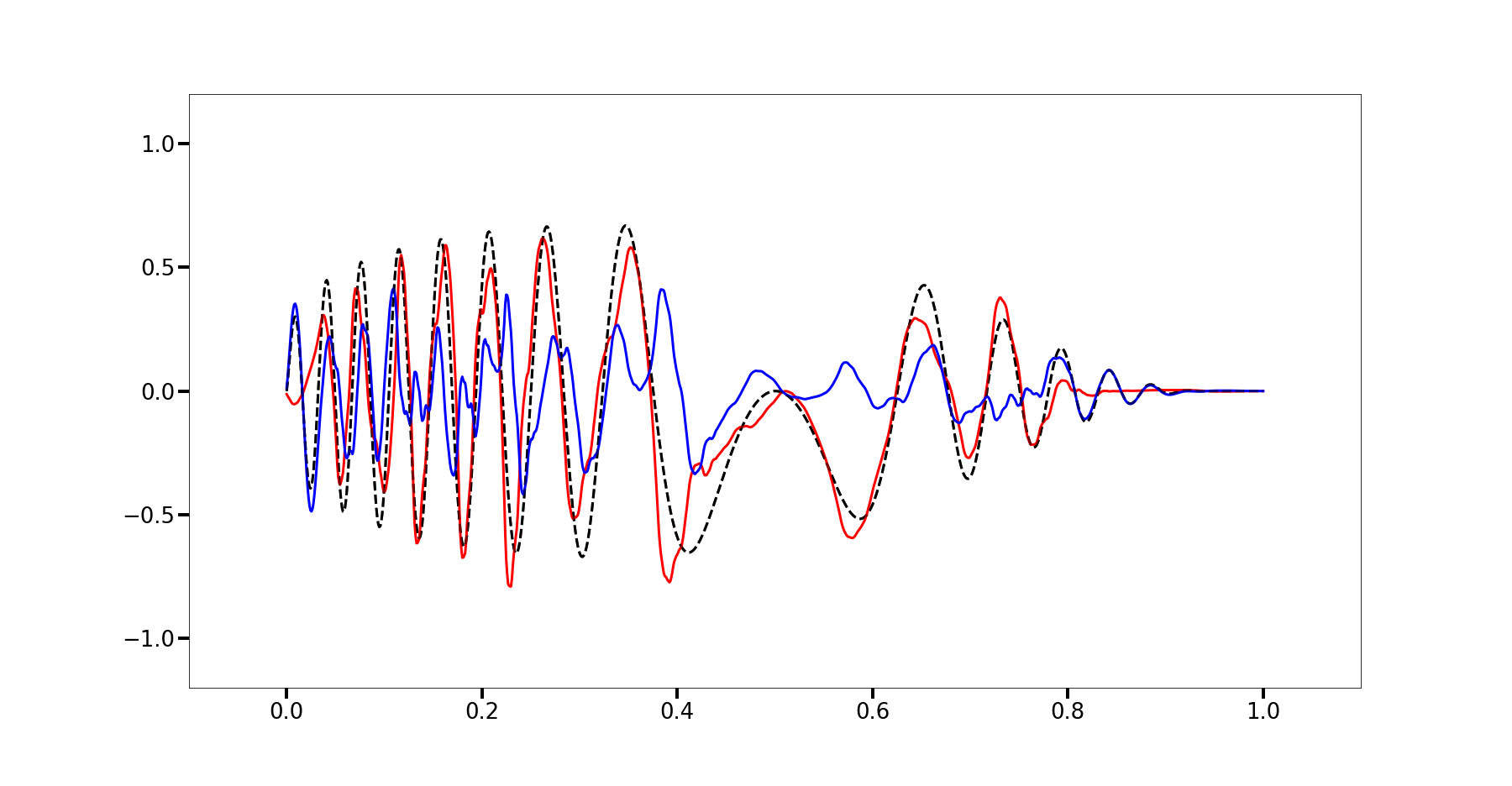}
        \caption{99 \% compression}
        \label{fig:perc_and_error_99_perc_dchirp}
    \end{subfigure}
    \caption{Target function (dashed black), learned function (red) and the error between the two (blue) for "the double chirp" under high compression.}
    \label{fig:dchirp:subsequent}
\end{figure}

\begin{figure}
    \centering
    \begin{subfigure}[b]{0.7\textwidth}
        \centering
        \includegraphics[width=\textwidth]{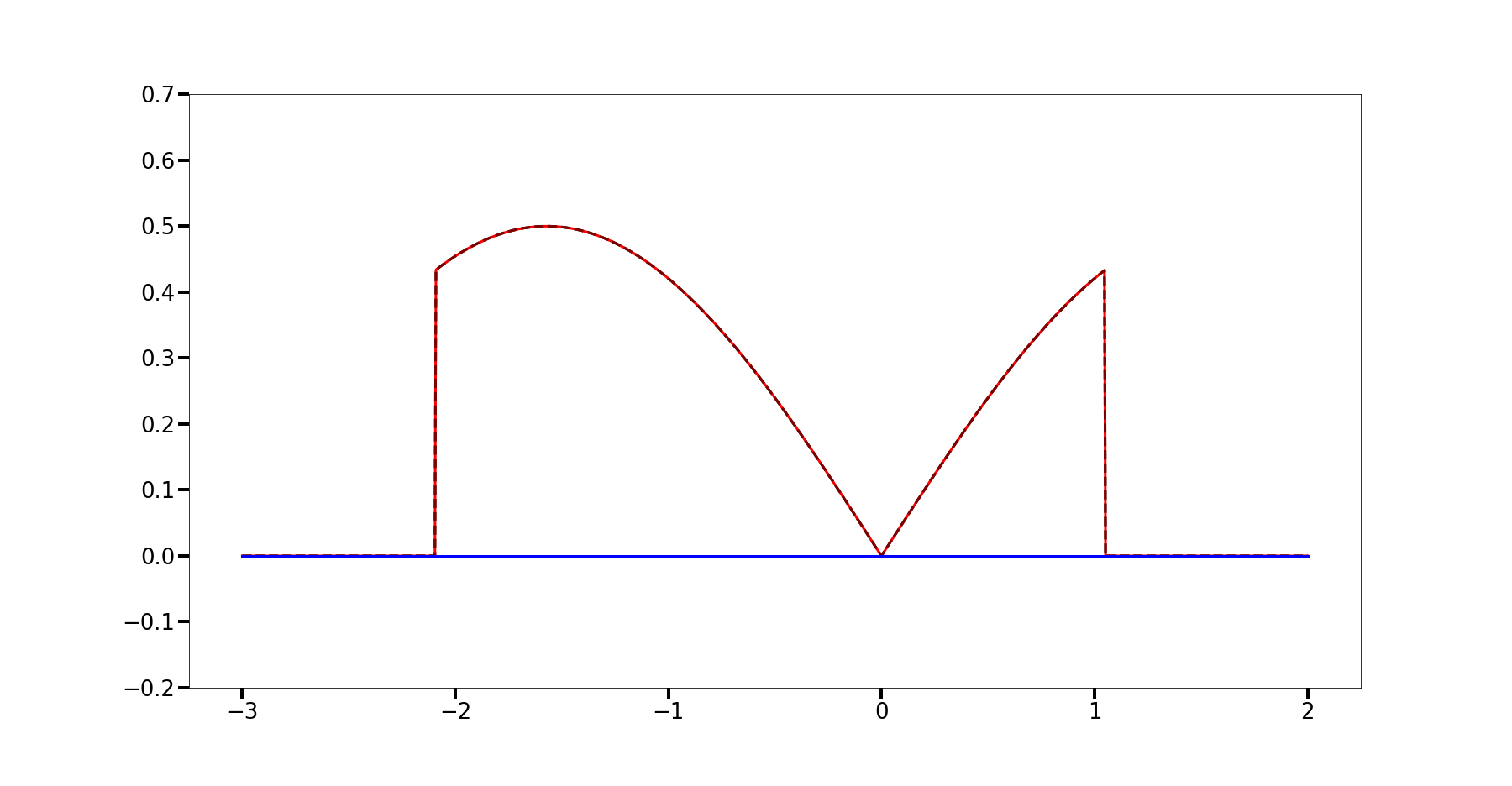}
        \caption{85 \% compression}
        \label{fig:perc_and_error_85_perc_sinusdensity}
    \end{subfigure}

    \begin{subfigure}[b]{0.7\textwidth}
        \centering
        \includegraphics[width=\textwidth]{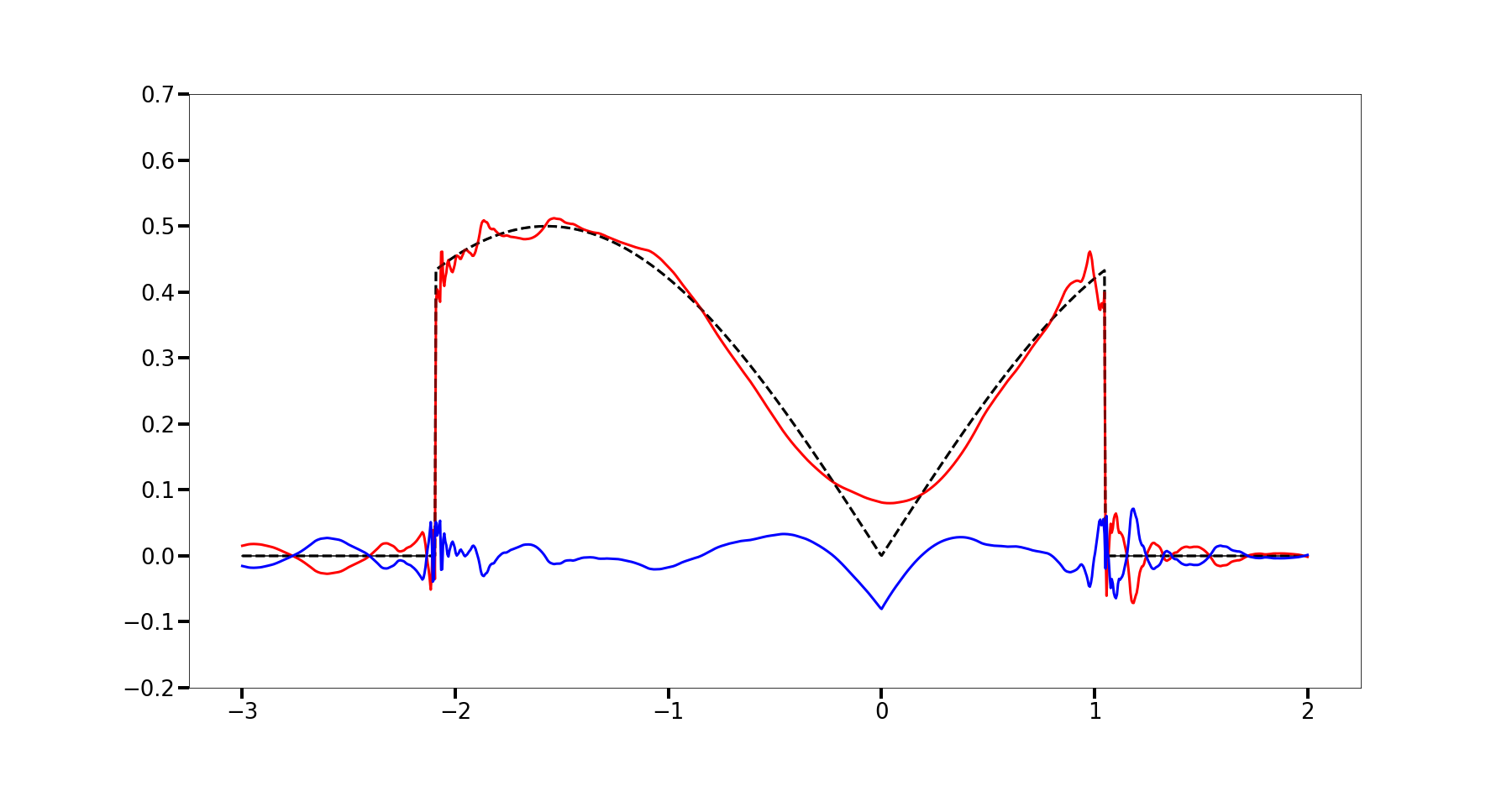}
        \caption{98 \% compression}
        \label{fig:perc_and_error_98_perc_sinusdensity}
    \end{subfigure}

    \begin{subfigure}[b]{0.7\textwidth}
        \centering
        \includegraphics[width=\textwidth]{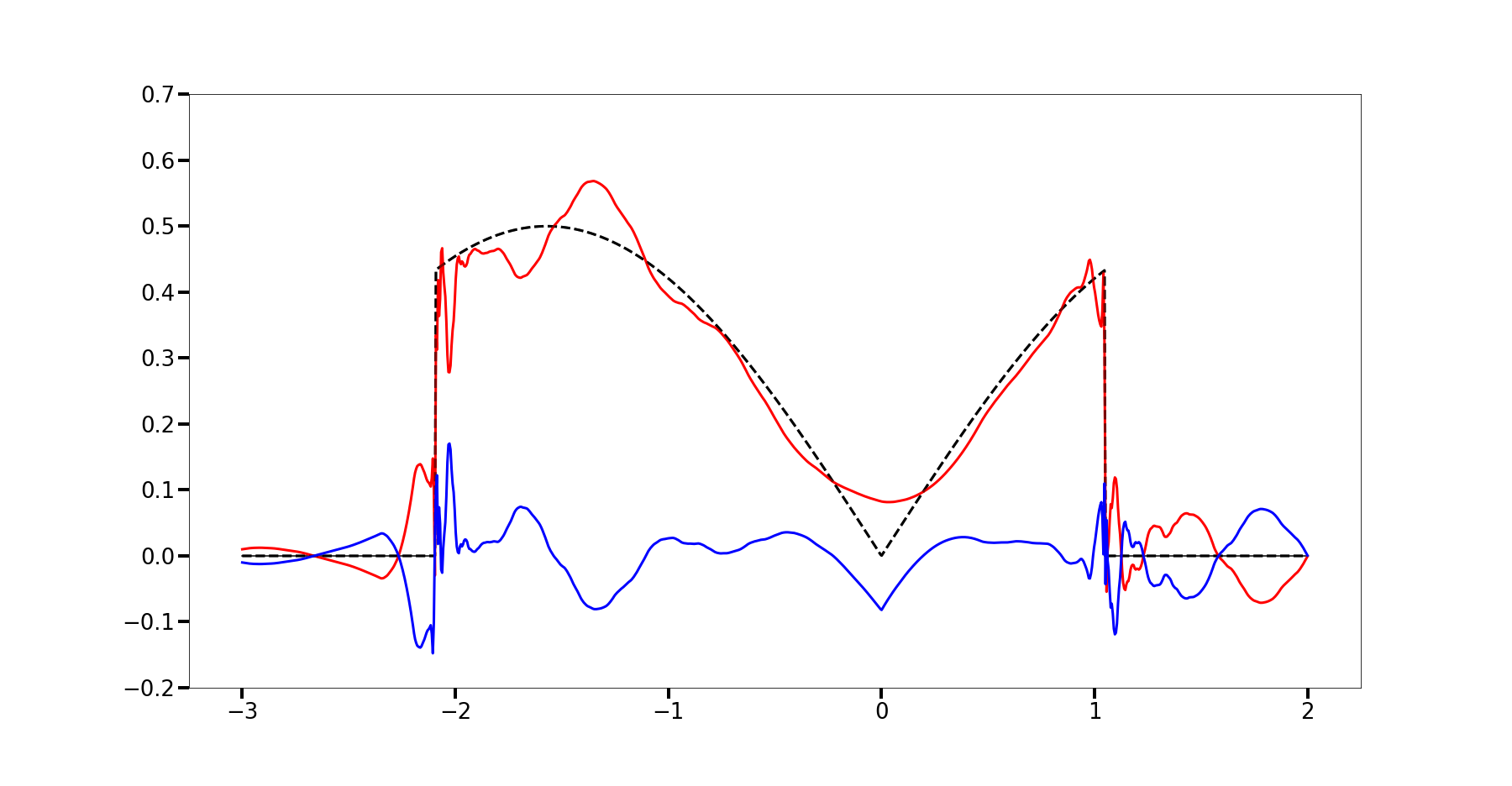}
        \caption{99 \% compression}
        \label{fig:perc_and_error_99_perc_sinusdensity}
    \end{subfigure}
    \caption{Target function (dashed black), learned function (red) and the error between the two (blue) for "the sinusoidal density" under high compression.}
    \label{fig:sinusdensity:subsequent}
\end{figure}


First we focus on faster learning and maximal compression, according to items 1 and 2 in Section \ref{s5}.
The comparative graphical analysis of Figures \ref{fig:lmtear:subsequent}--\ref{fig:sinusdensity:subsequent} leads to the following conclusions.

\begin{enumerate}[label=\arabic*.]
    \item Examples 1, 3, 4 are of piecewise smooth type, while Example 2 is of the fractal type.
    \item The decreasing rearrangement activation allows very fast learning combined with very high compression rate for the piecewise smooth curves: the quality of learning is superb at $85\%$ compression. In comparison, retaining such high quality of learning for the fractal curve in Figure 2 is possible only at compression rate up to $3-4\%$. (See item (a) in each of Figures \ref{fig:lmtear:subsequent}--\ref{fig:sinusdensity:subsequent}.)
    \item Approximation of the target function by the learned one is very good even for superhigh levels of compression ($98-99\%$). This also indicates that if the large-to-medium sample size $N=2^{10}$ be reduced to moderate or even small sizes (cf. Section 1), the rate of learning can be expected to be quite good, while the compression rates will decrease, but remain still quite good.
    \item The effect of subjecting the fractal-type curve to high or superhigh compression rates is that the learned curve get smoothed out to a piecewise smooth (few isolated singularities, similar to Examples 1,3 and 4) or even smooth -- no singularities at all. In \citep{llhm2022} and \citep{llhm2022_1} we shall show that if the fractality of the manifold is due to noise, then, learning the manifold with WBNNs where the decreasing arrangement activation is applied on the noisy (empirical) wavelet $\beta$--coefficients results in denoising and high-quality statistical estimation of the manifold. The level of compression resulting from this process can be useful in determining the chances for the manifold to have certain regularity.
\end{enumerate}


\begin{figure}
    \captionsetup{width=2.5\textwidth}
    \centering
    \begin{subfigure}[b]{0.5\textwidth}
        \captionsetup{width=2.5\textwidth}
        \centering
        \includegraphics[width=\textwidth]{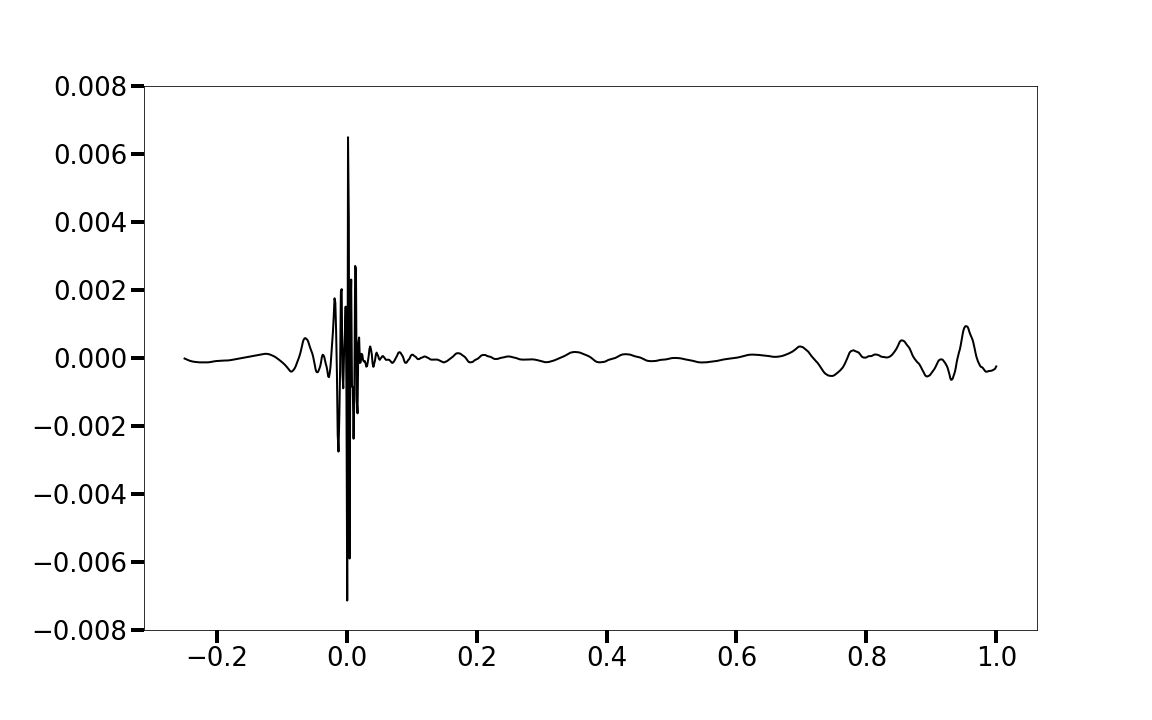}
        \caption{Distribution of benchmark MISE for "$\lambda$-tear", at compression rate $\approx 99.610 \%$. This is the benchmark MISE for all cases (a) -- (d).}
        \label{fig:same_mise_lambdatear}
    \end{subfigure}
    \hfill
    \begin{subfigure}[b]{0.5\textwidth}
        \captionsetup{width=2.5\textwidth}
        \centering
        \includegraphics[width=\textwidth]{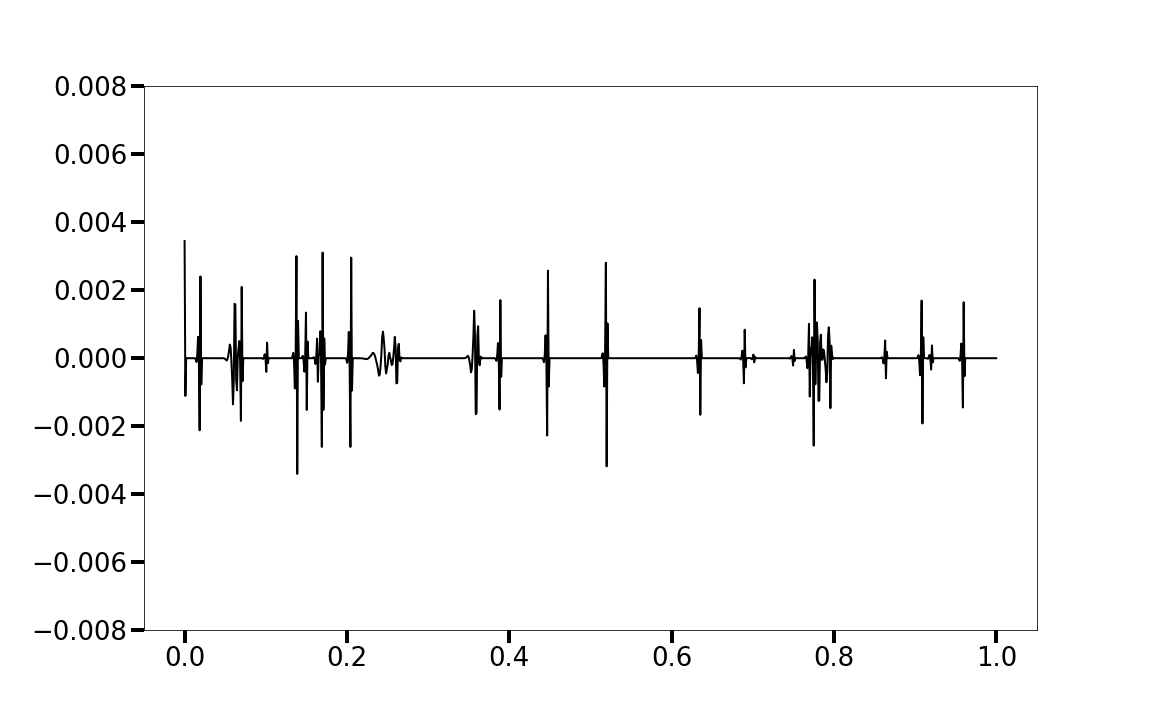}
        \caption{Distribution of benchmark MISE for "Weierstrass function", attained at compression rate $\approx 3.418 \%$.}
        \label{fig:same_mise_weierstrass}
    \end{subfigure}
    \hfill
    \centering
    \begin{subfigure}[b]{0.5\textwidth}
        \captionsetup{width=2.5\textwidth}
        \centering
        \includegraphics[width=\textwidth]{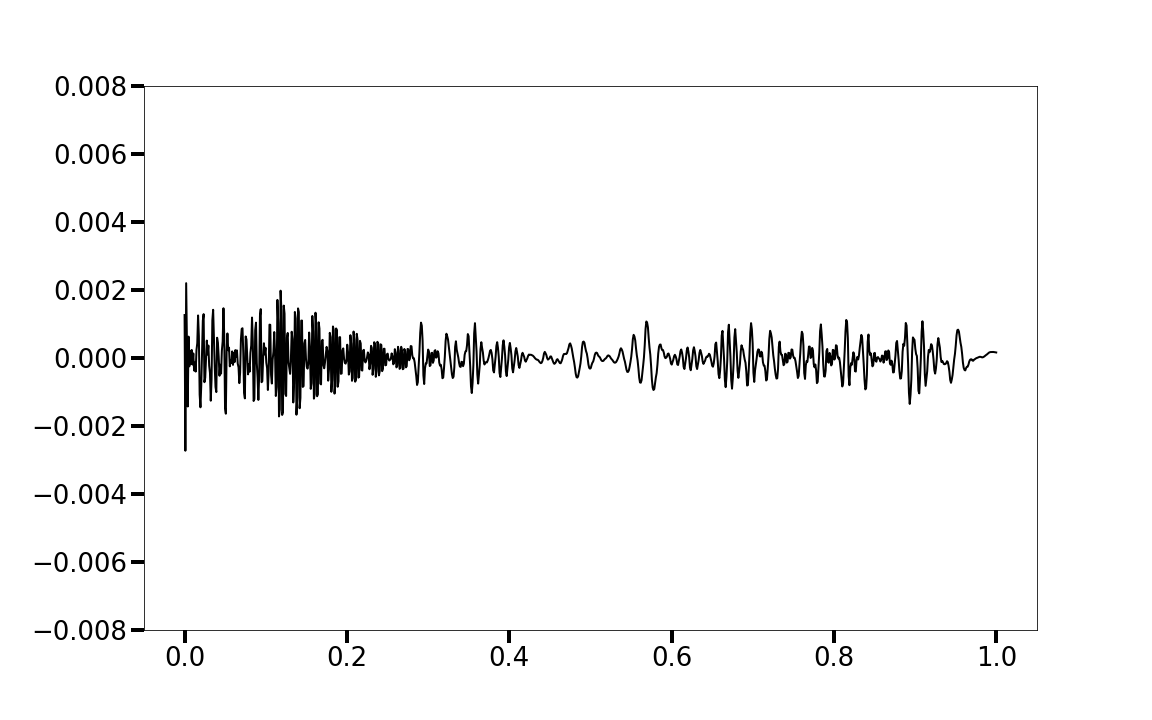}
        \caption{Distribution of benchmark MISE for "double chirp", attained at compression rate $\approx 83.984 \%$.}
        \label{fig:same_mise_dchirp}
    \end{subfigure}
    \hfill
    \begin{subfigure}[b]{0.5\textwidth}
        \captionsetup{width=2.5\textwidth}
        \centering
        \includegraphics[width=\textwidth]{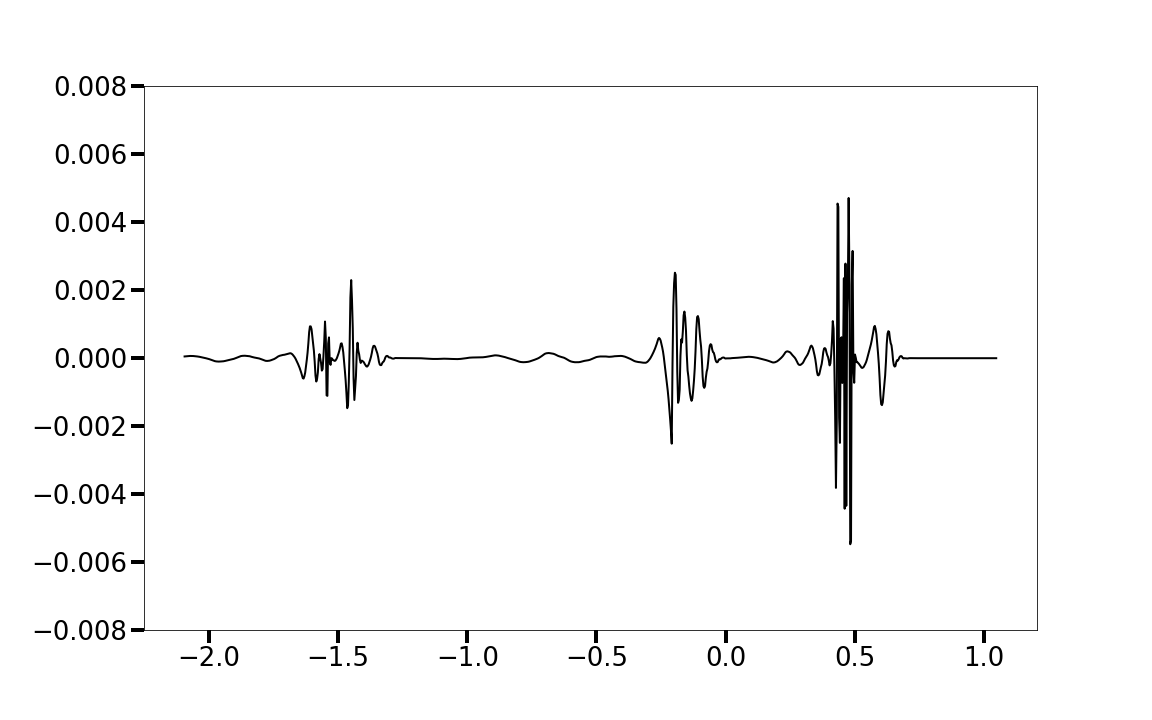}
        \caption{Distribution of benchmark MISE for "sinusoidal density", attained at compression rate $\approx 83.984 \%$.}
        \label{fig:same_mise_sdens}
    \end{subfigure}
    \caption{Benchmark error distribution for Examples 1-4. Benchmark error was MISE for Example 1 at compression $\approx 99.610 \%$.}
    \label{fig:mise:subsequent}
\end{figure}

The comparison of graphical data for the four examples on Figure \ref{fig:mise:subsequent} leads to the following observations.

\begin{enumerate}[label=\arabic*., resume]
    \item For piecewise smooth manifolds having isolated singularities of \emph{the first kind} only (left and right onesided limits exist at the singularity) the distribution of the benchmark MISE error is narrowly concentrated in small neighbourhoods of the respective singularities. This also shows that the vector of $\beta$--coefficients of such manifolds tends to be sparse, i.e., it contains large in absolute value coefficients on all levels $j$ only in small neighbourhoods of the singularities. Thus, manifolds of this type are highly compressible with threshold activation methods and are the fastest to learn with WBNNs. Typical examples are 1 and 4, see Figure \ref{fig:mise:subsequent} (a) and (d), resp.
    \item Piecewise-smooth manifolds with isolated singularities of \emph{the second kind} (at least one of the onesided limits does not exist at the singularity). Typical case of this type of singularity is the presence of a chirp on the side of the missing onesided limit (for example, functions $g(x)=x^a \sin \frac{1}{x^b}$, $x > 0, a > 0, b > 0)$. Chirps can be very spatially inhomogeneous and even exhibit some fractal properties (for example, the function $g$ in the above formula, with $a = 0$, has unbounded variation in a neighbourhood of $x = 0$ and the part of its graph in this neighbourhood is infinitely long. Thus, functions with chirps take a somewhat intermediate place between the functions in item 5 and the function in the next item 7. Typical example is 3 -- see Figure \ref{fig:mise:subsequent} (c).
    \item Fractal-type curves, especially ones with locally constant H{\"o}lder index, gather their Besov regularity from a dense vector of $\beta$--coefficients where all, or, at least, the vast majority of $\beta$--coefficients provide significant contribution which can only be ignored at the price of slowing the rate and decreasing the quality of the learning process. In \citep{llhm2022} and \citep{llhm2022_1} we shall show that manifolds of fractal type can be learned well only using non-threshold shrinkage activation. This type of activation produces $0\%$ compression. Threshold activation methods, including the decreasing rearrangement activation, oversmooth the manifold, thereby altering its fractal type. This explains the low compression rate when achieving the benchmark MISE in the typical Example 2 -- see Figure \ref{fig:mise:subsequent} (b).
    \item (Remark.) One of the main goals of research in \citep{llhm2022} and \citep{llhm2022_1} will be to design \emph{hybrid activation strategies for adaptive learning} by composing a sequence of activation strategies of diverse -- threshold and non--threshold -- nature which will be achieved by the use of \emph{deep WBNNs}, each single-layer WBNN in which will contribute with its own activation strategy.
\end{enumerate}


\begin{figure}[h]
\centering
\includegraphics[width=\textwidth]{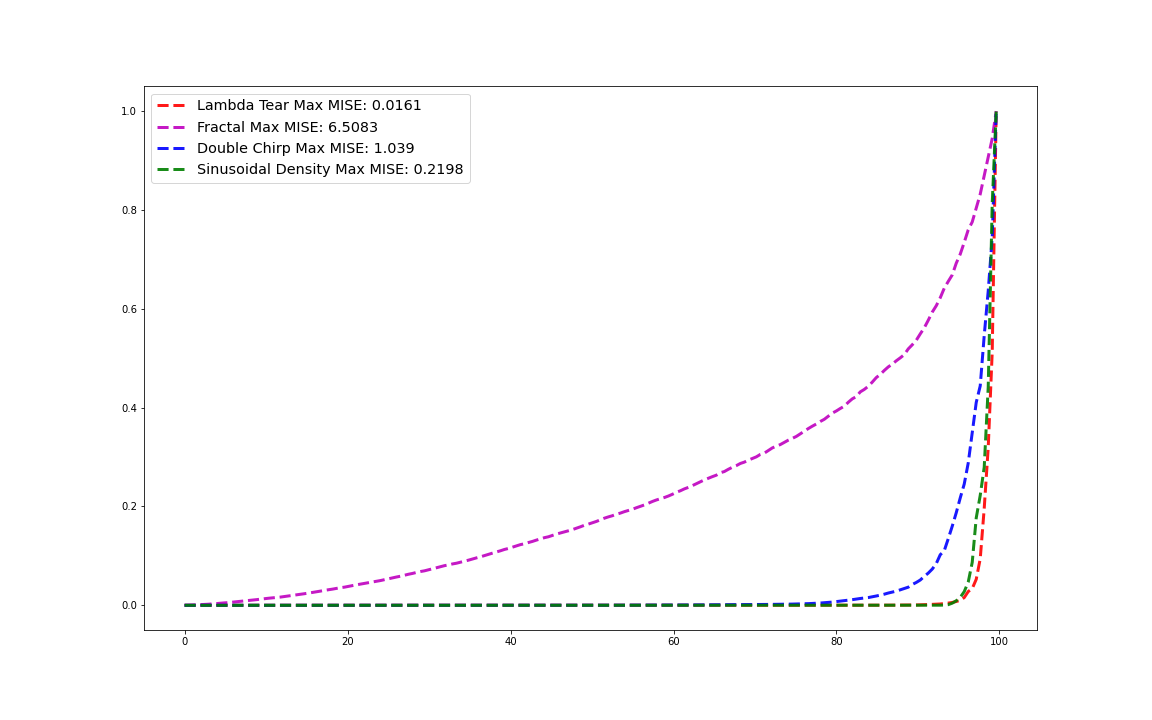}
\caption{Compression percentage (x-axis) vs standardized relative MISE for Examples 1-4}
\label{fig:mise_compare}
\end{figure}


While Figure \ref{fig:mise:subsequent} provided detailed information about the local distribution of a benchmark MISE, Figure \ref{fig:mise_compare} provides an insightful comparison of the ratio between compression percentage and relative MISE for each of the four considered examples.

\begin{enumerate}[label=\arabic*., resume]
    \item The aspect, in which Figure \ref{fig:mise_compare} is most insightful, is the comparative determination of the fractality type of the curves in each of Examples 1--4: "the Weierstrass curve" of Example 2 exhibits markedly fractal behaviour, followed by the "double chirp" of Example 3 exhibiting a somehow 'semi--fractal' behaviour, and with the curves in Example 1 and 4 being of markedly piecewise smooth type.
    \item (Remark.) Tracing the behaviour of the "double chirp" of Example 3 in Figures 5, 7(c) and 8, some notable differences are observed with "the $\lambda$--tear" in Example 1, despite of the fact that they are both of the piecewise smooth type and, especially, despite of the fact that they have \emph{exactly the same Besov regularity}.
    The explanation of this phenomenon is that although the Besov norms of $f_1$ and $f_3$ with the same exact parameters are both finite, the norm of $f_3$ is several orders of decimal magnitude larger than the norm of $f_1$, mainly due to the presence of the factor $64 = 2^6$ in the sine component of the formula (\ref{eq:48}) for $f_3$.
\end{enumerate}


\begin{figure}
    \centering
    \begin{subfigure}[b]{0.495\textwidth}
        \captionsetup{width=0.8\textwidth}
        \centering
        \includegraphics[width=\textwidth]{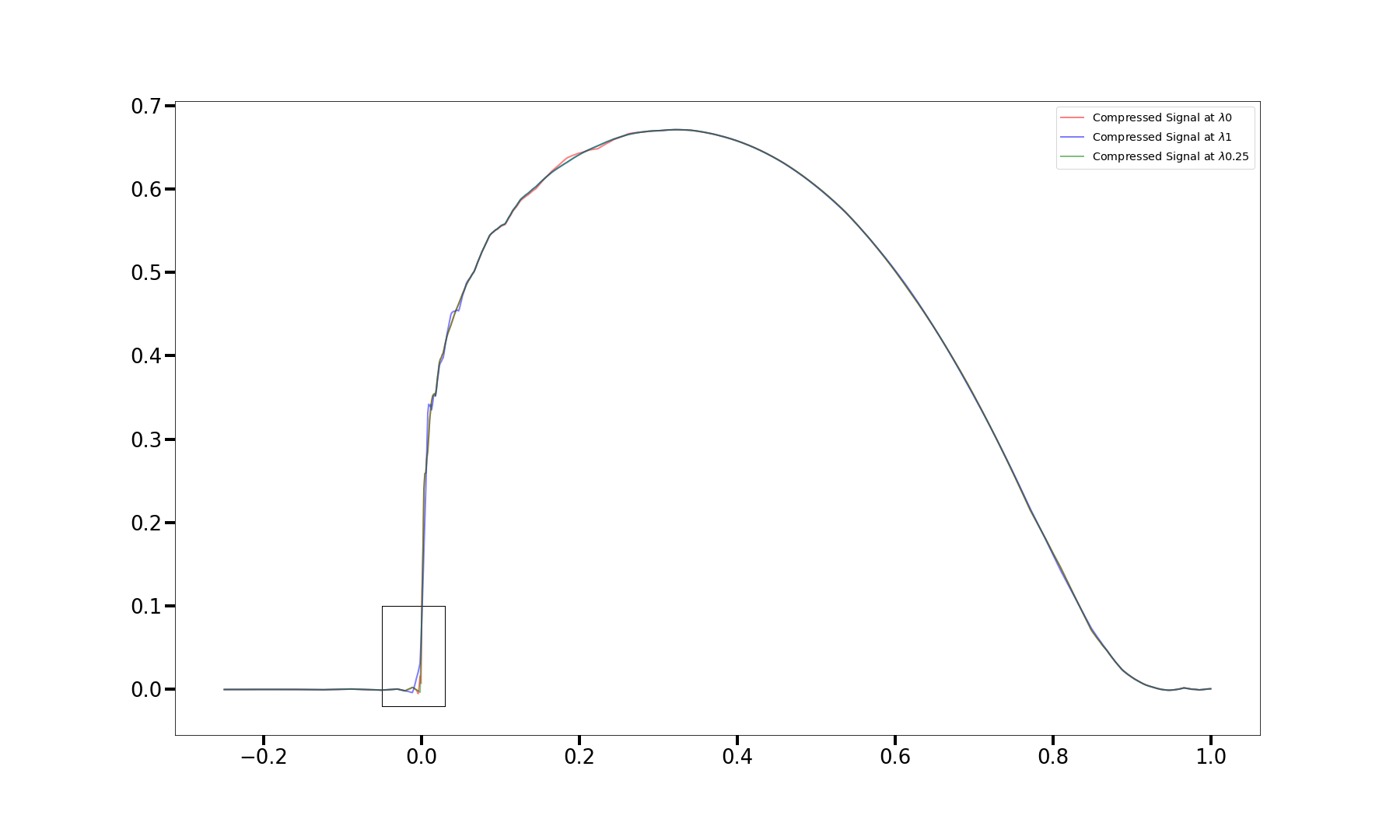}
        \caption{Zoomed region for Example 1 (see Fig. \ref{fig:swarm_zoom_lambdatear}), compression rate $98 \%$}
        \label{fig:swarm_lambdatear}
    \end{subfigure}
    \begin{subfigure}[b]{0.495\textwidth}
        \captionsetup{width=0.8\textwidth}
        \centering
        \includegraphics[width=\textwidth]{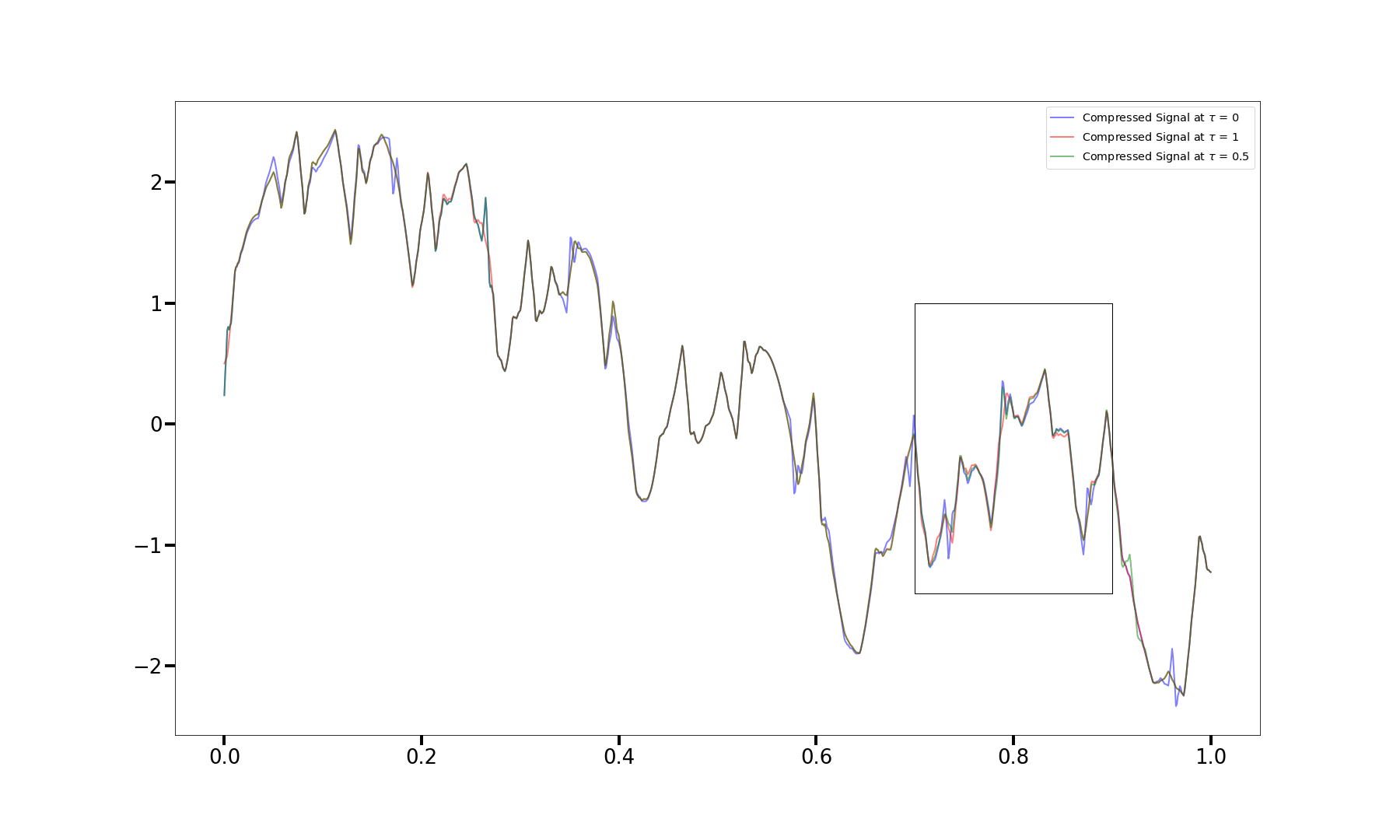}
        \caption{Zoomed region for Example 2 (see Fig. \ref{fig:swarm_zoom_weierstrass}), compression rate $91 \%$}
        \label{fig:swarm_weierstrass}
    \end{subfigure}
    \caption{Local zooming for Examples 1 and 2}
    \label{fig:swarm:1}
\end{figure}

\begin{figure}
    \centering
    \begin{subfigure}[b]{0.495\textwidth}
        \captionsetup{width=0.8\textwidth}
        \centering
        \includegraphics[width=\textwidth]{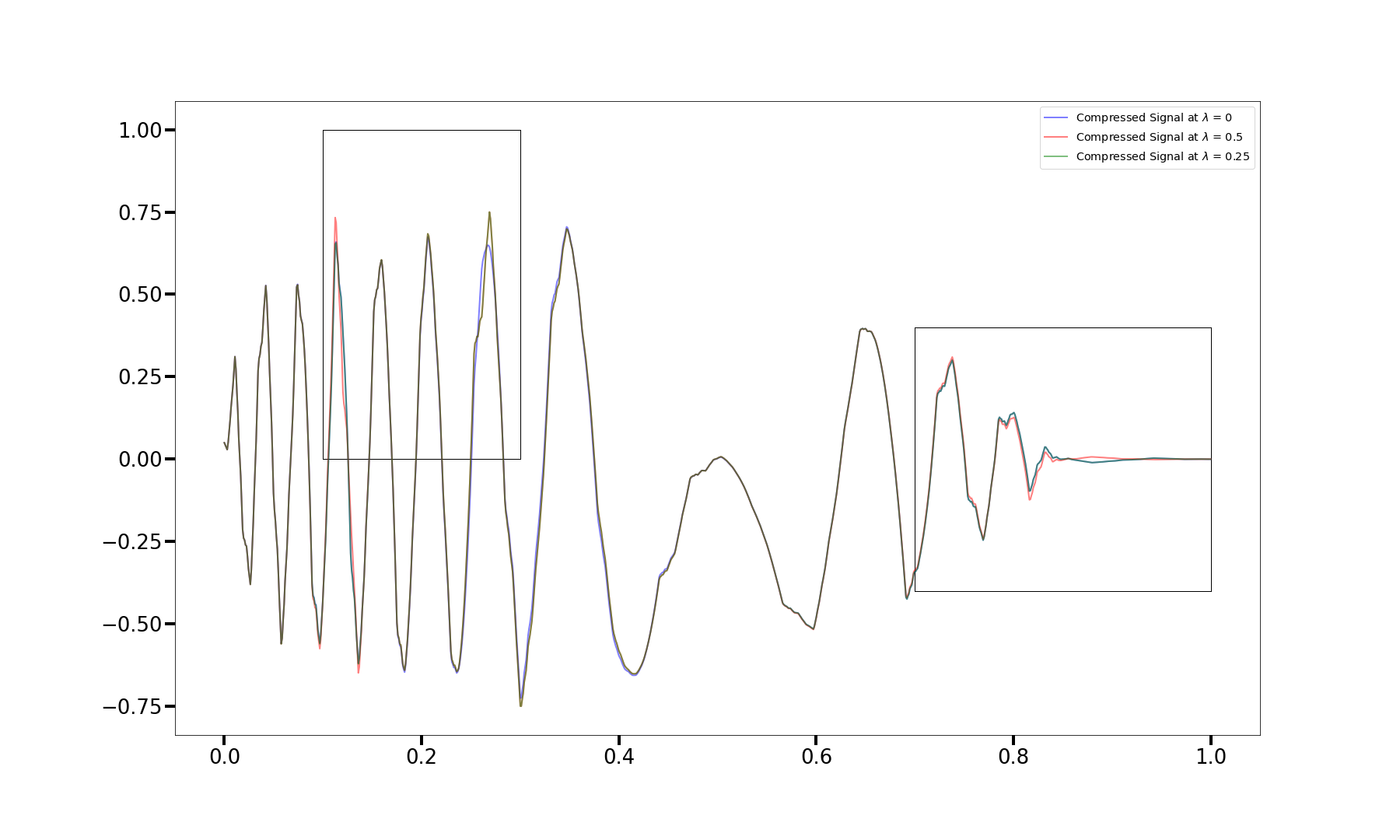}
        \caption{Zoomed regions for Example 3 (from left to right, see Fig. 12 \ref{fig:swarm_zoom_dchirp}) and \ref{fig:swarm_zoom_dchirp2}), resp.), compression rate $96 \%$}
        \label{fig:swarm_dchirp}
    \end{subfigure}
    \begin{subfigure}[b]{0.495\textwidth}
        \captionsetup{width=0.8\textwidth}
        \centering
        \includegraphics[width=\textwidth]{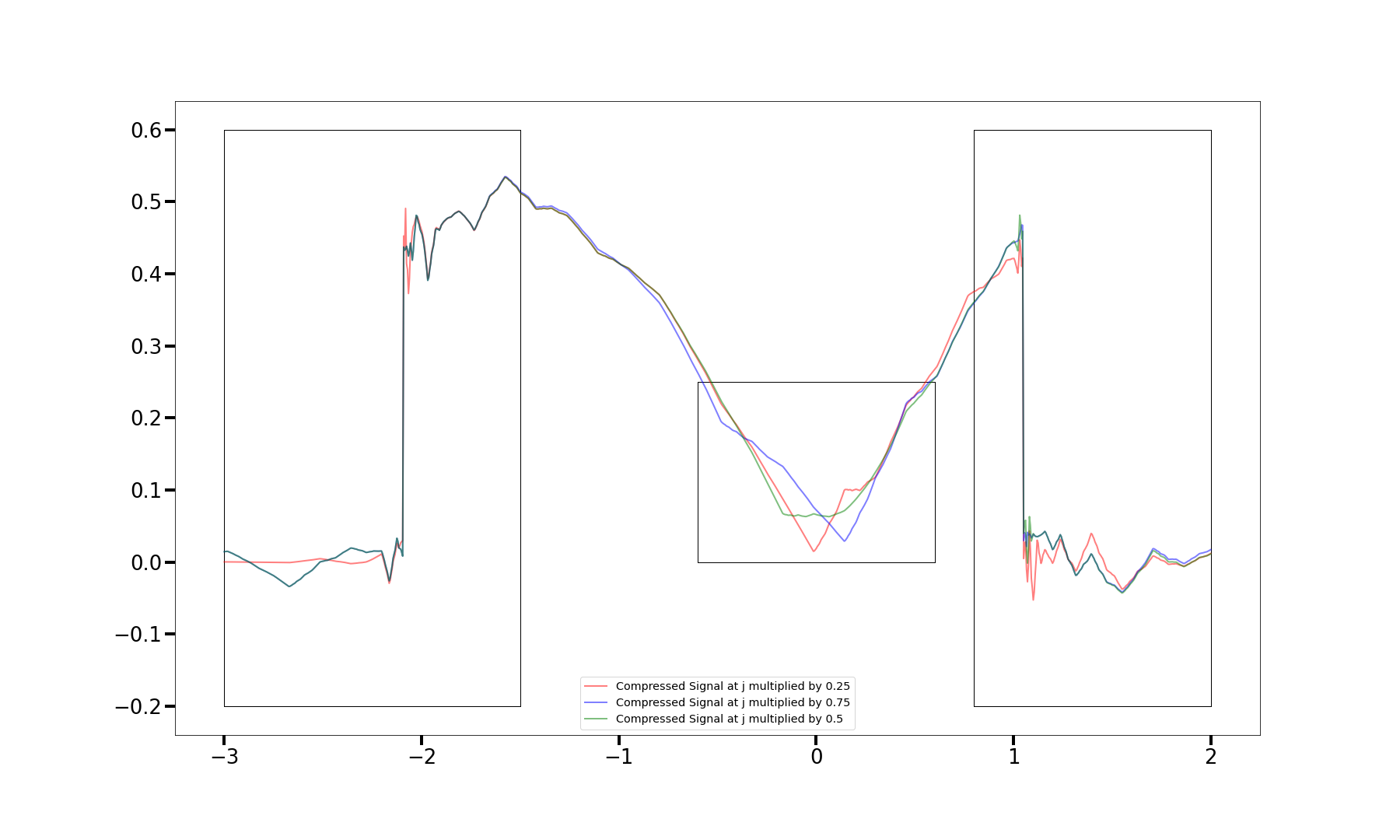}
        \caption{Zoomed regions for Example 4 (from left to right, see (\ref{fig:swarm_zoom_sdens}), \ref{fig:swarm_zoom_sdens2}) and \ref{fig:swarm_zoom_sdens3}), resp.), compression rate $98 \%$}
        \label{fig:swarm_sdens}
    \end{subfigure}
    \caption{Local zooming for Examples 3 and 4}
    \label{fig:swarm:2}
\end{figure}


In Section \ref{s6} the exact Besov regularity of the example was known, and it was used in the construction of the activation operator.
What if only approximate information is available about each of the parameters $(p, q, s)$ of the Besov regularity?
The algorithmically simplest way to overcome this ambiguity is to use a swarm of sufficiently broad single-layer WBNNs.
As discussed in Section \ref{s4}, using a deep WBNN is possible, but requires adjustments, with some loss of efficiency.
This is why here the new research topic about relation between swarm and deep evolutionary AI (\citep{iba2022}) is of great interest.

In Figures 9--13 we produce the graphical results of sequential emulation of the parallel learning process of a 'swarm' of 3 single-layered sufficiently broad (satisfying (\ref{eq:14})) WBNNs, one of which is biased towards underestimating the Besov regularity (blue colour), the second one is using the exact Besov regularity information (green colour) and the last one is biased towards overestimating the Besov regularity.
The graphical results for Examples 1--4 are presented in Figure 9(a) and 9(b), and Figure 10(a) and 10(b), resp.
The rectangular regions on these figures, where the differences are most notable, are marked with window frames.


\begin{figure}
    \centering
    \begin{subfigure}[b]{0.495\textwidth}
        \centering
        \includegraphics[width=\textwidth]{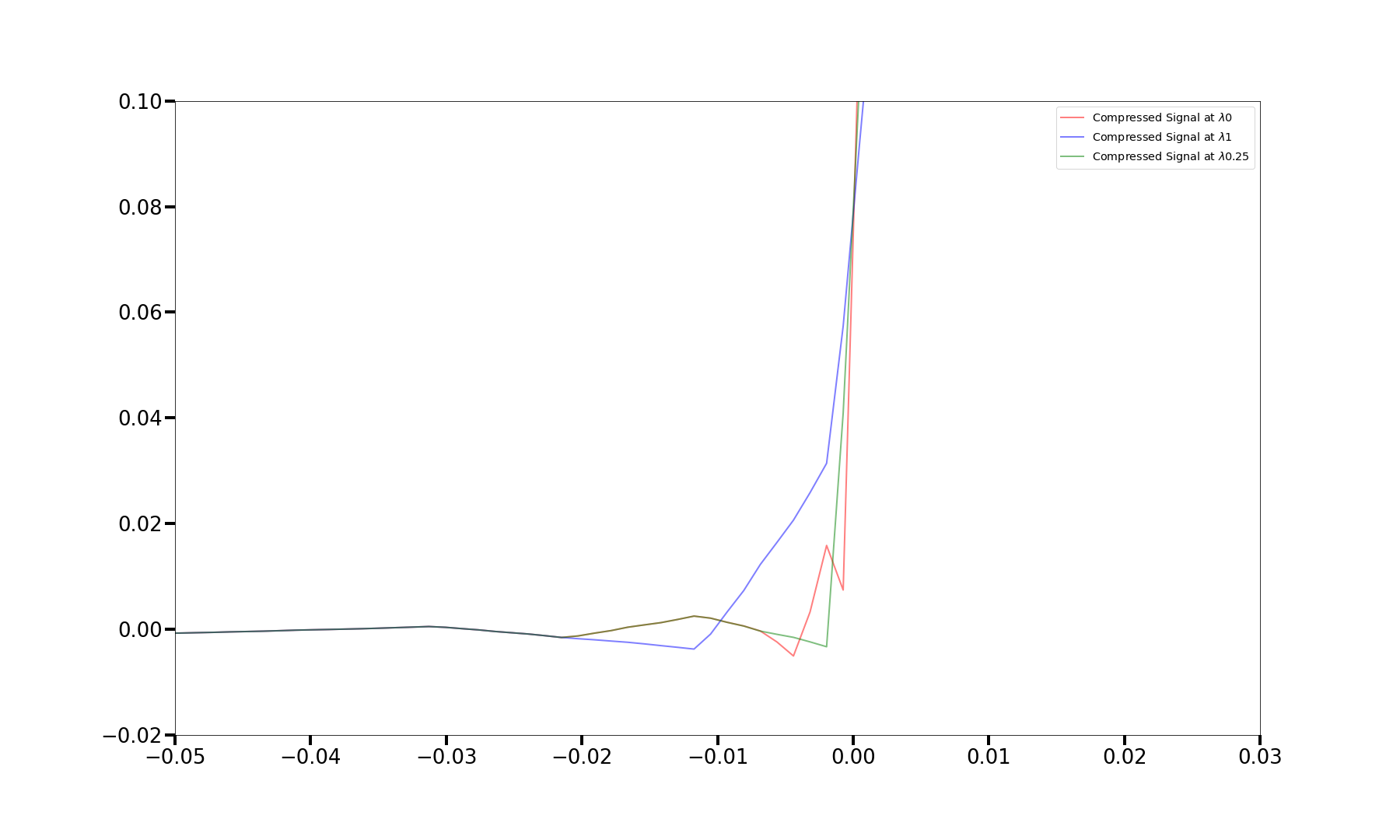}
        \caption{}
        \label{fig:swarm_zoom_lambdatear}
    \end{subfigure}
    \begin{subfigure}[b]{0.495\textwidth}
        \centering
        \includegraphics[width=\textwidth]{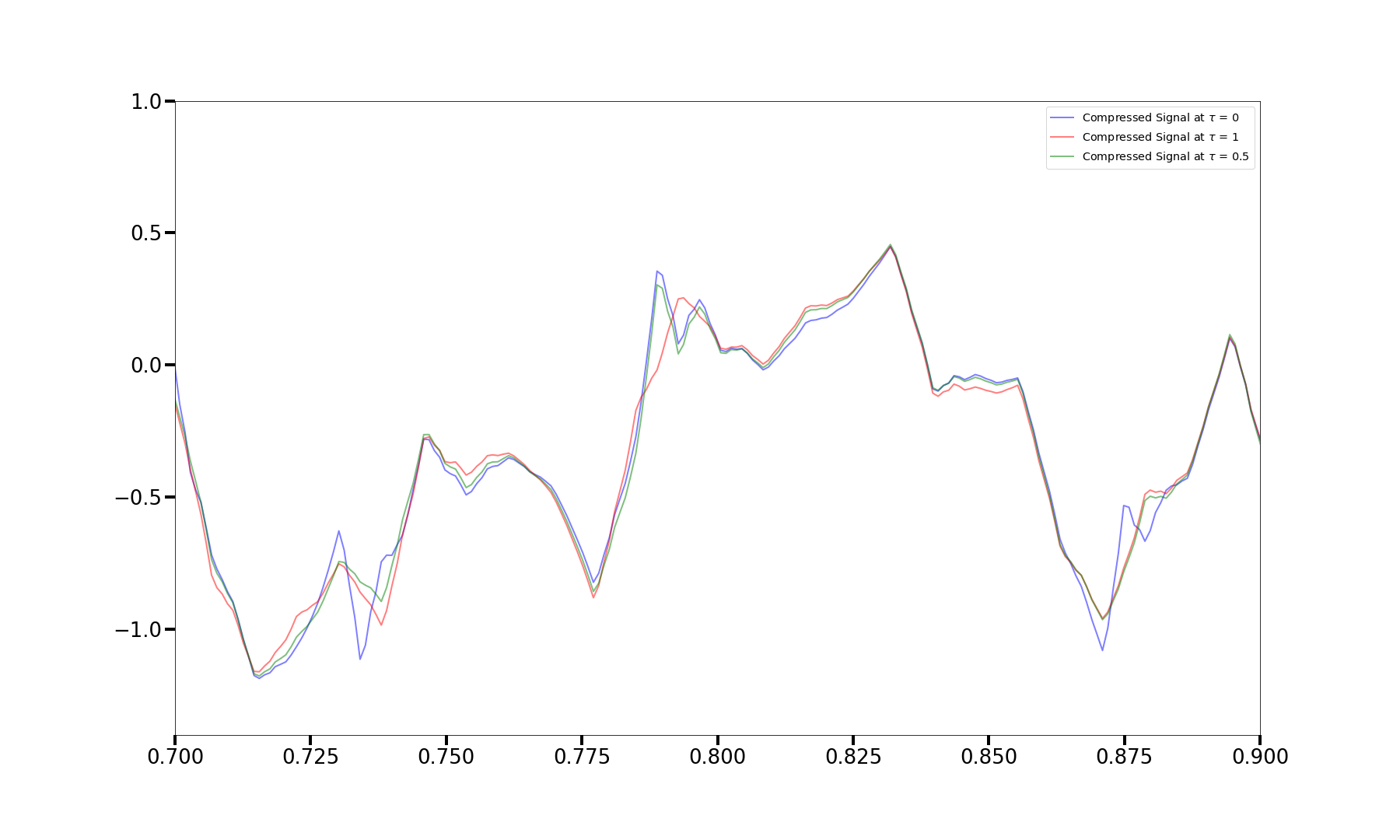}
        \caption{}
        \label{fig:swarm_zoom_weierstrass}
    \end{subfigure}
    \caption{Region zoom for (a) Example 1, (b) Example 2 (see Fig. \ref{fig:swarm_lambdatear} and \ref{fig:swarm_weierstrass}), resp.}
    \label{fig:swarm_zoom:lt_frct}
\end{figure}

\begin{figure}
    \centering
    \begin{subfigure}[b]{0.495\textwidth}
        \centering
        \includegraphics[width=\textwidth]{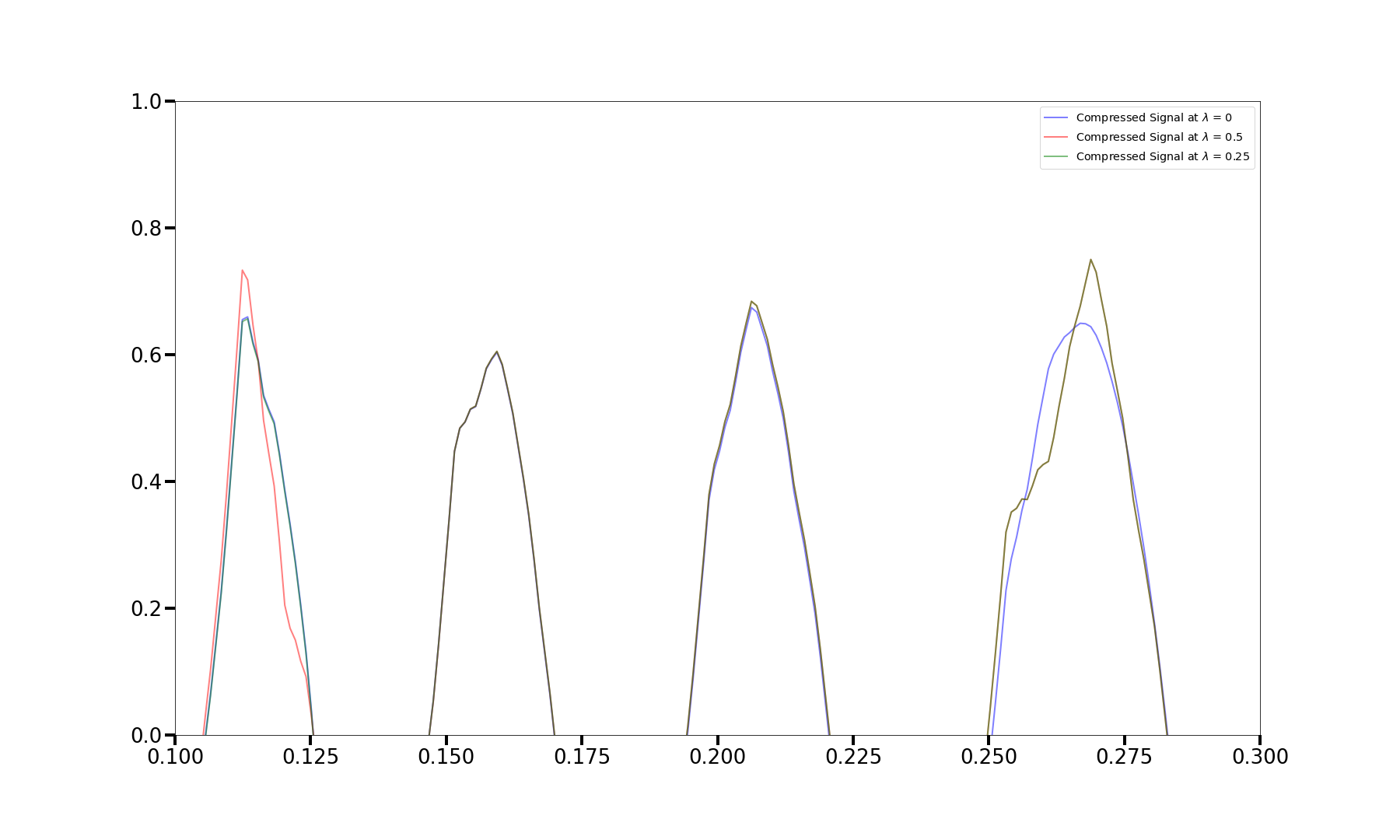}
        \caption{}
        \label{fig:swarm_zoom_dchirp}
    \end{subfigure}
    \begin{subfigure}[b]{0.495\textwidth}
        \centering
        \includegraphics[width=\textwidth]{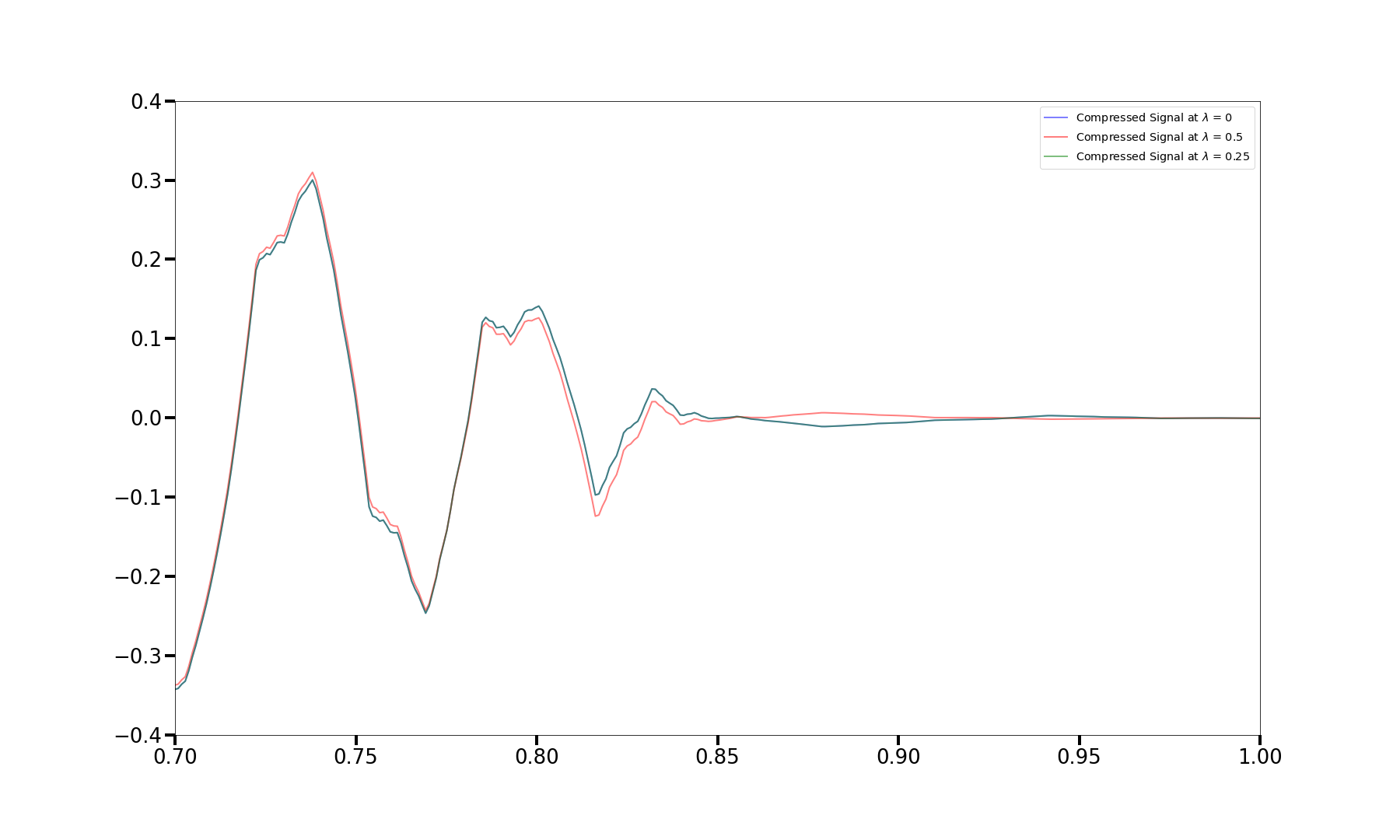}
        \caption{}
        \label{fig:swarm_zoom_dchirp2}
    \end{subfigure}
    \caption{Region zoom for Example 3: see Fig \ref{fig:swarm_dchirp}}
    \label{fig:swarm_zoom:dchirp}
\end{figure}

\begin{figure}
    \centering
    \begin{subfigure}[b]{0.595\textwidth}
        \centering
        \includegraphics[width=\textwidth]{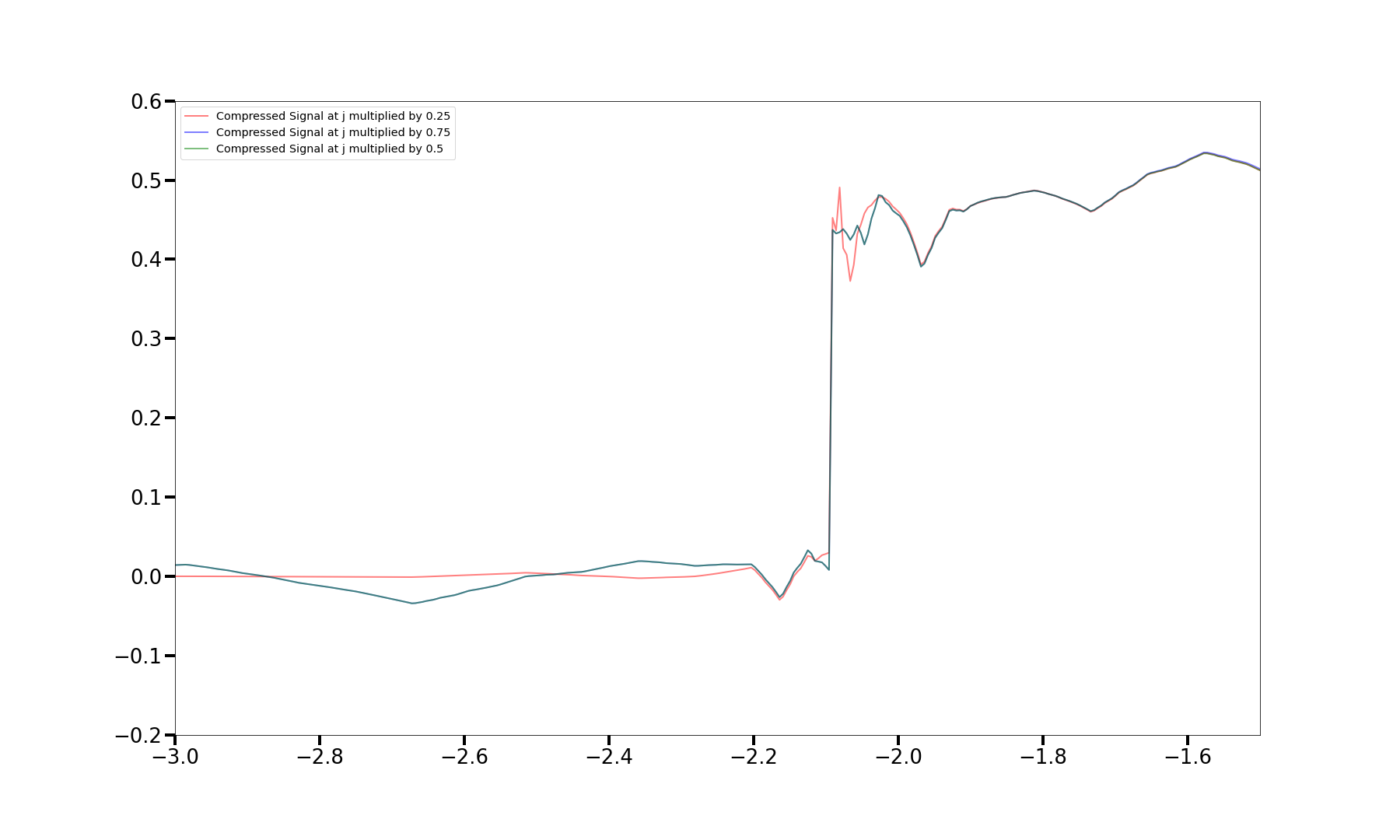}
        \caption{}
        \label{fig:swarm_zoom_sdens}
    \end{subfigure}
    \begin{subfigure}[b]{0.595\textwidth}
        \centering
        \includegraphics[width=\textwidth]{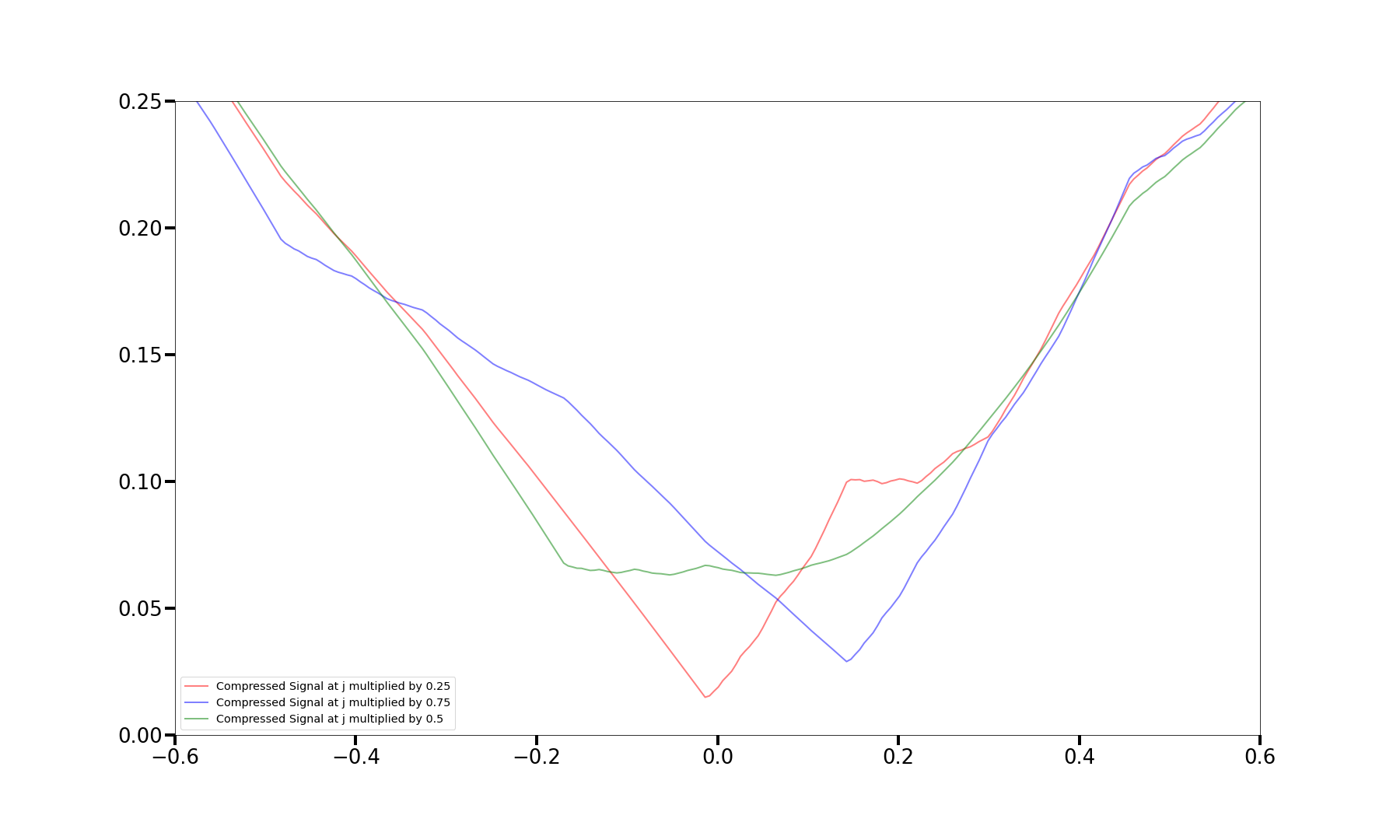}
        \caption{}
        \label{fig:swarm_zoom_sdens2}
    \end{subfigure}
    \begin{subfigure}[b]{0.595\textwidth}
        \centering
        \includegraphics[width=\textwidth]{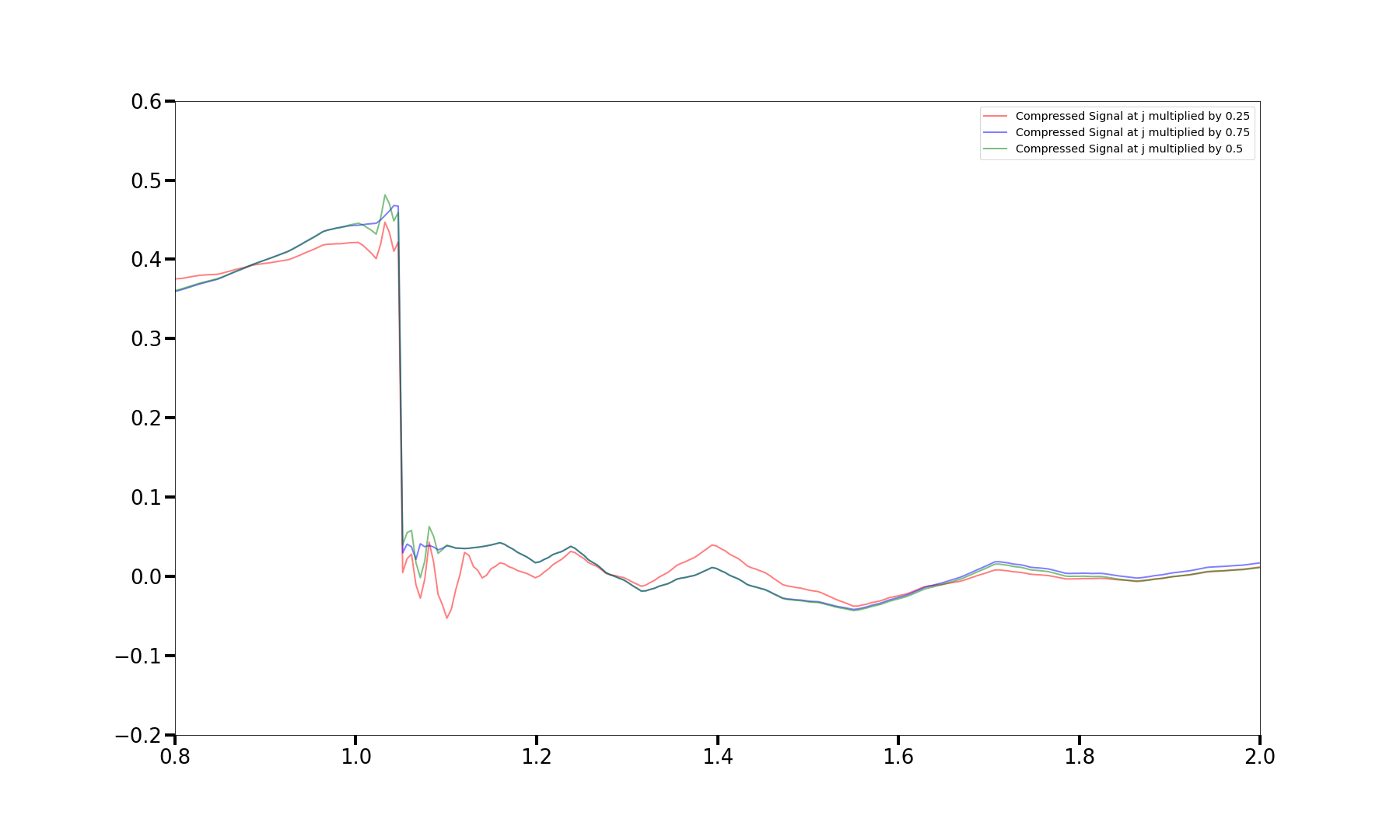}
        \caption{}
        \label{fig:swarm_zoom_sdens3}
    \end{subfigure}
    \caption{Region zooms for Example 4: see Fig \ref{fig:swarm_sdens}}
    \label{fig:swarm_zoom:sdens}
\end{figure}


In Figures 11--13 are given zoomings of all windows in Figures 9 and 10, as follows.
\begin{itemize}
    \item The window in Figure 10(a): Figure 11(a)
    \item The window in Figure 10(b): Figure 11(b)
    \item The two windows in Figure 10(a): Figure 12(a) and (b)
    \item The three windows in Figure 10(b): Figure 13(a), (b) and (c)
\end{itemize}

Some observations from the graphical comparison, as follows:

\begin{enumerate}[label=\arabic*., resume]
    \item The learning process using WBNNs activated by decreasing rearrangement method proves to be \emph{very robust with respect to errors of estimation of the Besov regularity information} when \emph{manifolds without singularities} are being learned.
    \item In the presence of singularities, the robustness in item 11 \emph{decreases}: the more the singularities, the less the robustness, with the maximal deterioration of robustness being in the case of learning fractal-type manifolds. However, due to the relatively uniform distribution over all, or most of the $\beta$--coefficients of a fractal manifold (see item 7 above), for the low compression rate in Figure 7(b), the differences between the blue, green and red lines in Figure 9(b) would hardly be noticeable (see also item 13).
    \item Due to the robustness properties in items 11 and 12 above, it makes sense to use \emph{large-size swarms of single-layered WBNNs} only in the case of learning fractal-type manifolds, for a reason that will be explained in item 15.
    \item Learning of piecewise smooth manifolds with WBNNs activated via the decreasing rearrangement method is robust with respect to small errors of underestimating or overestimating the manifold's Besov regularity, as long as compression rate is none or relatively small. This robustness \emph{rapidly deteriorates} with the increase of the compression rate, but at the same time the quality of learning \emph{deteriorates slowly} with the increase of the compression rate (which is an equivalent way of saying that piecewise smooth manifolds are being learnt fast). This is why in the case of learning a piecewise smooth manifold, small to medium sized swarms of sufficiently broad single-layered WBNNs are expected to be sufficient when the Besov regularity of the manifold is only approximately known, see Figure 11(a) and Figures 12 and 13.
    \item In the case of fractal manifolds, the crucial difference is that, unlike the case of piecewise smooth manifolds, the quality of learning of the fractal \emph{deteriorates rapidly together with the rapid deterioration of robustness} with respect to errors in the Besov regularity estimation, when the compression rates increases. This is why, contrary to the comparable compression rates in Figure 7(a) vs. Figure 9(a), Figure 7(c) vs. Figure 10(a) and Figure 7(d) vs. Figure 10(b), resp., there is a sharp difference in the compression rates in Figure 7(b) vs. Figure 9(b): less than $4 \%$ vs. more than $90 \%$, resp. Because of this important difference, it can be expected that in the current context large-size swarms of sufficiently broad single-layered WBNNs would be needed to attain high quality of learning fractal manifolds. Notice that in the context of item 4, when the fractality is due to noise, the denoised manifold may still be of fractal type, or it may be of piecewise smooth type. In the context of the present item, in this case determination of the size of the swarm requires special care (this topic will be addressed in \citep{llhm2022}).
    \item Finally, let us turn our attention on the three singularities in Figure 13.
    Comparing the three singularities in Figure 13 with the singularity in Figure 11(a), it is observed that the latter singularity is somehow intermediate between the one in Figure 13(b) and the two in Figure 13(a) and (c).
    This is so, because, on the one hand, the singularities in Figure 13(b) and 11(a) are of the same type -- discontinuity of the first derivative $f_1^\prime$, resp. $f_4^\prime$ and, on the other hand, $f_1^\prime (0+) = + \infty$, which forces the graph of $f_1$ left of $0$ to resemble that of the graph of $f_4$ at the left discontinuity point in Figure 13(a) and, modulo vertical axial symmetry, also the graph of $f_4$ in Figure 13(c).
    What is remarkable about the discontinuity singularities of the first kind in Figure 13(a) and (c) (and, to some less expressed extent, also about the singularity in Figure 11(a)) is the presence of \emph{Gibbs phenomenon}. As it can be seen from Figure 3(a) and Figure 6(a), \emph{even at high compression rates} there is \emph{no Gibbs phenomenon at all}, which is due to the selection of \emph{compactly supported} wavelet basis.
    If in this context a trigonometric basis is used there will be very significant Gibbs phenomenon even at compression rate $0\%$.
    Thus, we conclude that in the case of use of compactly supported wavelet basis, notable Gibbs phenomenon may eventually appear at jump points only at superhigh levels of compression, and it is due to uncompressed $\beta$-coefficients with low $j$ ($j = j_0$ or $j$ near to $j_0$).
    It is also at these superhigh levels of compression, and for the same reason, that errors in the estimation of Besov regularity can result in the decreasing rearrangement activation producing notable differences in regions with Gibbs phenomenon.
\end{enumerate}

\section{Proofs}\label{s7}

\begin{proof}[Proof of Theorem \ref{th:1}]
    Let first $s : 0 < s < 1$. Using the the definitions and notations of \citep[Section 2.2]{dechevsky98} for $\rho, k^*, p $ and $q$, the upper bound

    \begin{equation}
        \exists \: c_1 < + \infty : R(f,\hat{f}_N) \leq c_1 N^{-\frac{s}{1+2s}}
    \end{equation}

    where $c_1=c_1(s, \mathrm{diam}(\mathrm{supp}(f)))$, follows after taking power $\frac{1}{\rho}$ from the two sides of \citep[(2.2.3]{dechevsky98} where the choice $q=+ \infty$ has been made. Under the assumptions of \citep[Corollary 2.2.4]{dechevsky98} on $f$, taking in consideration the compactness of $\mathrm{supp} f$ implies the equivalence of the assumptions in \citep[Corollary 2.2.4]{dechevsky98} and the current assumption $f \in B_{p\infty}^s (\mathbb{R})$

    The lower bound 

    \begin{equation}
        \exists \: c_0 > 0 : R (f, \hat{f}_N) \geq c_0 N^{-\frac{s}{1+2s}}
    \end{equation}

    where $c_0 = c_0(s, \mathrm{diam}(\mathrm{supp} f))$, follows from \citep[Theorem 2.3.2]{dechevsky98} for $q=\infty$.
    The selection of $J$ in (\ref{eq:14}) assumes that in all cases considered in \citep[Section 2.2]{dechevsky98}, the optimal level $k^*$ defined there, always satisfies

    \begin{equation}
        j_0 \leq k^* \leq J
    \end{equation}

    which ensures the validity of all upper and lower bounds in \citep[Corollaries 2.2.2-11 and Theorem 2.3.2]{dechevsky98}. Under the assumptions $q=\infty$ and compactness of $\mathrm{supp} \: f$, the norm ${||.||}_{p,s}$ defined in the formulation of \citep[Theorem 2.3.2]{dechevsky98} can be replaced by the simpler ${||.||}_{B_{p\infty}^s (\mathbb{R})}$.
    This proves (i).
    The optimality of the rate $N^{-\frac{s}{1+2s}}$ in the context of the assumptions of Theorem \ref{eq:1} follows by the standard argument in risk estimation: with the increase of $N$, the bias term decreases and tends to $0$ when $N \rightarrow \infty$, while the variance term increases and tends to $+ \infty$ when $N \rightarrow \infty$. 
    So, the optimal rate in N is achieved when the contributions of the bias and variance terms are equal.
    Under the assumptions of Theorem \ref{th:1}, it follows from the proof of \citep[Theorem 2.2.1, under the assumptions of Corollary 2.2.4]{dechevsky98} that the rate for which the bias and variance terms are balanced is $N^{-\frac{s}{1+2s}}$. (ii) is proved.

    Now let $s: 1 \leq s < 2$. In this case the proof is based on the same line of arguments, but with \citep[Corollary 2.2.4]{dechevsky98} being replaced by by \citep[Corollary 2.2.8]{dechevsky98} and noting that the expression $\frac{s - \frac{1}{r} + \frac{1}{q}}{1 + 2s - \frac{2}{r} + \frac{2}{q}}$ in \citep[(2.2.4)]{dechevsky98} becomes $\frac{s}{1 + 2s}$ for $r = q = + \infty$.
\end{proof}

\begin{proof}[Proof of Theorem \ref{th:2}]
    Follows straight-forwardly from the chain of equalities in (\ref{eq:11}).
\end{proof}

\begin{proof}[Proof of Theorem \ref{th:3}]
    To prove the theorem for every quadruple $(p, q, s, \sigma) : 0 < p \leq \infty$, $0 \leq q \leq \infty$, $n {(\frac{1}{p}-1)}_{+} < s+\sigma < r$ we invoke \citep[Lemma 3.10.2]{bergh1976}, as follows.
    Assume that $g=J^{\sigma} f$ and $g \in B_{pq}^{s+\sigma} (\mathbb{R}^n)$, that is, the RHS of (\ref{eq:6}) is finite when $s$ is replaced by $s+\sigma$.
    Then, by \citep[Lemma 3.10.2]{bergh1976}, the series (\ref{eq:5}) for $g$ is convergent in the topology of $B_{pq}^{s+\sigma} (\mathbb{R}^n)$, therefore, $g=J^{\sigma} f$ is learnable by the WBNN generated by the specified wavelet basis.

    In the particular case $1 \leq p \leq \infty$, $1 \leq q \leq \infty$, the above proof can be simplified, by using \citep[Lemma 2.2.1]{bergh1976} instead of \citep[Lemma 3.10.2]{bergh1976}.
\end{proof}

\begin{proof}[Proof of Corollary \ref{cor:1}]
    Follows from Theorem \ref{th:3} by using the lifting property of $J^{\sigma}$.  
\end{proof}

\begin{proof}[Proof of Theorem \ref{th:4}]
    The RHS of (\ref{eq:36}) is just a commuted version of its LHS.
    Since the basis $\{\varphi_{j_0 k}, \psi_{j k} \}$ is a Riesz unconditional basis \citep{daubechies10lect1992} all that has to be shown is that the series in the LHS of (\ref{eq:36}) is absolutely convergent, because then \citep[Lemma 3.10.2]{bergh1976} (or, alternatively in the particular case of Theorem \ref{th:4} when $p \geq 1$ and $q \geq 1$, \citep[Lemma 2.2.1]{bergh1976}) implies the statement of the theorem.
    The absolute convergence of the LHS in (\ref{eq:36}) in $B_{pq}^s$ follows from $f \in B_{pq}^s$, implying the the finiteness of ${||\cdot||}_{B_{pq}^s}$ in (\ref{eq:6}).
    From here, the absolute convergence of the LHS of (\ref{eq:36}) in $B_{\rho \eta}^\sigma$ for every $(\rho, \eta, \sigma)$ specified in the theorem follows from the Sobolev embedding $B_{pq}^s \hookrightarrow B_{\rho \eta}^\sigma$, see (\ref{eq:32}).
\end{proof}

\begin{proof}[Proof of Theorem \ref{th:5}]
    The space $B_{22}^\sigma$ is a Hilbert space and, by Theorem \ref{th:4}, the RHS of (\ref{eq:36}) holds true.
    Because of the Hilbertian geometry of $B_{22}^\sigma$, removing the term involving any one $\psi_{j_{\nu_0} k_{\nu_0}}$ from $\sum\limits_{\nu = 1}^M \beta_{j_\nu k_\nu} \psi_{j_\nu k_\nu}$ in (\ref{eq:36}) has the geometric meaning of orthogonal projection of 
    
    $\sum\limits_{\nu=1}^M \beta_{j_\nu k_\nu} \psi_{j_\nu k_\nu} \in \spn \{\psi_{j_\nu k_\nu}\}_{\nu=1}^M$ 
    onto 
    
    \begin{equation*}
        \Bigg( \sum\limits_{\nu=1}^{\nu_0 - 1} + \sum\limits_{\nu = \nu_0 +1}^M \Bigg) \beta_{j_\nu k_\nu} \psi_{j_\nu k_\nu} \in \spn \bigg\{ 
        \big\{ \psi_{j_\nu k_\nu} \big\}_{\nu=1}^{\nu_0 -1} \bigcup 
        \big\{ \psi_{j_\nu k_\nu} \big\}_{\nu=\nu_0 + 1}^{M}
        \bigg\}.
    \end{equation*}
    
    Since $|\beta_{j_\nu k_\nu}|$ are ordered as decreasing rearrangement with factor $2^{j_\nu (s - \frac{1}{p} + \frac{1}{2})}$ and $\sigma : \sigma - \frac{1}{2} = s - \frac{1}{p}$, (\ref{eq:45}) follows, which proves the theorem.
    Note the following remarkable geometric fact which remained implicit, but is crucial for the proof of the theorem: the basis $\{\varphi_{j_0 k_0}, \psi_{j k}\}$ is orthonormal only in $L_2 (\sigma = 0)$ but remains orthogonal in $B_{22}^\sigma \;\; (0 \leq \sigma < 2)$ which has the norm of a weighted $l_2$--sequence space with weight $2^{j \sigma}$.
\end{proof}

\section{Concluding remarks}\label{s8}

The present work is the first part of a sequence of studies dedicated to the new WBNNs.
The next two parts of this series \citep{llhm2022} and \citep{llhm2022_1} are currently in preparation.
The main focus in \citep{llhm2022} will be on a detailed study of the rich variety of threshold and non-threshold activation methods for learning curves in 2, 3 and higher dimensions.
\citep{llhm2022_1} will be dedicated to the diverse problems which arise when learning multivariate multidimensional geometric manifolds (surfaces, volume deformations and manifolds in dimensions higher than 3).
One topic will be to reduce the dimensionality of high-dimensional WBNNs to 1- and 2-dimensional WBNNs with full preservation of their functional efficiency.
One important application of this approach is to enable the use of GPGPU programming algorithms for learning parametric manifolds with arbitrary number of parameters, immersed in arbitrarily high-dimensional space \citep{dbb2011}, \citep{dbg2012}.
Another topic in these two studies is to make progress in understanding the connection between learning and approximation \citep{jsz2008} in the context of the new WBNNs.
It should be noted that the essence of our new approach in the present paper -- (a) separating the roles of wavelet depth and neural depth, (b) incorporating wavelet depth into the WBNN width to achieve consistency of learning, and (c) using the neural depth for accelerating the rate of consistent learning -- can in principle be used also in the much more general context of arbitrary tree--based adaptive partitions \citep{bcd2007}, \citep{bcd2014}.

We conjecture that Theorem \ref{th:5} can be generalized for a broader range than $(2, 2, \sigma)$ with $\sigma \in [0,r)$, namely, for the general assumptions on $(p, q, s)$ and $\rho, \eta, \sigma$ in (\ref{eq:36}) of Theorem \ref{th:4}.
However, to investigate this conjecture, one needs to resort to a very different and much more technically involved and spacious research approach, beginning with the derivation of \emph{direct inequalities} (Jackson-type, etc.) and \emph{inverse inequalities} (Bernstein/Markov-type, etc.) and then, based on the derived inequalities establish a connection between appropriately selected best-approximation functionals and Peetre \emph{K}-functionals.
We refer to \citep[Chapter 3]{pp1987} for an early, but sufficiently complete general exposition of this line of argument.

In conclusion, we note that by focusing on gradient/subgradient iterative optimization method in learning algorithms for NNs in the introduction, we left an important methodological gap which needs to be filled here \citep{SCHMIDHUBER201585}.
Numerical methods for optimal control based on Bellman's principle are very powerful in learning theory, both by swarm and deep evolutionary AI, including optimal control using feedback for supervised problems \citep{SCHMIDHUBER201585}.
Although, theoretically, Bellman's principle allows finding \emph{global extrema} for a very general class of criterial functionals (including non-convex, non-smooth (including non-Lipschitzian) ones), and under complicated sets of constraints (including ones induced by technological standards in real-life engineering problems): computing the/a global extremum is often unfeasible due to the huge computational complexity.
So, in many cases, a tradeoff is needed between affordable computational complexity and sufficiently high quality of a local extremum attained \citep{dech2006}, \citep{dech2006_1}.
So far, similar to gradient methods, optimal tradeoffs in dynamical programming are also achieved via natural, rather than artificial intelligence.
Nevertheless, if a dynamical programming algorithm is being applied in the context of machine learning using WBNNs, we may now have an acceptable automatic alternative.

\nocite{*}
\printbibliography

\end{document}